\DeclareMathAlphabet{\mathcal}{OMS}{cmsy}{m}{n}
\newcommand{\benr}{\begin{eqnarray}}
\newcommand{\eenr}{\end{eqnarray}}
\newcommand{\benrr}{\begin{eqnarray*}}
\newcommand{\eenrr}{\end{eqnarray*}}
\newcommand{\ben}{\begin{equation}}
\newcommand{\een}{\end{equation}}
\newcommand{\benn}{\begin{equation*}}
\newcommand{\eenn}{\end{equation*}}
\newcommand{\noi}{\noindent}
\newcommand{\nn}{\nonumber}
\newcommand{\vs}{\vskip}
\newcommand{\T}{ {\mathrm{\scriptscriptstyle T}} }
\newcommand{\vep}{\varepsilon}
\newcommand{\bw}{{\bf w}}
\newcommand{\bv}{{\bf v}}
\newcommand{\bA}{{\bf A}}
\newcommand{\bth}{\bm{\theta}}
\newcommand{\cD}{\mathcal D}
\newcommand{\cL}{\mathcal L}
\newcommand{\cV}{P}
\newcommand{\cT}{\mathcal T}
\newcommand{\bbE}{{\mathbb E}}
\title{MetaAugment: Sample-Aware Data Augmentation Policy Learning}
\author{
    Fengwei Zhou\thanks{Equal Contribution}, 
    Jiawei Li\footnotemark[1],
    Chuanlong Xie\footnotemark[1], 
    Fei Chen, 
    Lanqing Hong, 
    Rui Sun, 
    Zhenguo Li\thanks{Corresponding Author}
    \\
}
\begin{document}

\maketitle

\begin{abstract}
Automated data augmentation has shown superior performance in image recognition. Existing works search for dataset-level augmentation policies without considering individual sample variations, which are likely to be sub-optimal. On the other hand, learning different policies for different samples naively could greatly increase the computing cost. In this paper, we learn a sample-aware data augmentation policy efficiently by formulating it as a sample reweighting problem. Specifically, an augmentation policy network takes a transformation and the corresponding augmented image as inputs, and outputs a weight to adjust the augmented image loss computed by a task network. At training stage, the task network minimizes the weighted losses of augmented training images, while the policy network minimizes the loss of the task network on a validation set via meta-learning. We theoretically prove the convergence of the training procedure and further derive the exact convergence rate. Superior performance is achieved on widely-used benchmarks including CIFAR-10/100, Omniglot, and ImageNet.
\end{abstract}
\section{Introduction}

\begin{figure*}[t]
    \centering
    \includegraphics[width=0.8\textwidth]{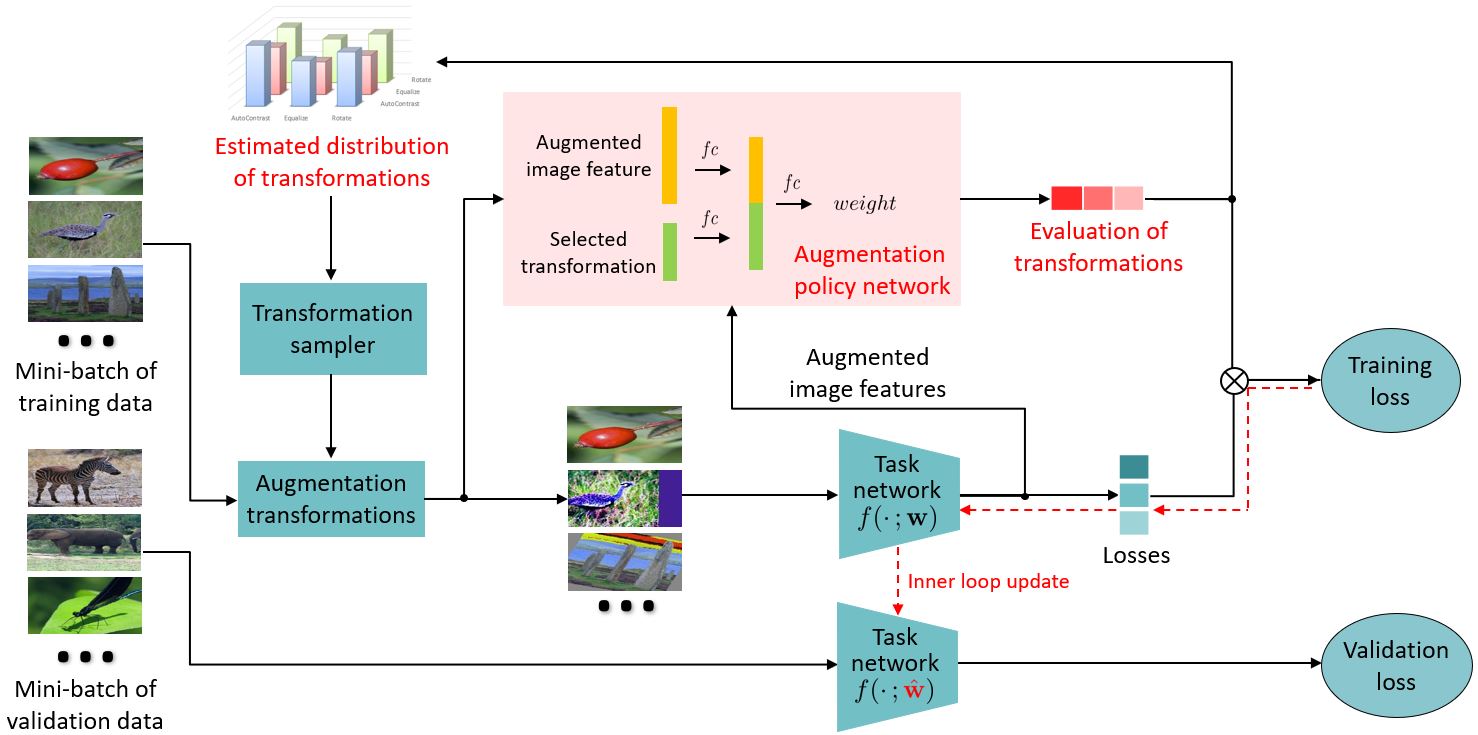}
    \caption{
    An overview of the proposed MetaAugment. The augmentation policy network outputs the weights of the augmented image losses and is learned to evaluate the effectiveness of different transformations for different training images via meta-learning, while the task network is trained to minimize the weighted training loss alternately with the updating of the policy network. For higher training efficiency, the transformation sampler samples transformations according to a distribution refined with the training process of the policy network.
    }
    \label{fig:model}
    \vspace{-10pt}
\end{figure*}

Data augmentation is widely used to increase the diversity of training data in order to improve model generalization~\cite{krizhevsky2012imagenet,srivastava2015training,han2017deep,devries2017improved,zhang2017mixup,yun2019cutmix}. 
Automated data augmentation that searches for data-driven augmentation policies improves the performance of deep models in image recognition compared with the manually designed ones. 
A data augmentation policy is a distribution of transformations, according to which training samples are augmented. 
Reinforcement learning~\cite{cubuk2019autoaugment,zhang2020adversarial}, population-based training~\cite{ho2019population}, and Bayesian optimization~\cite{lim2019fast} have been employed to learn augmentation policies from target datasets. 
Despite the difference of search algorithms, these approaches search for policies at the dataset level, i.e., all samples in the dataset are augmented with the same policy. 
For an image recognition task, left translation may be suitable for the image where the target object is on the right, but may not be suitable for the image where the target object is on the left (see Figure~\ref{fig:weights}). 
According to this observation, dataset-level polices may give rise to various noises such as noisy labels, misalignment, or image distortion, since different samples vary greatly in object scale, position, color, illumination, etc.

To increase data diversity while avoiding noises, it is appealing to learn a sample-aware data augmentation policy, i.e., learning different distributions of transformations for different samples. However, it is time-consuming to evaluate a large number of distributions and non-trivial to determine the relation among the distributions. Augmenting training samples with the corresponding policies, we consider the augmented sample loss as a random variable and train a task network to minimize the expectation of the augmented sample loss. From this perspective, learning a sample-aware policy can be regarded as reweighting the augmented sample losses and the computing cost can be greatly reduced.

In this paper, we propose an efficient method, called MetaAugment, to learn a sample-aware data augmentation policy by formulating it as a sample reweighting problem. An overview of the proposed method is illustrated in Figure~\ref{fig:model}. 
Given a transformation and the corresponding augmented image feature, extracted by a task network, an augmentation policy network outputs the weight of the augmented image loss. The task network is optimized by minimizing the weighted training loss, while the goal of the policy network is to improve the performance of the task network on a validation set via adjusting the weights of the losses. This is a bilevel optimization problem~\cite{colson2007overview} which is hard to be optimized. 
We leverage the mechanism of meta-learning~\cite{finn2017model,li2017meta,ren2018learning,wu2018learning,liu2018darts,shu2019meta} to solve this problem. The motivation is based on the ability of meta-learning to extract useful knowledge from related tasks. 
During training, classification for each batch of samples is treated as a task. The policy network acts as a meta-learner to adapt the task network with the augmented samples such that it can perform well on a batch of validation samples. 
Instead of learning an initialization for fast adaptation in downstream tasks, the policy network learns to augment while guiding the actual training process of the task network. 
We also propose a novel transformation sampler that samples transformations according to a distribution estimated by the outputs of the policy network. In principle, the distribution reflects the overall effectiveness of the transformations for the whole dataset and the transformation sampler can avoid invalid ones to improve the training efficiency. Furthermore, we theoretically show the convergence guarantee of our algorithm.

Our main contributions can be summarized as follows:

1) We propose MetaAugment to learn a sample-aware augmentation policy network that captures the variability of training samples and evaluates the effectiveness of transformations for different samples. 

2) We systematically investigate the convergence properties under two cases: (i) the policy network has its own feature extractor; (ii) the policy network depends on the parameters of the task network. We also point out the exact convergence rate and the optimization bias of our algorithm.

3) Extensive experimental results show that our method consistently improves the performance of various deep networks and 
outperforms previous automated data augmentation methods on CIFAR-10/100, Omniglot, and ImageNet.

\section{Related Work}

\noindent\textbf{Automated Data Augmentation.} There are rich studies on data augmentation in the past few decades, while automated data augmentation is a relatively new topic. Inspired by neural architecture search, AutoAugment~\cite{cubuk2019autoaugment} adopts reinforcement learning to train a controller to generate augmentation policies such that a task network trained along with the policies may have the highest validation accuracy. Adversarial AutoAugment~\cite{zhang2020adversarial} trains a controller to generate adversarial augmentation policies that increase the training loss of a task network. 
Inspired by hyper-parameter optimization, PBA~\cite{ho2019population} learns an epoch-aware augmentation schedule instead of a fixed policy for all training epochs. Following Bayesian optimization, FAA~\cite{lim2019fast} searches for policies that match the distribution of augmented data with that of unaugmented data. DADA~\cite{li2020dada} proposes to relax the discrete selection of augmentation policies to be differentiable and uses gradient-based optimization to do policy search.
These methods overlook the variability of training samples and adopt the same policy for all samples. RandAugment~\cite{cubuk2019randaugment} shows that hyper-parameters in such policies do not affect the results a lot. Our method learns a sample-aware policy network that associates different pairs of transformations and augmented samples with different weights. 

\noindent\textbf{Sample Reweighting.}  
There are many studies on sample reweighting for specific issues, e.g., class imbalance~\cite{johnson2019survey} and label noise~\cite{zhang2018generalized}. 
Among them, there are mainly two types of weighting functions. 
The first one, suitable for class imbalance, is to increase the weights of hard samples~\cite{freund1995desicion, johnson2019survey, malisiewicz2011ensemble, lin2017focal}, while the second one, suitable for noise label, is to increase the weights of easy samples~\cite{kumar2010self, jiang2014easy, jiang2014self, zhang2018generalized}. Instead of manually designing the weight functions, \citet{ren2018learning} propose an online reweighting method that learns sample weights directly from data via meta-learning. 
Meta-Weight-Net~\cite{shu2019meta} adopts a neural network to learn the mapping from sample loss to sample weight, which stabilize the weighting behavior. \citet{wang2019optimizing} train a scorer network to up-weight training data that have similar loss gradients with validation data via reinforcement learning. 
Different from these works, our policy network aims to evaluate different transformations for different samples and assign weights to augmented samples. 

\section{Methodology}

\subsection{Sample-Aware Data Augmentation}

Consider an image recognition task with the training set 
$\cD^{tr} = \{(x_i, y_i)\}_{i=1}^{N^{tr}}$, where $y_i$ is the label of the image $x_i$, and $N^{tr}$ is the sample size. Training samples are augmented by various transformations. Each transformation consists of two image processing functions, such as rotation, translation, coloring, etc., to be applied in sequence. Each function is associated with a magnitude that is rescaled to and sampled uniformly from $[0, 10]$. Given $K$ image processing functions in order, let $\cT_{j,k}^{m_1,m_2}(x_i)$ be a transformation applied on an image $x_i$ with $j$-th and $k$-th functions in order and the magnitudes are $m_1$ and $m_2$, respectively. 

Intuitively, not all of the augmented samples may help to improve the performance of a task network, and thus, an augmentation policy network is proposed to learn the effectiveness of different transformations for different training samples. Let $f(x_i;\bw)$ be the task network with parameters $\bw$. By abuse of notation, the deep feature of $x_i$ extracted by the task network is also denoted by $f(x_i;\bw)$. For each pair of augmented sample feature $f(\cT_{j,k}^{m_1,m_2}(x_i); \bw)$ and the embedding of the applied transformation $e(\cT_{j,k}^{m_1,m_2})$, the policy network $\cV(\cdot,\cdot\,;\bth)$ with parameters $\bth$ takes the pair as input and outputs a weight that is imposed on the augmented sample loss $L_{i,j,k}(m_1,m_2;\bw)=\ell(f(\cT_{j,k}^{m_1,m_2}(x_i); \bw),y_i)$. The task network is trained to minimize the following weighted training loss: 
\small
\begin{linenomath}
\begin{multline*}
\cL^{tr}(\bw,\bth)=\frac{1}{N^{tr}} \sum_{i=1}^{N^{tr}} \frac{1}{K^2} \sum_{j,k=1}^K 
\bbE_{m_1,m_2\sim U(0,10)}\Big[\\ \cV_{i,j,k}(m_1,m_2;\bw,\bth)L_{i,j,k}(m_1,m_2;\bw)\Big],
\end{multline*}
\end{linenomath}
\normalsize
where 
\small
\begin{linenomath}
\begin{equation*}
\cV_{i,j,k}(m_1,m_2;\bw,\bth) = \cV(f(\cT_{j,k}^{m_1,m_2}(x_i);\bw),e(\cT_{j,k}^{m_1,m_2});\bth)
\end{equation*}
\end{linenomath}
\normalsize
and $U(0,10)$ denotes the uniform distribution over $[0, 10]$. 
The objective of the policy network is to output the accurate sample weights such that the task network has the best performance on a validation set $\cD^{val} = \{(x_{i'}^{val}, y_{i'}^{val})\}_{i'=1}^{N^{val}}$ via minimizing $\cL^{tr}(\bw,\bth)$. 
Mathematically, we formulate the following optimization problem:
\small
\begin{linenomath}
\begin{equation}\label{euq:bilevel}
\begin{aligned}
& \min_{\bth} 
& & \cL^{val}(\bw^*(\bth)) = \frac{1}{N^{val}}\sum_{i'=1}^{N^{val}} L_{i'}^{val}(\bw^*(\bth))  \\
& \text{subject to} 
& & \bw^*(\bth) = \mathop{\arg\min}_{\bw} \cL^{tr}(\bw, \bth),
\end{aligned}
\end{equation}
\end{linenomath}
\normalsize
where $L_{i'}^{val}(\bw^*(\bth))=\ell(f(x_{i'}^{val};\bw^*(\bth)), y_{i'}^{val})$. This is a bilevel optimization problem~\cite{colson2007overview}, which is hard to solve since as the updating of $\bth$, the parameters of the task network are required to be optimized accordingly. Recent works~\cite{ren2018learning,wu2018learning,liu2018darts,shu2019meta} use meta-learning techniques to get approximate optimal solutions for such bilevel optimization problems. We also leverage meta-learning and employ the updating rules proposed in~\cite{shu2019meta,li2017meta,antoniou2018how} to solve problem~\eqref{euq:bilevel}.

\subsection{Proposed MetaAugment Algorithm}

The policy and task networks are trained alternately. For each iteration, a mini-batch of training data $\cD^{tr}_{mi}=\{(x_i, y_i)\}_{i=1}^{n^{tr}}$ with batch size $n^{tr}$ is sampled and for each $x_i$, a transformation $\cT_{j_i,k_i}^{m_1,m_2}$ is sampled to augment $x_i$. For notation simplicity, let $\cV_{i}(\bw,\bth)=\cV(f(\cT_{j_i,k_i}^{m_1,m_2}(x_i);\bw),e(\cT_{j_i,k_i}^{m_1,m_2});\bth)$ and $L_{i}(\bw)=\ell(f(\cT_{j_i,k_i}^{m_1,m_2}(x_i); \bw),y_i)$. 
Then the inner loop update of $\bw$ in iteration $t+1$ is 
\small
\begin{linenomath}
\begin{equation}\label{inner}
\hat \bw^{(t)}(\bth, \alpha) = \bw^{(t)} - \alpha
\frac{1}{n^{tr}}\sum_{i=1}^{n^{tr}} \cV_{i}(\bw^{(t)},\bth) \nabla_{\bw} L_{i}(\bw^{(t)}),
\end{equation}
\end{linenomath}
\normalsize
where $\alpha$ is a learnable learning rate~\cite{li2017meta,antoniou2018how} and $\nabla_{\bw} L_{i}(\bw^{(t)}) = \nabla_{\bw} L_{i}(\bw) \big|_{\bw^{(t)}}$. We adopt a learnable $\alpha$ because 
it is unclear how to set the learning rate schedule manually for this inner loop update and proper schedules may vary for different training datasets. We regard $\cV_{i}(\bw,\bth)$ as a function of $\bth$ and do not take derivative of $\cV_{i}(\bw,\bth)$ with respect to $\bw$ in Eq.~\eqref{inner}. This is because $\cV_{i}(\bw,\bth)$ shall be fixed when updating $\bf{w}$ and the weighted training loss shall not be minimized via minimizing $\cV_{i}(\bw,\bth)$.
It can also avoid a second-order derivative when updating the policy network, which otherwise will substantially increase the computational complexity.

The formulation $\hat \bw^{(t)}(\bth, \alpha)$ is regarded as a function of $\bth$ and $\alpha$, and then $\bth$ and $\alpha$ can be updated via the validation loss computed by $\hat \bw^{(t)}(\bth, \alpha)$ on a mini-batch of validation samples $\cD^{val}_{mi}= \{(x^{val}_{i'}, y^{val}_{i'})\}_{i'=1}^{n^{val}}$ with batch size $n^{val}$. The outer loop updates of $\bth$ and $\alpha$ are formulated by 
\small
\begin{linenomath}
\begin{multline}\label{outer1}
(\bth^{(t+1)},\alpha^{(t+1)}) = (\bth^{(t)},\alpha^{(t)})\\ - \beta \frac{1}{n^{val}} \sum_{i'=1}^{n^{val}} \nabla_{(\bth,\alpha)} L^{val}_{i'}(\hat\bw^{(t)}(\bth^{(t)},\alpha^{(t)})),
\end{multline}
\end{linenomath}
\normalsize
where $\beta$ is a learning rate and
$\nabla_{(\bth,\alpha)} L^{val}_{i'}(\hat\bw^{(t)}(\bth^{(t)},\alpha^{(t)}))$  $=\nabla_{(\bth,\alpha)} L^{val}_{i'}(\hat\bw^{(t)}(\bth,\alpha))\big|_{(\bth^{(t)},\alpha^{(t)})}$.
The third step in iteration $t+1$ is the outer loop update of $\bw^{(t)}$ with the updated $\bth^{(t+1)}$:
\small
\begin{linenomath}
\benr\label{outer2}
\bw^{(t+1)} = \bw^{(t)} - \gamma \frac{1}{n^{tr}} \sum_{i=1}^{n^{tr}} \cV_{i}(\bw^{(t)},\bth^{(t+1)}) \nabla_{\bw} L_{i}(\bw^{(t)}),
\eenr
\end{linenomath}
\normalsize
where $\gamma$ is a learning rate. With these updating rules, the two networks can be trained efficiently.

Although the policy network outputs the weights that evaluate the importance of the augmented samples, sampling invalid transformations constantly may lead to poor training efficiency. We propose a novel transformation sampler that sample transformations according to a probability distribution estimated by the outputs of the policy network and refined with the training process of the policy network. Specifically, let $\{\cV(f(\cT_{j_i,k_i}^{m_1,m_2}(x_i);\bw),e(\cT_{j_i,k_i}^{m_1,m_2});\bth)\}_{i=1}^{r\cdot n^{tr}}$ denote the collection of the policy network outputs in the last $r$ iterations. Then the average value of the outputs corresponding to the transformation with $j$-th and $k$-th functions in order (without magnitude) is estimated by 
\small
\begin{linenomath}
\[
v_{j,k} = \frac{1}{c_{j,k}} \sum_{i=1}^{r\cdot n^{tr}}\sum_{j_i=j,k_i=k} \cV(f(\cT_{j_i,k_i}^{m_1,m_2}(x_i);\bw),e(\cT_{j_i,k_i}^{m_1,m_2});\bth),
\]
\end{linenomath}
\normalsize
where $c_{j,k}$ is the number of terms in the summation. In our implementation, the output of the policy network is with the Sigmoid function to ensure the output is positive. To balance exploration and exploitation, and to avoid the biases caused by underfitting of the policy network, the sampler samples each transformation according to the following distribution:
\small
\begin{linenomath}
\begin{equation}\label{probability}
p_{j,k} = (1-\epsilon)\cdot\frac{v_{j,k}}{\sum_{l,m=1}^K v_{l,m}} + \epsilon\cdot\frac{1}{K^2},
\end{equation}
\end{linenomath}
\normalsize
where $\epsilon$ is a hyper-parameter, and the corresponding magnitudes are sampled uniformly from $[0,10]$. The probability $p_{j,k}$ is updated every $s$ iterations. This estimated distribution reflects the overall effectiveness of the transformations for the whole dataset and evolves synergistically with the policy network. Dataset-level and sample-level augmentation policies are combined together by these two modules. The MetaAugment algorithm is summarized in Algorithm~\ref{algo1}.

\begin{algorithm}[t]\small
	\caption{MetaAugment: Sample-Aware Data Augmentation Policy Learning}
	\label{algo1}
	\begin{algorithmic}[1]
		\REQUIRE Training data $\cD^{tr}$, validation data $\cD^{val}$, $K$ image processing functions, batch sizes $n^{tr}$, $n^{val}$, learning rate $\beta$, $\gamma$, sampler hyper-parameters $r$, $s$, $\epsilon$, iteration number $T$
		\ENSURE $\bw^{(T)}$, $\bth^{(T)}$, $\{p_{j,k}\}_{j,k=1}^K$
		\STATE Initialize $\bw^{(0)}$, $\bth^{(0)}$, $\alpha^{(0)}$, $\{p_{j,k}=\frac{1}{K^2}\}_{j,k=1}^K$;
		\FOR {$0 \leq t \leq T-1$} 
		\STATE Sample a mini-batch of training samples $\cD^{tr}_{mi}$ with batch size $n^{tr}$;
		\STATE For each sample in the mini-batch, sample a transformation according to $p_{j,k}$ and the corresponding magnitudes uniformly from $[0,10]$;
		\STATE Augment the batch data with the sampled transformations;
		\STATE Sample a mini-batch of validation samples $\cD^{val}_{mi}$ with batch size $n^{val}$;
		\STATE Compute $\hat \bw^{(t)}(\bth,\alpha)$ according to Eq.~\eqref{inner};
		\STATE Update $(\bth^{(t+1)},\alpha^{(t+1)})$ according to Eq.~\eqref{outer1};
		\STATE Update $\bw^{(t+1)}$ according to Eq.~\eqref{outer2};
		\IF {$(t+1) \bmod{s} = 0$}
		    \STATE Update $p_{j,k}$ according to Eq.~\eqref{probability} with the policy network outputs in the last $\min(t+1,r)$ iterations;
		\ENDIF
		\ENDFOR
	\end{algorithmic}
\end{algorithm}

In each iteration, MetaAugment requires three forward and backward passes of the task network, which makes it take $3\times$ training time than a standard training scheme. However, once trained, the policy network, together with the task network and the estimated distribution $\{p_{j,k}\}_{j,k=1}^{K}$ can be transferred to train different networks on the same dataset efficiently. More details are provided in Appendix.

\subsection{Convergence Analysis}
Motivated by Meta-Weight-Net~\cite{shu2019meta}, we analyze the convergence of the proposed algorithm. 
In technical details, we release the assumptions of Meta-Weight-Net, e.g. $\sum_{t=1}^{\infty} \beta_t \leq \infty$ and $\sum_{t=1}^{\infty} \beta_t^2 \leq \infty$, which are invalid in many cases.
We find a proper trade-off between the training and validation convergence and exactly point out the convergence rate and the optimization bias.
Furthermore, we systematically investigate two situations: (i) the policy network has its own feature extractor; (ii) the policy network depends on the feature extractor of the task network.
For the case (i), the convergence is guaranteed on both validation and training data, while for the case (ii), the conclusion on the validation data still holds, but the convergence is not ensured on the training data.
However, if the policy network is also a deep network, it will take nearly $4.5\times$ training time than a standard training scheme. Also, with limited validation data, it may overfit and thus make the task network overfit the validation data. Hence, we choose the latter case in our algorithm. 
We assume $\alpha$ is fixed during training and postpone the proof into Appendix.

\begin{theorem}\label{theorem1}
Suppose that the loss function $\ell$ has $\rho_1$-bounded gradients with respect to $\bw$ under both (augmented) training data and validation data, $\ell$ is Lipschitz smooth with constant $\rho_2$, the policy network $\cV$ is differential with a $\delta_1$-bounded gradient and twice differential with its Hessian bounded by $\delta_2$ with respect to $\bth$, and the absolute values of $\cV$ and $\ell$ are bounded above by $C_1$ and $C_2$, respectively. Furthermore, for any iteration $0 \leq t \leq T-1$, the variance of the weighted training loss (validation loss) gradient on a mini-batch of training (validation) samples is bounded above. Let
{\small
\begin{linenomath}
\benrr
\alpha = \frac{c \log T}{T}, \quad \beta = \sqrt{\frac{c' \log\log T}{T}}, \quad \gamma =  \frac{c''\log T}{T},
\eenrr
\end{linenomath}
}
for some positive constants $c$, $c'$ and $c''$. The number of iterations $T$ is sufficiently large such that $\alpha \beta \rho_1^2(\alpha \delta_1^2 \rho_2 + \delta_2)<1$ and $\gamma C_1 \rho_2 <1.$ If 
the policy network has its own feature extractor, we have
{\small
\begin{linenomath}
\benr
\frac{1}{T} \sum_{t=0}^{T-1}\bbE \Big[  \big\|\nabla_{\bth} \cL^{val}(\hat \bw^{(t)}(\bth^{(t)}))\big\|^2 \Big] \leq O(\frac{\log T}{\sqrt{T\log\log T}}), \label{conv1}\\
\lim_{T \rightarrow \infty} 
\frac{1}{T}\sum_{t=0}^{T-1} \bbE \Big[\|\nabla_{\bw} \cL^{tr}(\bw^{(t)}, \bth^{(t+1)})\|^2 \Big] = 0. \label{conv2}
\eenr
\end{linenomath}
}
\end{theorem}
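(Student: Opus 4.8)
The plan is to reduce both statements to textbook descent-lemma estimates for stochastic gradient descent on \emph{slowly drifting} objectives, after recording the simplifications that case~(i) buys us. Since the policy network has its own feature extractor, $\cV_i(\bw,\bth)$ has no dependence on $\bw$; write it $\cV_i(\bth)$. Then the weight step in Eq.~\eqref{outer2} is $\bw^{(t+1)}=\bw^{(t)}-\gamma\,\widehat g^{(t)}_{\bw}$, where $\widehat g^{(t)}_{\bw}$ is the mini-batch quantity $\frac1{n^{tr}}\sum_i\cV_i(\bth^{(t+1)})\nabla_{\bw}L_i(\bw^{(t)})$, and this is the \emph{true} gradient (up to the stochasticity assumed bounded-variance) of $H^{(t)}(\bw):=\cL^{tr}(\bw,\bth^{(t+1)})$ at $\bw^{(t)}$, precisely because in case~(i) no $\bw$-derivative of $\cV$ is dropped. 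Likewise Eq.~\eqref{outer1} reads $\bth^{(t+1)}=\bth^{(t)}-\beta\,\widehat g^{(t)}_{\bth}$ with $\widehat g^{(t)}_{\bth}$ an unbiased, bounded-variance mini-batch estimate of $\nabla_{\bth}G^{(t)}(\bth^{(t)})$, where $G^{(t)}(\bth):=\cL^{val}(\hat\bw^{(t)}(\bth))$ and $\hat\bw^{(t)}(\cdot)$ is the inner map of Eq.~\eqref{inner}. In particular $\nabla_{\bth}\cL^{val}(\hat\bw^{(t)}(\bth^{(t)}))=\nabla_{\bth}G^{(t)}(\bth^{(t)})$, so \eqref{conv1} is a bound on $\frac1T\sum_t\bbE\big[\|\nabla_{\bth}G^{(t)}(\bth^{(t)})\|^2\big]$, and \eqref{conv2} is the corresponding statement for $H^{(t)}$.

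I would then establish two smoothness facts. Differentiating $\hat\bw^{(t)}(\bth)=\bw^{(t)}-\alpha\frac1{n^{tr}}\sum_i\cV_i(\bth)\nabla_{\bw}L_i(\bw^{(t)})$ in $\bth$: the Jacobian has operator norm $\le\alpha\rho_1\delta_1$ and is $\alpha\rho_1\delta_2$-Lipschitz in $\bth$ (using $\|\nabla_{\bw}L_i\|\le\rho_1$ and the $\delta_1$-bounded gradient and $\delta_2$-bounded Hessian of $\cV$). Feeding this through the chain rule together with the $\rho_1$-bounded gradient and $\rho_2$-smoothness of $\ell$ yields $\|\nabla_{\bth}G^{(t)}(\bth)\|\le\alpha\rho_1^2\delta_1$ and that $G^{(t)}$ is $L_G$-Lipschitz-smooth with $L_G=\alpha\rho_1^2(\delta_2+\alpha\delta_1^2\rho_2)$ — so the hypothesis $\alpha\beta\rho_1^2(\alpha\delta_1^2\rho_2+\delta_2)<1$ is exactly $\beta L_G<1$. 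Separately, since $|\cV_i|\le C_1$ and $\nabla_{\bw}L_i$ is $\rho_2$-Lipschitz in $\bw$, $H^{(t)}$ is $L_H$-smooth with $L_H=C_1\rho_2$, and $\gamma C_1\rho_2<1$ is $\gamma L_H<1$.

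Next come the drift bounds, which I expect to be the delicate part. For the $\bth$ side: the weight moves by $\|\bw^{(t+1)}-\bw^{(t)}\|\le\gamma C_1\rho_1$, and since $\hat\bw^{(t)}$ and $\hat\bw^{(t+1)}$ also involve different training mini-batches, $\sup_{\bth}\|\hat\bw^{(t+1)}(\bth)-\hat\bw^{(t)}(\bth)\|=O(\alpha+\gamma)$ (each $\alpha$-scaled correction term is $\le\alpha C_1\rho_1$), so by the $\rho_1$-bounded gradient of $\ell$, $\sup_{\bth}|G^{(t+1)}(\bth)-G^{(t)}(\bth)|=O(\alpha+\gamma)$. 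For the $\bw$ side: $\|\bth^{(t+2)}-\bth^{(t+1)}\|\le\beta\,\alpha\rho_1^2\delta_1$ by the gradient bound above, and because $\cV$ is $\delta_1$-Lipschitz in $\bth$ with $|L_i|\le C_2$, $\sup_{\bw}|H^{(t+1)}(\bw)-H^{(t)}(\bw)|=O(\alpha\beta)$. With these in hand the endgame is routine: apply the $L_G$-smooth descent lemma to $G^{(t)}$ across $\bth^{(t)}\to\bth^{(t+1)}$, take conditional expectation, use unbiasedness, the bounded variance $\sigma_{\mathrm{val}}^2$ and $\beta L_G<1$ to get $\frac\beta2\bbE\|\nabla_{\bth}G^{(t)}(\bth^{(t)})\|^2\le\bbE[G^{(t)}(\bth^{(t)})]-\bbE[G^{(t)}(\bth^{(t+1)})]+\frac12L_G\beta^2\sigma_{\mathrm{val}}^2$; rewrite the middle difference as $\big(\bbE[G^{(t)}(\bth^{(t)})]-\bbE[G^{(t+1)}(\bth^{(t+1)})]\big)+O(\alpha+\gamma)$; sum over $t=0,\dots,T-1$ so that the first part telescopes and is bounded by $2C_2$ (as $|\cL^{val}|\le C_2$); divide by $\beta T$. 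This gives $\frac1T\sum_t\bbE\|\nabla_{\bth}G^{(t)}(\bth^{(t)})\|^2=O\big(\frac1{\beta T}+\frac{\alpha+\gamma}{\beta}+L_G\beta\sigma_{\mathrm{val}}^2\big)$, and substituting $\alpha=c\log T/T$, $\beta=\sqrt{c'\log\log T/T}$, $\gamma=c''\log T/T$ (so $L_G=O(\log T/T)$) makes these terms $O(1/\sqrt{T\log\log T})$, $O(\log T/\sqrt{T\log\log T})$, and strictly lower order respectively, which is \eqref{conv1}. Repeating the argument verbatim for $H^{(t)}$ with step $\gamma$, smoothness $L_H$, variance $\sigma_{\mathrm{tr}}^2$, drift $O(\alpha\beta)$ and bound $|\cL^{tr}|\le C_1C_2$ yields $\frac1T\sum_t\bbE\|\nabla_{\bw}\cL^{tr}(\bw^{(t)},\bth^{(t+1)})\|^2=O\big(\frac1{\gamma T}+\frac{\alpha\beta}{\gamma}+L_H\gamma\sigma_{\mathrm{tr}}^2\big)=O(1/\log T)+O(\sqrt{\log\log T/T})+O(\log T/T)\to0$, which is \eqref{conv2}.

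The main obstacle is precisely the two middle steps working in concert: pushing the first- and second-order bounds through the inner-loop map $\hat\bw^{(t)}(\cdot)$ to obtain $L_G$ with exactly the constant that appears in the hypothesis, and controlling the objective drifts $G^{(t)}\!\to\!G^{(t+1)}$ and $H^{(t)}\!\to\!H^{(t+1)}$ tightly enough — $O(\alpha+\gamma)$ and $O(\alpha\beta)$ — that after dividing by $\beta$ and $\gamma$ they still vanish (indeed, the $\bth$-drift term is what ultimately sets the $\log T/\sqrt{T\log\log T}$ rate in \eqref{conv1}). The accompanying subtlety is calibrating the three schedules so that \emph{every} residual term decays at a compatible rate while the two step-size conditions $\beta L_G<1$ and $\gamma L_H<1$ are preserved; this is what forces the slightly unusual $\sqrt{\log\log T/T}$ scaling of $\beta$, and, since in case~(ii) the inner map would additionally drop a $\bw$-derivative of $\cV$ and so $\widehat g^{(t)}_{\bw}$ would no longer be a gradient of $\cL^{tr}$, it also explains why the training-side conclusion is expected to fail there while the validation-side argument above goes through unchanged.
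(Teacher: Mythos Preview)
Your proposal is correct and follows essentially the same route as the paper: the same smoothness constant $L_G=\alpha\rho_1^2(\alpha\delta_1^2\rho_2+\delta_2)$ for the map $\bth\mapsto\cL^{val}(\hat\bw^{(t)}(\bth))$ (this is the paper's Lemma~1), the same decomposition into a descent step plus an objective-drift term on each side, and the same telescoping/rate analysis under the stated schedules. The one noteworthy deviation is on the training side: to bound the drift $\cL^{tr}(\bw,\bth^{(t+2)})-\cL^{tr}(\bw,\bth^{(t+1)})$ you use the first-order $\delta_1$-Lipschitz bound on $\cV$ together with the almost-sure estimate $\|\bth^{(t+2)}-\bth^{(t+1)}\|\le\alpha\beta\rho_1^2\delta_1$, obtaining drift $O(\alpha\beta)$; the paper instead Taylor-expands $\cV$ to second order with the $\delta_2$-Hessian remainder, which introduces an extra $\tfrac{\delta_2\beta^2}{2}\|\vep^{(t+1)}\|^2$ term and hence an $O(\beta^2/\gamma)=O(\log\log T/\log T)$ contribution after dividing by $\gamma$. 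Your first-order treatment is simpler, avoids invoking $\delta_2$ on the training side, and in fact yields a marginally sharper rate $O(1/\log T)$ for \eqref{conv2}; otherwise the two arguments coincide.
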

If the policy network uses the feature extractor of the task network, the weights in the training loss will change when $\bw$ updates. Since we regard $\cV$ as a fixed weight when updating $\bw$, the weighted training loss at the end of the last iteration is different from the weighted training loss at the beginning of the current iteration. The discontinuity leads to a bias term in the convergence of the weighted training loss. 

\begin{theorem}\label{theorem2}
Suppose the assumptions of  Theorem~\ref{theorem1} hold. Further assume that the policy network $\cV$ depends on $\bw$ and is differential with a $\tilde \delta_1$-bounded gradient 
with respect to $\bw.$ Then we have that (\ref{conv1}) still holds and
{\small
\begin{linenomath}
\begin{equation}\label{conv3}
\frac{1}{T}\sum_{t=0}^{T-1} \bbE \Big[\|\nabla_{\bw} \cL^{tr}(\bw^{(t)}, \bth^{(t+1)})\|^2 \Big] - 2\rho_1 \tilde \delta_1 C_1 C_2 \leq o(1).
\end{equation}
\end{linenomath}
}
\end{theorem}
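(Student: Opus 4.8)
The plan is to recycle the outer-loop machinery of Theorem~\ref{theorem1} for the validation bound and to replace the clean one-step decrease of the weighted training loss used to prove \eqref{conv2} by a one-step decrease of a \emph{frozen-weight surrogate}, so that the extra terms created by letting $\cV$ depend on $\bw$ can be isolated and shown to accumulate into the stated $O(1)$ bias. For \eqref{conv1} there is nothing new: the inner update \eqref{inner} fixes $\bw$ at $\bw^{(t)}$, so $\hat\bw^{(t)}(\bth)$ depends on $\cV$ only through its $\bth$-gradient ($\delta_1$) and $\bth$-Hessian ($\delta_2$) together with $\rho_1,\rho_2$, and these are exactly the quantities that govern the smoothness of $\bth\mapsto\cL^{val}(\hat\bw^{(t)}(\bth))$ and the hypothesis $\alpha\beta\rho_1^2(\alpha\delta_1^2\rho_2+\delta_2)<1$. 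Allowing $\cV$ to depend on $\bw$ changes none of these bounds and violates no assumption of Theorem~\ref{theorem1}, so its argument for \eqref{conv1} applies verbatim.

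For the training loss I would, at each iteration $t$, introduce the surrogate $\widetilde{\cL}^{(t)}(\bw)$ obtained from $\cL^{tr}(\bw,\bth^{(t+1)})$ by freezing every weight factor $\cV_{i,j,k}$ at $(\bw^{(t)},\bth^{(t+1)})$ and letting only the loss factors $L_{i,j,k}$ vary with $\bw$. Three observations drive the argument. (i) $\widetilde{\cL}^{(t)}(\bw^{(t)})=\cL^{tr}(\bw^{(t)},\bth^{(t+1)})$, and the population version of the \eqref{outer2}-direction equals $\nabla\widetilde{\cL}^{(t)}(\bw^{(t)})$. (ii) $\widetilde{\cL}^{(t)}$ is $C_1\rho_2$-smooth (weights bounded by $C_1$, $\ell$ is $\rho_2$-smooth) — which is precisely why $\gamma C_1\rho_2<1$ is the relevant assumption — so the descent lemma, unbiasedness of the mini-batch gradient, and the assumed variance bound give $\bbE[\widetilde{\cL}^{(t)}(\bw^{(t+1)})]\le\cL^{tr}(\bw^{(t)},\bth^{(t+1)})-\tfrac{\gamma}{2}\|\nabla\widetilde{\cL}^{(t)}(\bw^{(t)})\|^2+O(\gamma^2)$. (iii) $\nabla_\bw\cL^{tr}(\bw^{(t)},\bth^{(t+1)})=\nabla\widetilde{\cL}^{(t)}(\bw^{(t)})+b^{(t)}$, where $b^{(t)}$ is the discarded contribution $\tfrac{1}{N^{tr}}\sum_i\tfrac1{K^2}\sum_{j,k}\bbE_{m_1,m_2}[(\nabla_\bw\cV_{i,j,k})L_{i,j,k}]$, with $\|b^{(t)}\|\le\tilde\delta_1 C_2$ by the new $\bw$-gradient bound and $|\ell|\le C_2$. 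To return from the surrogate to $\cL^{tr}$ at the next iterate, I would bound $|\cL^{tr}(\bw^{(t+1)},\bth^{(t+2)})-\widetilde{\cL}^{(t)}(\bw^{(t+1)})|\le\tilde\delta_1 C_2\|\bw^{(t+1)}-\bw^{(t)}\|+\delta_1 C_2\|\bth^{(t+2)}-\bth^{(t+1)}\|$: using $\|\bw^{(t+1)}-\bw^{(t)}\|\le\gamma C_1\rho_1$ the first piece is an irreducible $O(\gamma)$ per step, while, by $\|\bth^{(t+1)}-\bth^{(t)}\|\le\alpha\beta\delta_1\rho_1^2$ from \eqref{outer1}, the second is $O(\alpha\beta)$ per step.

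Then I would telescope the one-step bounds over $t=0,\dots,T-1$, use $\cL^{tr}\ge 0$ and $\cL^{tr}(\bw^{(0)},\bth^{(0)})\le C_1C_2$, and divide by $\tfrac{\gamma}{2}T$. Under the prescribed schedule the terms $1/(\gamma T)=\Theta(1/\log T)$, $\alpha\beta/\gamma=\Theta(\sqrt{\log\log T/T})$, and $\gamma=\Theta(\log T/T)$ all vanish, so only the accumulated weight drift and the mismatch between $\nabla\widetilde{\cL}^{(t)}(\bw^{(t)})$ and $\nabla_\bw\cL^{tr}(\bw^{(t)},\bth^{(t+1)})$ survive. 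Estimating the latter through $\|\nabla_\bw\cL^{tr}\|^2=\|\nabla\widetilde{\cL}^{(t)}(\bw^{(t)})+b^{(t)}\|^2$ with $\|\nabla\widetilde{\cL}^{(t)}(\bw^{(t)})\|\le\rho_1 C_1$ and $\|b^{(t)}\|\le\tilde\delta_1 C_2$ leaves, besides vanishing terms, a residual of order $\rho_1\tilde\delta_1 C_1 C_2$; tracking the constants exactly — the $\tfrac12$ in the descent inequality under $\gamma C_1\rho_2<1$, the $O(\gamma)$ weight-drift estimate, and the cross term between the used and discarded parts of the gradient — produces the coefficient $2$ in \eqref{conv3}.

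The main obstacle — and the reason the proof of \eqref{conv2} cannot be transported directly — is that in case (ii) $\cL^{tr}(\cdot,\bth)$ need not be Lipschitz smooth in $\bw$ (only a first-order $\tilde\delta_1$-bound on $\cV$ in $\bw$ is assumed, not a Hessian bound) and the \eqref{outer2}-direction is a biased estimate of $\nabla_\bw\cL^{tr}$. The surrogate removes the smoothness difficulty but forces an $O(\gamma)$ drift at each of the $T$ steps; the delicate point is that with this learning-rate schedule the total drift is exactly $\Theta(1)$ rather than divergent — because $\gamma\to 0$ and $\alpha\beta/\gamma\to 0$ while $\gamma T\to\infty$ — so the $\bth$-transition and variance contributions still vanish but the drift survives in \eqref{conv3} as a genuine, non-removable optimization bias, whereas \eqref{conv1} is untouched.
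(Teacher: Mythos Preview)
Your strategy is essentially the paper's: freeze the $\cV$-weights at $(\bw^{(t)},\bth^{(t+1)})$, apply the descent lemma to the resulting $C_1\rho_2$-smooth surrogate (this is the paper's $\cL^{tr}(\bw,(\bw^{(t)},\bth^{(t+1)}))$, handled as their term $I_{42}$), and control the discrepancy caused by unfreezing (the paper's $I_{41}$). Your treatment of \eqref{conv1} and of the $\bth$-shift is also in the same spirit; your first-order bound $\delta_1C_2\|\bth^{(t+2)}-\bth^{(t+1)}\|=O(\alpha\beta)$ is coarser than the paper's Taylor expansion of $I_3$ but gives the same $o(1)$ after dividing by $\gamma$.

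The one place where you go astray is item (iii). In this paper's convention---stated explicitly after \eqref{inner} and used again in the proof of Theorem~\ref{theorem2}---the symbol $\nabla_\bw\cL^{tr}(\bw^{(t)},\bth^{(t+1)})$ in \eqref{conv3} already \emph{is} the frozen-weight gradient, i.e.\ your $\nabla\widetilde\cL^{(t)}(\bw^{(t)})$; the $\bw$-dependence of $\cV$ is never differentiated. So your ``discarded part'' $b^{(t)}$ is identically zero for the quantity being bounded, and the only surviving $O(1)$ contribution is the weight drift $\tilde\delta_1C_2\cdot\|\bw^{(t+1)}-\bw^{(t)}\|\le\gamma\rho_1\tilde\delta_1C_1C_2$, which after dividing by the descent coefficient $\gamma-\tfrac12\gamma^2C_1\rho_2>\gamma/2$ yields exactly $2\rho_1\tilde\delta_1C_1C_2$. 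Under your full-gradient reading, the drift \emph{and} the cross term $2\langle\nabla\widetilde\cL^{(t)}(\bw^{(t)}),b^{(t)}\rangle$ would both contribute, giving at least $4\rho_1\tilde\delta_1C_1C_2+\tilde\delta_1^2C_2^2$; so your claim that tracking constants through all three sources ``produces the coefficient $2$'' cannot be right as stated. Drop $b^{(t)}$ and the accounting becomes clean.
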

According to the proof of Theorem~\ref{theorem2}, one can find that under certain conditions, (\ref{conv2}) can still hold even if the policy network depends on the feature extractor of the task network.

\section{Experimental Results}\label{sec:experiment}

\begin{table*}[t]
	\centering
	\caption{Top-1 test accuracy (\%) on CIFAR-10 and CIFAR-100. 
	}
	\label{table:cifar}
	\vspace{-0.2cm}
	\begin{adjustbox}{max width=0.95\textwidth}
		\begin{tabular}{llcccccccc}
			\toprule
			\toprule
			Dataset & Model & Baseline & AA & FAA & PBA & DADA & RA & AdvAA & MetaAugment \\
			\midrule
			CIFAR-10 & WRN-28-10 & 96.1 & 97.4 & 97.3 & 97.42 & 97.3 & 97.3 & \textbf{98.10} & 97.76$\pm$0.04 \\
			& WRN-40-2 & 94.7 & 96.3 & 96.4 & - & 96.4 & - & - & \textbf{96.79$\pm$0.06} \\
			& Shake-Shake (26 2x96d) & 97.1 & 98.0 & 98.0 & 97.97 & 98.0 & 98.0 & 98.15 & \textbf{98.29$\pm$0.03} \\
			& Shake-Shake (26 2x112d) & 97.2 & 98.1 & 98.1 & 97.97 & 98.0 & - & 98.22 & \textbf{98.28$\pm$0.01} \\
			& PyramidNet+ShakeDrop & 97.3 & 98.5 & 98.3 & 98.54 & 98.3 & 98.5 & \textbf{98.64} & 98.57$\pm$0.02 \\
			\midrule
			CIFAR-100 & WRN-28-10 & 81.2 & 82.9 & 82.8 & 83.27 & 82.5 & 83.3 & \textbf{84.51} & 83.79$\pm$0.11 \\
			& WRN-40-2 & 74.0 & 79.3 & 79.4 & - & 79.1 & - & - & \textbf{80.60$\pm$0.16} \\
			& Shake-Shake (26 2x96d) & 82.9 & 85.7 & 85.4 & 84.69 & 84.7 & - & 85.90 & \textbf{85.97$\pm$0.09} \\
			& PyramidNet+ShakeDrop & 86.0 & 89.3 & 88.3 & 89.06 & 88.8 & - & \textbf{89.58} & 89.46$\pm$0.11 \\
			\bottomrule
			\bottomrule
		\end{tabular}
	\end{adjustbox}
\end{table*}

In this section, we evaluate MetaAugment for image recognition tasks on CIFAR-10/100~\cite{krizhevsky2009learning}, Omniglot~\cite{Lake_oneshot}, and ImageNet~\cite{deng2009imagenet}. We show the effectiveness of MetaAugment with different task network architectures and visualize the learned augmentation policies to illustrate the necessity of sample-aware data augmentation. 

\begin{table}[t]
	\centering
	\caption{Top-1 test accuracy (\%) on CIFAR using Multiple Transformations (MT) for each sample in a mini-batch.}
	\label{table:advAAcifar}
	\vspace{-0.2cm}
	\begin{adjustbox}{max width=0.47\textwidth}
	    \begin{tabular}{llccccc}
			\toprule
			\toprule
			Dataset & Model & AdvAA & MetaAugment+MT \\
			\midrule
			CIFAR-10 & WRN-28-10 & 98.10 & \textbf{98.26$\pm$0.02} \\
			CIFAR-100 & WRN-28-10 & 84.51 & \textbf{85.21$\pm$0.09} \\
			\bottomrule
			\bottomrule
		\end{tabular}
	\end{adjustbox}
	\vspace{-5pt}
\end{table}

In our implementation, we use $K=14$ image processing functions: AutoContrast, Equalize, Rotate, Posterize, Solarize, Color, Contrast, Brightness, Sharpness, ShearX/Y, TranslateX/Y, Identity~\cite{cubuk2019randaugment,cubuk2019autoaugment,ho2019population,lim2019fast,zhang2020adversarial}. 
The embedding of a particular transformation $\cT_{j,k}^{m_1,m_2}$ is a $28$-dimensional vector with $m_1 + 1$ in {\small $(2j-1)$}-th position, $m_2 + 1$ in {\small $(2k)$}-th position, and $0$ elsewhere. For AutoContrast, Equalize, and Identity that do not use magnitude, we let $11$ be in their positions. 
The augmentation policy network is an MLP that takes the embedding of the transformation and the corresponding augmented image feature as inputs, each followed by a fully-connected layer of size 100 with ReLU nonlinearities. The two intermediate features are then concatenated together, followed by a fully-connected output layer of size 1. The Sigmoid function is applied to the output. We also normalize the output weights of training samples in each mini-batch, i.e., each weight is divided by the sum of all weights in the mini-batch. 
More implementation details and the hyper-parameters we used are provided in Appendix. 
All of the reported results are averaged over five runs with different random seeds.

\subsection{Results on CIFAR, Omniglot, and ImageNet}

\noindent\textbf{CIFAR.} CIFAR-10 and CIFAR-100 consist of 50,000 images for training and 10,000 images for testing. For our method, we hold out 1,000 training images as the validation data. We compare MetaAugment with Baseline, AutoAugment (AA)~\cite{cubuk2019autoaugment}, FAA~\cite{lim2019fast}, PBA~\cite{ho2019population}, DADA~\cite{li2020dada}, RandAugment (RA)~\cite{cubuk2019randaugment}, and Adversarial AutoAugment (AdvAA)~\cite{zhang2020adversarial} on Wide-ResNet (WRN)~\cite{zagoruyko2016wide}, Shake-Shake~\cite{gastaldi2017shake}, and PyramidNet+ShakeDrop~\cite{han2017deep,yamada2018shakedrop}. 
The Baseline adopts the standard data augmentation: horizontal flipping with 50\% probability, zero-padding and random cropping. For MetaAugment, the transformation is applied after horizontal flipping, and then Cutout~\cite{devries2017improved} with $16\times16$ pixels is applied. 

\begin{figure}[t]
	\begin{subfigure}{.23\textwidth}
		\centering
		\includegraphics[width=.9\linewidth]{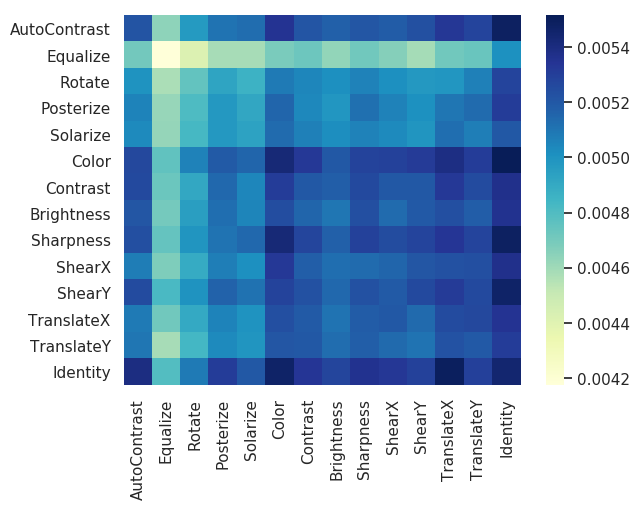}
		\caption{CIFAR-10}
		\label{fig:cifar10}
	\end{subfigure}
	\begin{subfigure}{.23\textwidth}
		\centering
		\includegraphics[width=.9\linewidth]{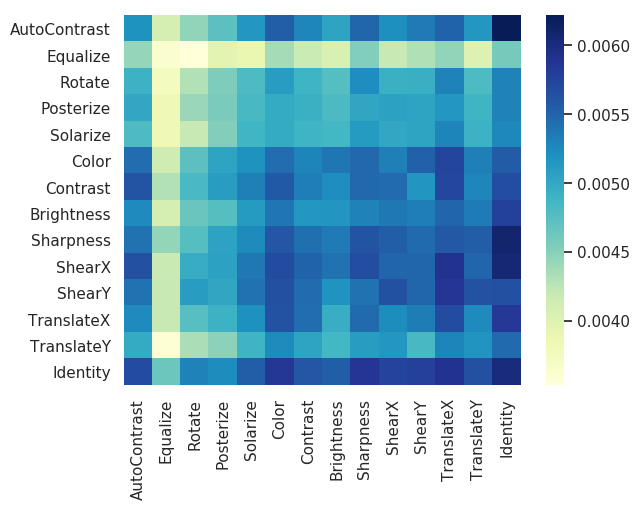}
		\caption{CIFAR-100}
		\label{fig:cifar100}
	\end{subfigure}
	\begin{subfigure}{.23\textwidth}
		\centering
		\includegraphics[width=.9\linewidth]{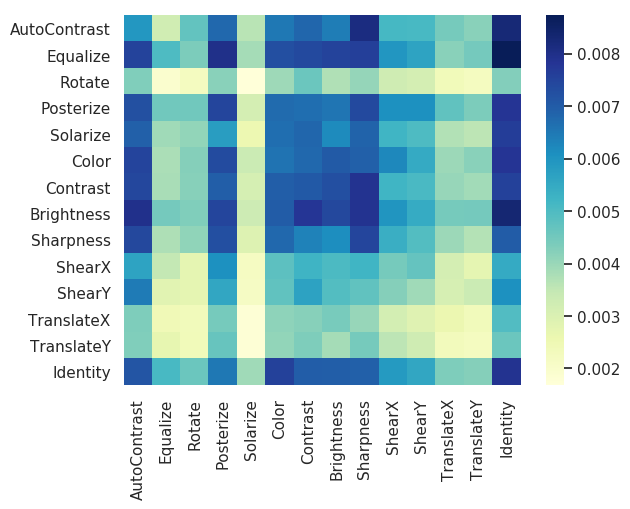}
		\caption{Omniglot}
		\label{fig:omniglot}
	\end{subfigure}
	\begin{subfigure}{.23\textwidth}
		\centering
		\includegraphics[width=.9\linewidth]{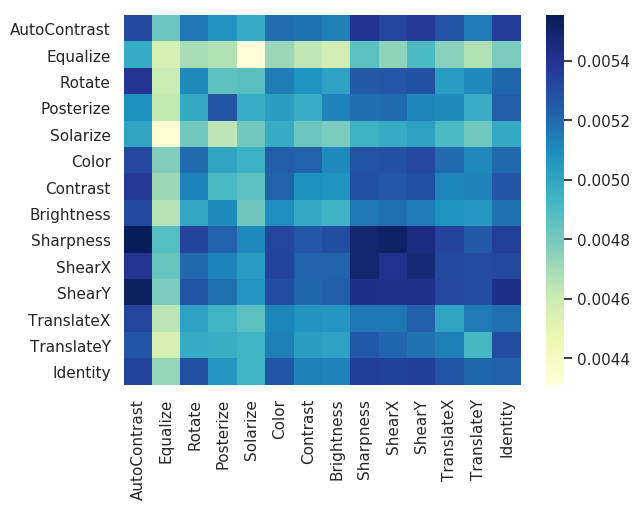}
		\caption{ImageNet}
		\label{fig:imagenet}
	\end{subfigure}
	\caption{Estimated distributions of transformations on (a) CIFAR-10, (b) CIFAR-100, (c) Omniglot, and (d) ImageNet.}
	\label{fig:heat_maps}
	\vspace{-15pt}
\end{figure}

The mean test accuracy and Standard Deviation (Std Dev) of MetaAugment, together with the results of other competitors, are reported in Table~\ref{table:cifar}. On both of CIFAR-10 and CIFAR-100, our method outperforms AA, FAA, PBA, DADA, and RA on all of the models. Compared with AdvAA, MetaAugment shows slightly worse results on WRN-28-10 and PyramidNet+ShakeDrop, and better results on Shake-Shake. 
However, AdvAA trains a task network with a large batch consisting of samples augmented by 8 augmentation policies. The Multiple-Transformation-per-sample (MT) trick leads to better performance but $8\times$ more computing cost than the regular training. We also compare MetaAugment with AdvAA in the MT setting. Each training sample in a mini-batch is augmented by 4 transformations and all the augmented samples are used to train the task network. The results are illustrated in Table~\ref{table:advAAcifar}. It can be seen that MetaAugment outperforms AdvAA in this setting.
Moreover, AdvAA assumes all transformations do not change the labels of data, which may not be valid in challenging cases. More details can be found in Figure~\ref{fig:omniglot_visual}. 
We visualize the estimated distributions of transformations in Figure~\ref{fig:heat_maps}. The difference in probability values is greater on CIFAR-100 than that on CIFAR-10, which shows the effectiveness of different transformations varies more on CIFAR-100. 

We train the policy network to assign proper weights to the augmented samples and use all of them to train the task network instead of rejecting the augmented samples with low weights. We also conduct experiment on the case that the policy network rejects the augmented samples with weights less than the mean of all the weights in a mini-batch. The results on CIFAR-100 with task networks WRN-28-10 and WRN-40-2 are 82.57\% and 79.01\% respectively, which are worse than the original case. It implies that samples with small weights are still useful. Ideally, the policy network can automatically assign very small weights to augmented samples that hurt the validation accuracy and we need no additional zeroing. Intuitively, rejecting augmented samples using a carefully selected threshold number may be helpful, but it is a bit far from the main idea of this paper.

\begin{table}[t]
	\centering
	\caption{Top-1 test accuracy (\%) on Omniglot.}
	\label{table:omniglot}
	\vspace{-0.2cm}
	\begin{adjustbox}{max width=0.47\textwidth}
	    \begin{tabular}{lccccc}
			\toprule
			\toprule
			Model & Baseline & FAA & PBA & RA & MetaAugment \\
			\midrule
			WRN-28-10 & 87.89 & 89.24 & 89.25 & 87.86 & \textbf{89.61$\pm$0.05} \\
			WRN-40-2 & 85.86 & 88.72 & 88.30 & 88.10 & \textbf{89.12$\pm$0.10} \\
			\bottomrule
			\bottomrule
		\end{tabular}
	\end{adjustbox}
	\vspace{-15pt}
\end{table}

\noindent\textbf{Omniglot.} To investigate the universality of our method, we conduct experiments on Omniglot which contains images of 1,623 characters instead of natural objects.
For each character, we select 15, 2, and 3 images as training, validation, and test data. We compare MetaAugment with Baseline, FAA, PBA, and RA on WRN. 
The Baseline models are trained without data augmentation. For MetaAugment, transformations are applied to training samples directly with no Cutout added. For FAA and PBA, we do experiments with their open-source codes. For RA, we use our own implementation that randomly samples transformations and adopts the same weight for augmented samples. Implementation details are provided in Appendix. 

The results are reported in Table~\ref{table:omniglot}. It can be seen that MetaAugment outperforms the Baseline and RA by a wide margin and still achieves better results than FAA and PBA. 
We also visualize the estimated distribution in Figure~\ref{fig:heat_maps}. Different from CIFAR, geometric transformations have low probability values. 
This is because the geometric structure is the key feature of characters and should not be changed a lot as shown in Figure~\ref{fig:omniglot_visual}. In contrast, natural images in CIFAR contain rich texture and color information and less depend on geometric structure. The results indicate the robustness of our policy network when dealing with bad transformations. 
To compare with adversarial strategy in AdvAA, we visualize samples selected by adversarial strategy and our strategy, i.e., samples with high losses but low weights and those with low losses but high weights, in Figure~\ref{fig:omniglot_visual}.
In the first two rows, we observe that geometric transformations with large magnitudes may not preserve the labels and make the characters look like samples of different classes (the hard negatives). In this case, AdvAA that prefers the transformations leading to large sample losses may harm the performance.
In the last two rows, we observe that our method prefers the transformations that preserve the labels and key features of the augmented samples. 
Our method is more robust when many bad augmentation transformations are introduced in the search space.

\begin{figure}[t]
    \centering
    \includegraphics[width=.85\linewidth]{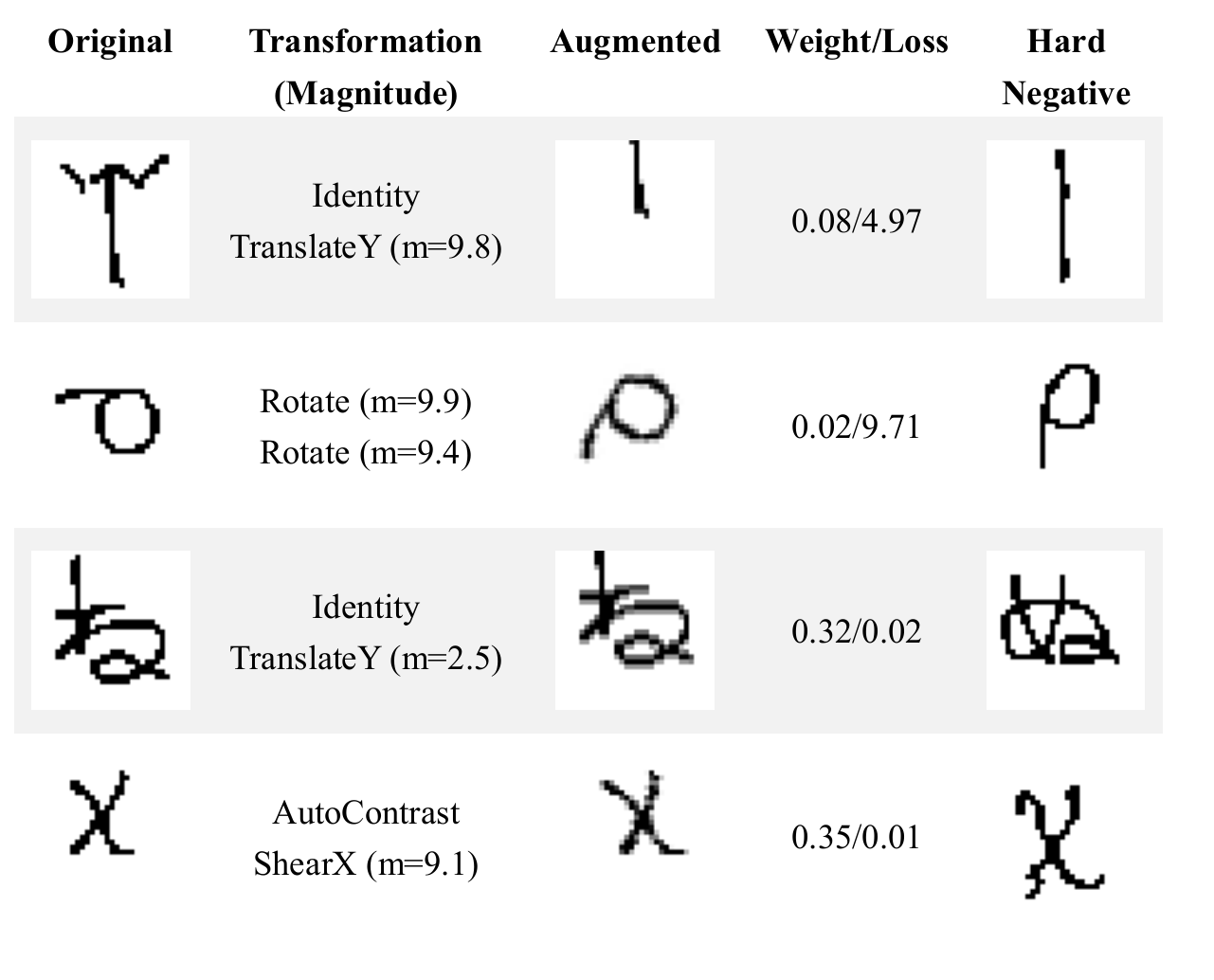}
    \vspace{-10pt}
    \caption{Examples of augmented samples on Omniglot. Here, hard negative means a validation sample w.r.t. similar feature map but different label.}
    \label{fig:omniglot_visual}
\end{figure}

\begin{table*}[t]
	\centering
	\caption{Top-1 / Top-5 test accuracy (\%) on ImageNet. 
	}
	\label{table:imagenet}
	\vspace{-0.2cm}
	\begin{adjustbox}{max width=0.9\textwidth}
		\begin{tabular}{lccccccc}
			\toprule
			\toprule
			Model & Baseline & AA & FAA & DADA & RA & AdvAA & MetaAugment \\
			\midrule
			ResNet-50 & 76.3 / 93.1 & 77.6 / 93.8 & 77.6 / 93.7 & 77.5 / 93.5 & 77.6 / 93.8 & 79.40 / 94.47 & \textbf{79.74$\pm$0.08 / 94.64$\pm$0.03} \\
			ResNet-200 & 78.5 / 94.2 & 80.0 / 95.0 & 80.6 / 95.3 & - & - & 81.32 / 95.30 & \textbf{81.43$\pm$0.08 / 95.52$\pm$0.04} \\
			\bottomrule
			\bottomrule
		\end{tabular}
	\end{adjustbox}
	\vspace{-0.1cm}
\end{table*}

\begin{figure}[t]
	\begin{subfigure}{.45\textwidth}
		\centering
		\includegraphics[width=.85\linewidth]{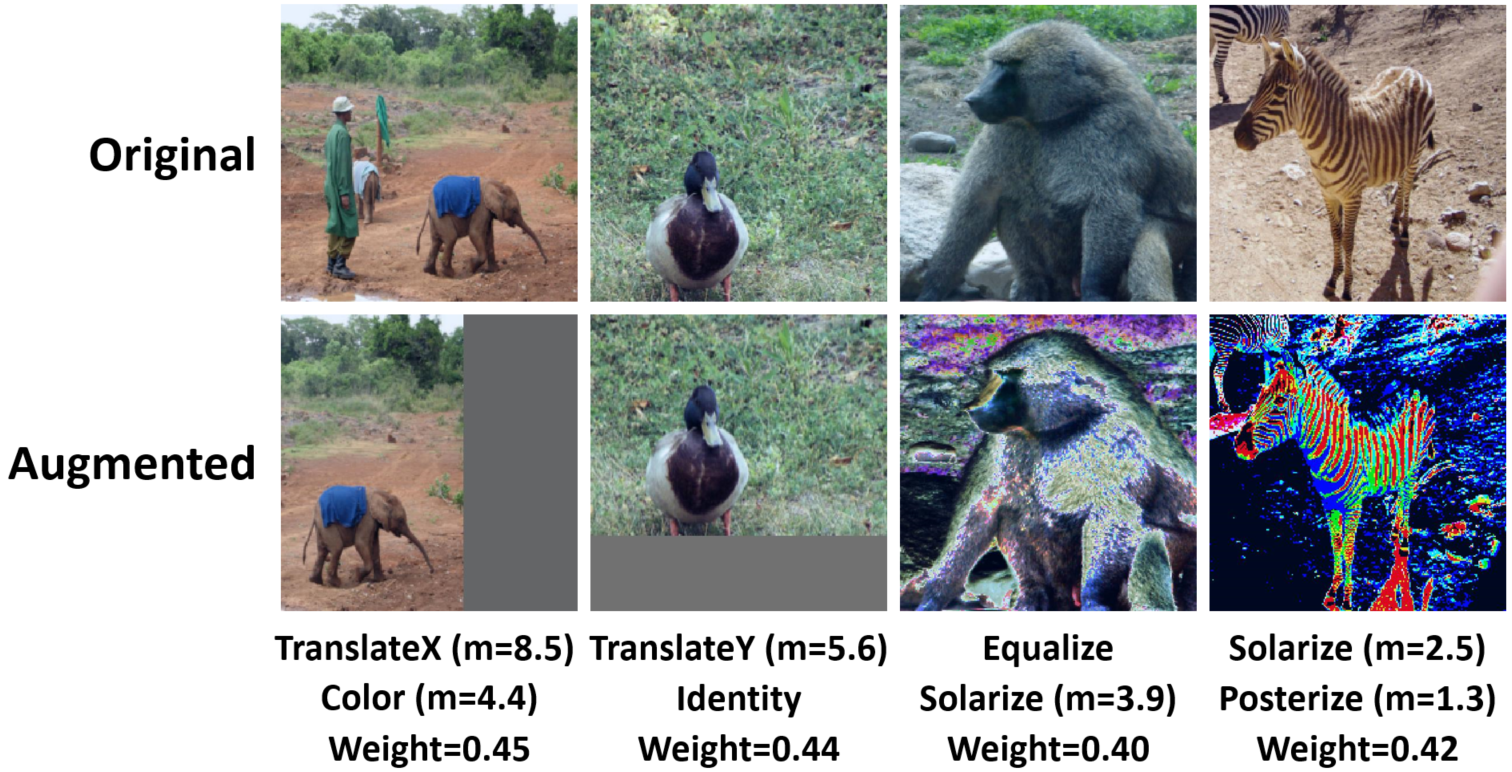}
		\caption{Augmented samples with high weights}
		\label{fig:high_weights}
	\end{subfigure}
	\begin{subfigure}{.45\textwidth}
		\centering
		\includegraphics[width=.85\linewidth]{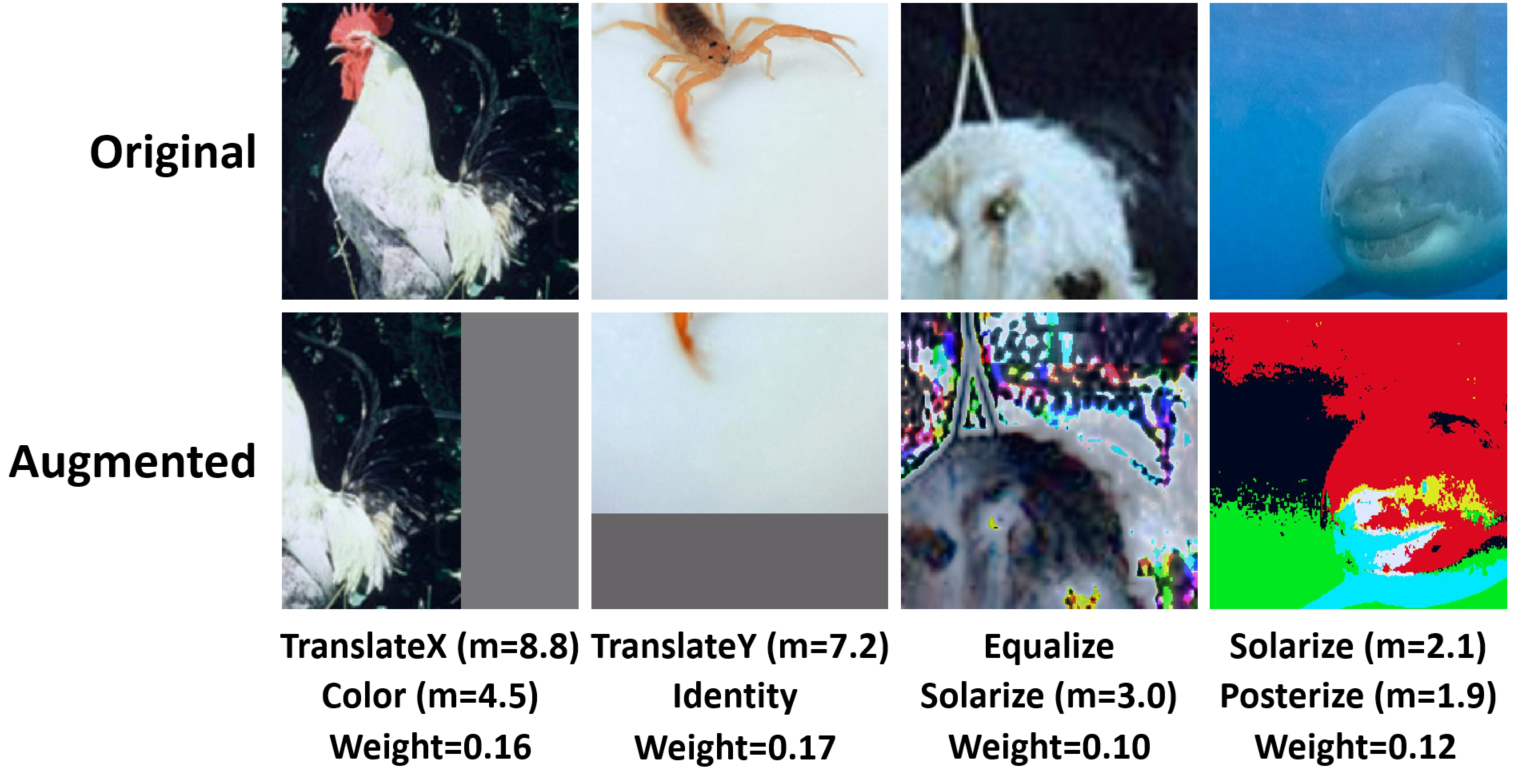}
		\caption{Augmented samples with low weights}
		\label{fig:low_weights}
	\end{subfigure}
	\caption{Examples of augmented samples with (a) high and (b) low weights on ImageNet.}
	\label{fig:weights}
	\vspace{-5pt}
\end{figure}

\noindent\textbf{ImageNet.} ImageNet consists of colored images in 1,000 classes, with about 1.2 million images for training. For each class, we hold out 2\% of training images for validation. We compare MetaAugment with Baseline, AA, FAA, DADA, RA, and AdvAA on ResNet-50~\cite{he2016deep} and ResNet-200~\cite{he2016identity}. The Baseline models are trained with the standard Inception-style pre-processing~\cite{szegedy2015going}. For MetaAugment, the transformation is applied after random cropping, resizing to $224\times224$, and horizontal flipping with 50\% probability. 

The results are presented in Table~\ref{table:imagenet}. MetaAugment outperforms all the other automated data augmentation methods. The model ResNet-50 is trained with Multiple-Transformation-per-sample trick, i.e., each training sample in a mini-batch is augmented by 4 transformations. By assigning proper weights to the augmented samples, MetaAugment achieves superior performance. 
The estimated distribution of transformations is visualized in Figure~\ref{fig:heat_maps}. Transformations with Sharpness, ShearX, and ShearY have high probability values, while transformations with Equalize, Solarize, and Posterize have low probability values. To illustrate the necessity of sample-aware data augmentation, we display some augmented samples with high and low learned weights in Figure~\ref{fig:weights}. Similar transformations may have very different effects on different images. The policy network imposes high weights on the augmented images with elephant and duck that increase the diversity of training data, and imposes low weights on the augmented images with cock and scorpion that lose semantic information caused by the translation. Even for transformations with Equalize, Solarize, and Posterize that have low priority at the dataset level, the policy network is learned to assign high weights to informative images augmented by such transformations, as shown in Figure~\ref{fig:high_weights}.

\subsection{Ablation Studies}

\noindent\textbf{Transformation Sampler.} 
In the transformation sampler module, 
the hyper-parameter $\epsilon$ in Eq.~\eqref{probability} determines the probability of random sampling transformations. To investigate the influence of $\epsilon$, we conduct experiments on Omniglot with task network WRN-28-10. The mean test accuracy and Std Dev over five runs with different values of $\epsilon$ are depicted in Figure~\ref{fig:epsilon}. As expected, sampling transformations according to the estimated distribution with a certain randomness ($\epsilon=0.1$) outperforms random sampling ($\epsilon=1.0$). 

\begin{figure}[t]
    \centering
    \includegraphics[width=.6\linewidth]{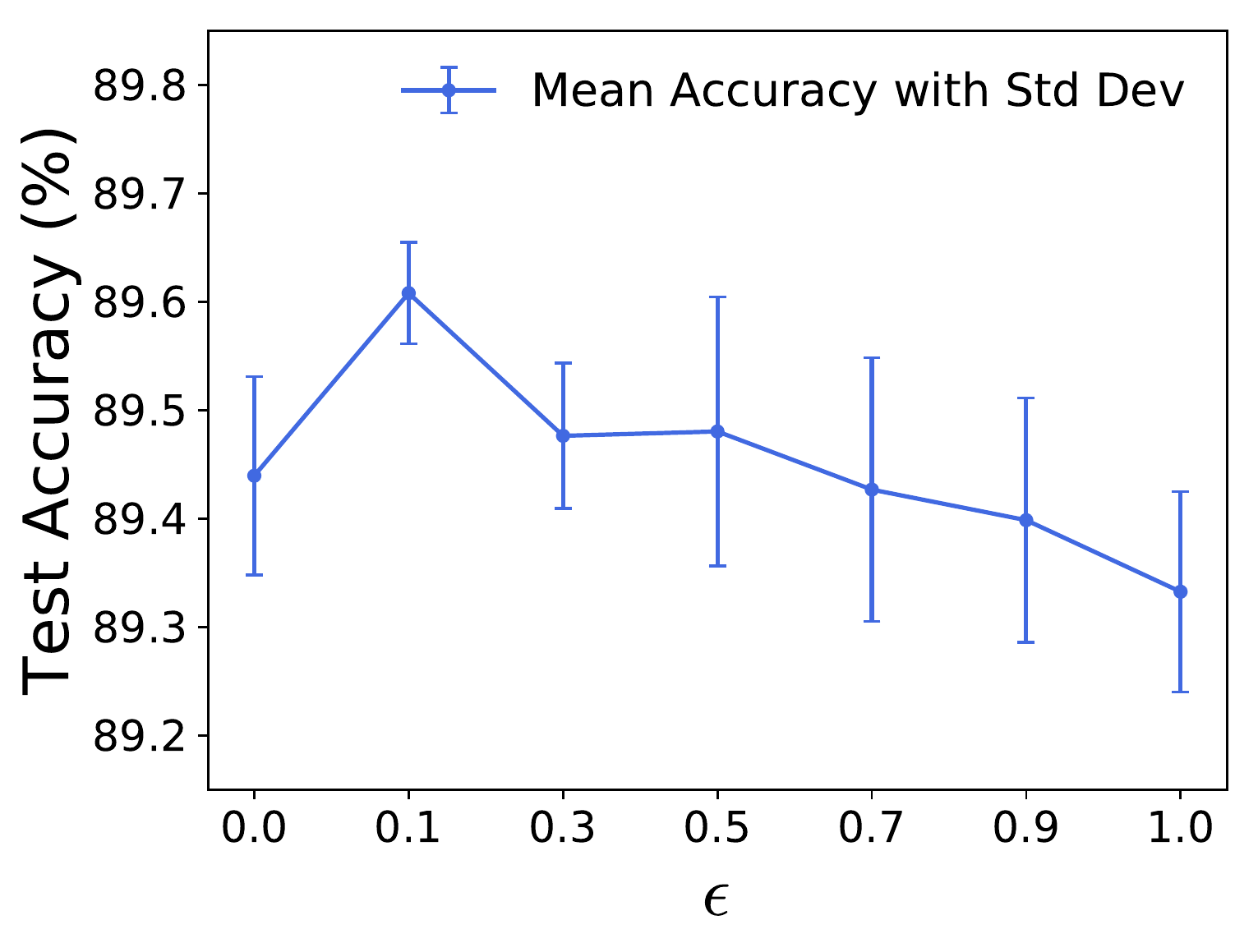}
    \vspace{-10pt}
    \caption{Test accuracy (averaged over five runs) of WRN-28-10 trained on Omniglot with different values of $\epsilon$.}
    \label{fig:epsilon}
    \vspace{-5pt}
\end{figure}

\noindent\textbf{Augmentation Policy Network.} 
To demonstration the contributions of all the components in the policy network, we compare different designs of the policy network. We conduct experiments on the cases that the policy network does not take the transformation embedding as input and the policy network has its own feature extractor. The comparison results of WRN-28-10 trained on CIFAR and Omniglot are shown in Table~\ref{table:ablation}.

First, we observe that the policy network with Transformation Embedding (w.TE)  as input achieves $0.3\%$ higher accuracy than that without TE (o.TE) in average. That means TE contains additional information beyond the images. For example, both the augmented sample and the hard negative in the first row of Figure~\ref{fig:omniglot_visual} look like vertical lines, but can be generated by different transformations (TranslateY and Identity, respectively) and have different labels. With TE as input, the policy network is learned to impose different weights on them. On the other hand, the dimension of TE (28 in our setting) is much lower than that of the image feature (640 in WRN-28-10), so the TE branch hardly increases the computing cost.

Secondly, we evaluate the performance of the policy network with its own feature extractor (own FE) and that shared a common one with the task network (share FE). The latter one performs consistently better than the former one. Also, the former one takes more training time ($1.2\times$ more real running-time) since the feature extraction is repeated twice for the policy network and the task network, respectively.

\begin{table}[h]
    \centering
    \caption{Top-1 test accuracy (\%) of WRN-28-10 with different designs of the policy network.}
    \label{table:ablation}
    \begin{adjustbox}{max width=.47\textwidth}
        \begin{tabular}{lcc|cc}
            \toprule
    		\toprule
    		Dataset & o.TE & w.TE & own FE & share FE \\
    		\midrule
    		CIFAR-10 & 97.58 & \textbf{97.76} & 97.59 & \textbf{97.76} \\
    		CIFAR-100 & 83.49 & \textbf{83.79} & 83.68 & \textbf{83.79} \\
    		Omniglot & 89.29 & \textbf{89.61} & 89.29 & \textbf{89.61} \\
    		\bottomrule
    		\bottomrule
        \end{tabular}
    \end{adjustbox}
    \vspace{-13pt}
\end{table}

\section{Conclusions}

In this paper, a sample-aware augmentation policy network is proposed to reweight augmented samples. We leverage the mechanism of meta-learning and use gradient-based optimization instead of non-differentiable approaches or reinforcement learning, which can balance the learning efficiency and model performance. 
As expected, the learned policy network can distinguish informative augmented images from the junks and thus greatly reduce the noises caused by intensive data augmentation. Extensive experiments demonstrate the superiority of the proposed method to the existing methods using dataset-level augmentation policies.

{
	\bibliography{reference}
}

\clearpage

\onecolumn

\appendix

\setcounter{secnumdepth}{10}
\setcounter{theorem}{0}

\section*{MetaAugment: Sample-Aware Data Augmentation Policy Learning \\ Appendix}

\vspace{30pt}

\section{Transferability of MetaAugment}

According to the updating rules, MetaAugment requires three forward and backward passes of the task network in each iteration, which makes it take $3\times$ training time than a standard training of a task network. However, once trained, the policy network $\cV(\cdot,\cdot\,;\bth^{(T)})$, together with the task network $f(\cdot\,;\bw^{(T)})$ and the estimated distribution $\{p_{j,k}\}_{j,k=1}^{K}$ can be transferred to train different networks with data augmentation on the same dataset. The transfer training of MetaAugment takes almost the same computing cost as the regular training and thus is $3\times$ less computing cost than MetaAugment with joint training.
Specifically, let $g(\cdot\,;\bv)$ be a new task network with parameters $\bv$. For each iteration, a mini-batch of training data $\{(x_i, y_i)\}_{i=1}^{n^{tr}}$ is sampled. Also, a mini-batch of transformations $\{\cT_{j_i,k_i}^{m_1,m_2}\}_{i=1}^{n^{tr}}$ is sampled according to $\{p_{j,k}\}_{j,k=1}^{K}$ and the batch data are augmented by the transformations.
Then the update of $\bv$ in iteration $t+1$ is 
\begin{linenomath}
\begin{equation*}
\bv^{(t+1)} = \bv^{(t)} - \gamma \frac{1}{n^{tr}} \sum_{i=1}^{n^{tr}} \cV_{i}(\bw^{(T)},\bth^{(T)}) \nabla_{\bv} L_{i}(\bv)\big|_{\bv^{(t)}},
\end{equation*}
\end{linenomath}
where $\cV_{i}(\bw^{(T)},\bth^{(T)})=\cV(f(\cT_{j_i,k_i}^{m_1,m_2}(x_i);\bw^{(T)}),e(\cT_{j_i,k_i}^{m_1,m_2});\bth^{(T)})$ and $L_{i}(\bv)=\ell(g(\cT_{j_i,k_i}^{m_1,m_2}(x_i); \bv),y_i)$. 
Different from~\cite{cubuk2019autoaugment,ho2019population,lim2019fast,zhang2020adversarial}, in which the transferred policies are combinations of image processing functions with fixed magnitudes, MetaAugment can sample transformations with all possible magnitudes and evaluate the effectiveness of different transformations for different samples when training a new network.

\begin{wraptable}{R}{0.45\textwidth}
	\centering
	\vspace{-5pt}
	\caption{Top-1 test accuracy (\%) of WRN-40-2 on CIFAR.}
	\label{table:transfer}
	\vspace{-0.2cm}
	\begin{adjustbox}{max width=0.45\textwidth}
		\begin{tabular}{lcc}
			\toprule
            \toprule
			Dataset & MetaAugment (joint) & MetaAugment (transfer)  \\
			\midrule
			CIFAR-10 & 96.79$\pm$0.06 & 96.82$\pm$0.10  \\
			CIFAR-100 & 80.60$\pm$0.16 & 80.15$\pm$0.09  \\
			\bottomrule
			\bottomrule
		\end{tabular}
	\end{adjustbox}
	\vspace{-0.2cm}
\end{wraptable}

To demonstrate that the learned augmentation policies can be transferred across different task networks, we visualize the distributions of transformations learned with WRN-28-10 and WRN-40-2 on CIFAR-10, CIFAR-100, and Omniglot in Figure~\ref{fig:distribution_appendix}. It can be seen that the learned policies are not sensitive to network architectures. We also compare the results of joint training with the results of transfer training using WRN-40-2 as the task network in Table~\ref{table:transfer}. For the transfer training setting, the transferred policy network is jointly trained with WRN-28-10. Compared with joint training, MetaAugment with transfer training shows similar result on CIFAR-10 and slightly worse result on CIFAR-100. Considering the efficiency of transfer training, we regard it as a good alternative to MetaAugment with joint training. 
In our experiments, Shake-Shake and PyramidNet+ShakeDrop in Table~\ref{table:cifar} are trained with the transferred policy network jointly learned with WRN-28-10 and ResNet-200 in Table~\ref{table:imagenet} is trained with the transferred policy network jointly learned with ResNet-50. The results show the strong transferability of MetaAugment.

\begin{figure}[htbp]
	\begin{subfigure}{.33\textwidth}
		\centering
		\includegraphics[width=.9\linewidth]{figs/heats_cifar10_28_10_bs128_epoch_599.png}
		\caption{WRN-28-10 on CIFAR-10}
		\label{fig:distribution_28_10_cifar10_appendix}
	\end{subfigure}
	\begin{subfigure}{.33\textwidth}
		\centering
		\includegraphics[width=.9\linewidth]{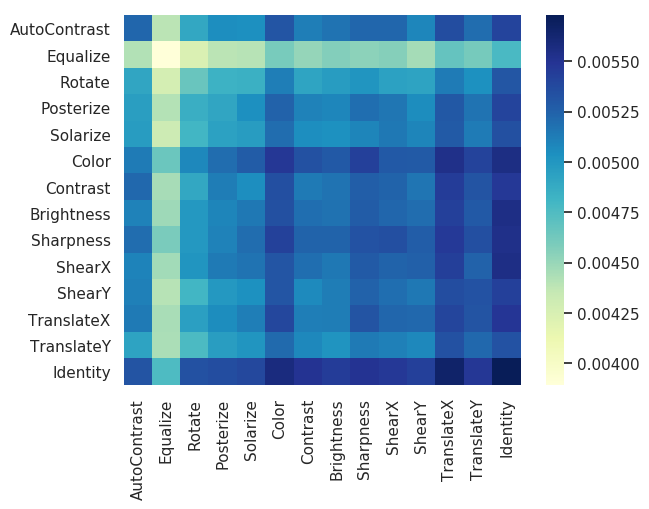}
		\caption{WRN-40-2 on CIFAR-10}
		\label{fig:distribution_40_2_cifar10_appendix}
	\end{subfigure}
	\begin{subfigure}{.33\textwidth}
		\centering
		\includegraphics[width=.9\linewidth]{figs/heats_cifar100_28_10_bs128_epoch_199.png}
		\caption{WRN-28-10 on CIFAR-100}
		\label{fig:distribution_28_10_cifar100_appendix}
	\end{subfigure}
	\vspace{8pt}
	
	\begin{subfigure}{.33\textwidth}
		\centering
		\includegraphics[width=.9\linewidth]{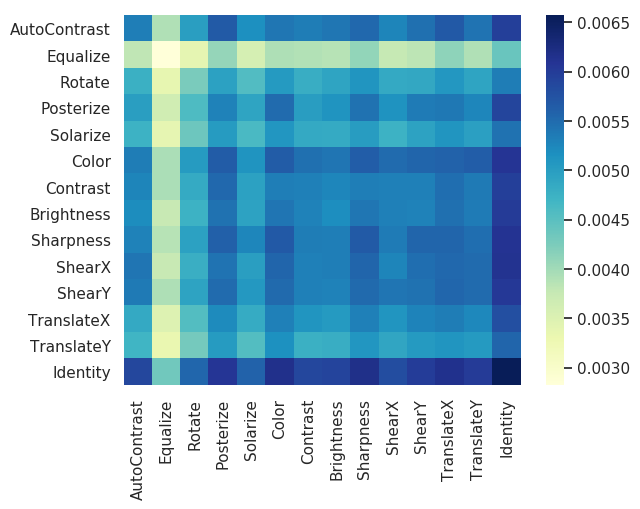}
		\caption{WRN-40-2 on CIFAR-100}
		\label{fig:distribution_40_2_cifar100_appendix}
	\end{subfigure}
	\begin{subfigure}{.33\textwidth}
		\centering
		\includegraphics[width=.9\linewidth]{figs/heats_om_28_10_bs128_ep01_epoch_199.png}
		\caption{WRN-28-10 on Omniglot}
		\label{fig:distribution_28_10_omniglot_appendix}
	\end{subfigure}
	\begin{subfigure}{.33\textwidth}
		\centering
		\includegraphics[width=.9\linewidth]{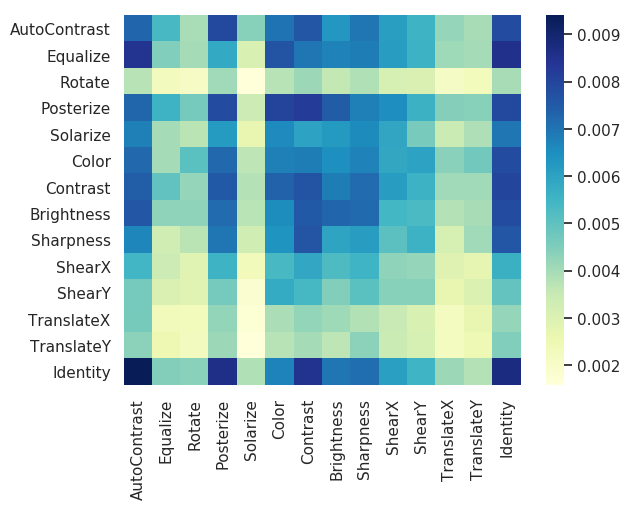}
		\caption{WRN-40-2 on Omniglot}
		\label{fig:distribution_40_2_omniglot_appendix}
	\end{subfigure}
	\caption{Distributions of transformations learned with WRN-28-10 and WRN-40-2 on CIFAR-10, CIFAR-100, and Omniglot.}
	\label{fig:distribution_appendix}
\end{figure}

\section{Implementation Details of MetaAugment}

\noindent\textbf{CIFAR-10.} On CIFAR-10, WRN-28-10 and WRN-40-2 are jointly trained with the policy network. Both of them are trained for 600 epochs by SGD with a momentum 0.9, weight decay of $5\times 10^{-4}$, batch size of 128, initial learning rate of 0.1, and cosine learning rate decay. The policy network is trained for the same number of iterations as the task network by SGD with a momentum 0.9, weight decay of $5\times 10^{-4}$, batch size of 128, and a fixed learning rate $1\times 10^{-3}$. The distribution of transformations is updated every one epoch with the outputs of the policy network in the last 50 epochs. The hyper-parameter $\epsilon$ that determines the probability of random sampling transformations is set to 0.1. We adopt the transfer training of MetaAugment for Shake-Shake (26 2x96d), Shake-Shake (26 2x112d), and PyramidNet+ShakeDrop. Shake-Shake (26 2x96d) and Shake-Shake (26 2x112d) are trained for 1,800 epochs by SGD with a momentum 0.9, weight decay of $1\times 10^{-3}$, batch size of 128, initial learning rate of 0.01, and cosine learning rate decay. PyramidNet+ShakeDrop is trained for 1,800 epochs by SGD with a momentum 0.9, weight decay of $1\times 10^{-4}$, batch size of 128, initial learning rate of 0.1, and cosine learning rate decay. 

\noindent\textbf{CIFAR-100.} On CIFAR-100, WRN-28-10 and WRN-40-2 are also jointly trained with the policy network. They are trained with the same hyper-parameters as those used on CIFAR-10 except that WRN-28-10 is trained for 200 epochs and the corresponding distribution of transformations is updated with the outputs of the policy network in the last 20 epochs. For the transfer training of MetaAugment, Shake-Shake (26 2x96d) is trained for 1,800 epochs by SGD with a momentum 0.9, weight decay of $2.5\times 10^{-3}$, batch size of 128, initial learning rate of 0.01, and cosine learning rate decay. PyramidNet+ShakeDrop is trained for 1,800 epochs by SGD with a momentum 0.9, weight decay of $5\times 10^{-4}$, batch size of 128, initial learning rate of 0.05, and cosine learning rate decay.

\noindent\textbf{Omniglot.} For MetaAugment, WRN-28-10 and WRN-40-2 are trained for 200 epochs and the distribution of transformations is updated with the outputs of the policy network in the last 20 epochs. Other hyper-parameters are the same as those used for WRN-28-10 and WRN-40-2 on CIFAR-10. For RA, we use the same implementation as MetaAugment except that we randomly sample transformations and adopt the same weight for augmented samples. For FAA, we follow the setting that the augmentation policy is searched directly on the full dataset (without the test data) given the task network. We use the same image processing functions as MetaAugment and follow the hyper-parameter setting in FAA when searching for the policy. For PBA, we follow the setting that the augmentation schedules are searched on a reduced Omniglot of 3,246 training images, two images for each character. We also use the same image processing functions as MetaAugment and run 16 trials on WRN-40-2 to generate the augmentation schedules. When using the searched policies to finally train WRN-28-10 and WRN-40-2, we adopt the same hyper-parameters as those used by MetaAugment for FAA and PBA.

\noindent\textbf{ImageNet.} On ImageNet, ResNet-50 is jointly trained with the policy network. It is trained for 120 epochs by SGD with a momentum 0.9, weight decay of $1\times 10^{-4}$, batch size of $1024 \cdot 4$ (each training sample in a mini-batch is augmented by 4 transformations), initial learning rate of 0.4, and cosine learning rate decay. We also use a gradual warmup strategy that increases the learning rate from 0.08 to 0.4 linearly in the first 5 epochs, label smoothing with magnitude 0.1, and gradient clipping with magnitude 5. Since ImageNet is a very challenging dataset, we first train ResNet-50 with random data augmentation for 40 epochs to get a pre-trained feature extractor and then train it with the policy network jointly for 40 epochs. Finally, we fix the policy network and train ResNet-50 as the transfer training of MetaAugment for 40 epochs. The policy network is trained by SGD with a momentum 0.9, weight decay of $5\times 10^{-4}$, batch size of 2048, a fixed learning rate $4\times 10^{-3}$, and gradient clipping with magnitude 5. The distribution of transformations is updated every one epoch with the outputs of the policy network in the last 20 epochs. The hyper-parameter $\epsilon$ is set to 0.1. For the transfer training of MetaAugment, ResNet-200 is trained with the same setting as that of ResNet-50 except that it is trained for 150 epochs with batch size of 512 and initial learning rate of 0.2.

\section{Convergence Tendency of Loss Curve}
To illustrate the convergence properties of our algorithm, we plot the weighted training and validation loss curves of WRN-28-10 and WRN-40-2 trained on CIFAR-10, CIFAR-100, and Omniglot in Figure~\ref{fig:curves_appendix}. It can be seen that the validation loss curves converge in all cases, which is consistent with our theorem. In addition, the training loss curves of WRN-28-10 and WRN-40-2 trained on CIFAR-10 and CIFAR-100 also converge, while the training loss curves fluctuate on Omniglot. As stated in Theorem~\ref{theorem2}, the convergence of the training loss is not ensured since the policy network depends on the feature extractor of the task network. These empirical results further confirm the theoretical results.

\begin{figure}[htbp]
	\begin{subfigure}{.33\textwidth}
		\centering
		\includegraphics[width=.9\linewidth]{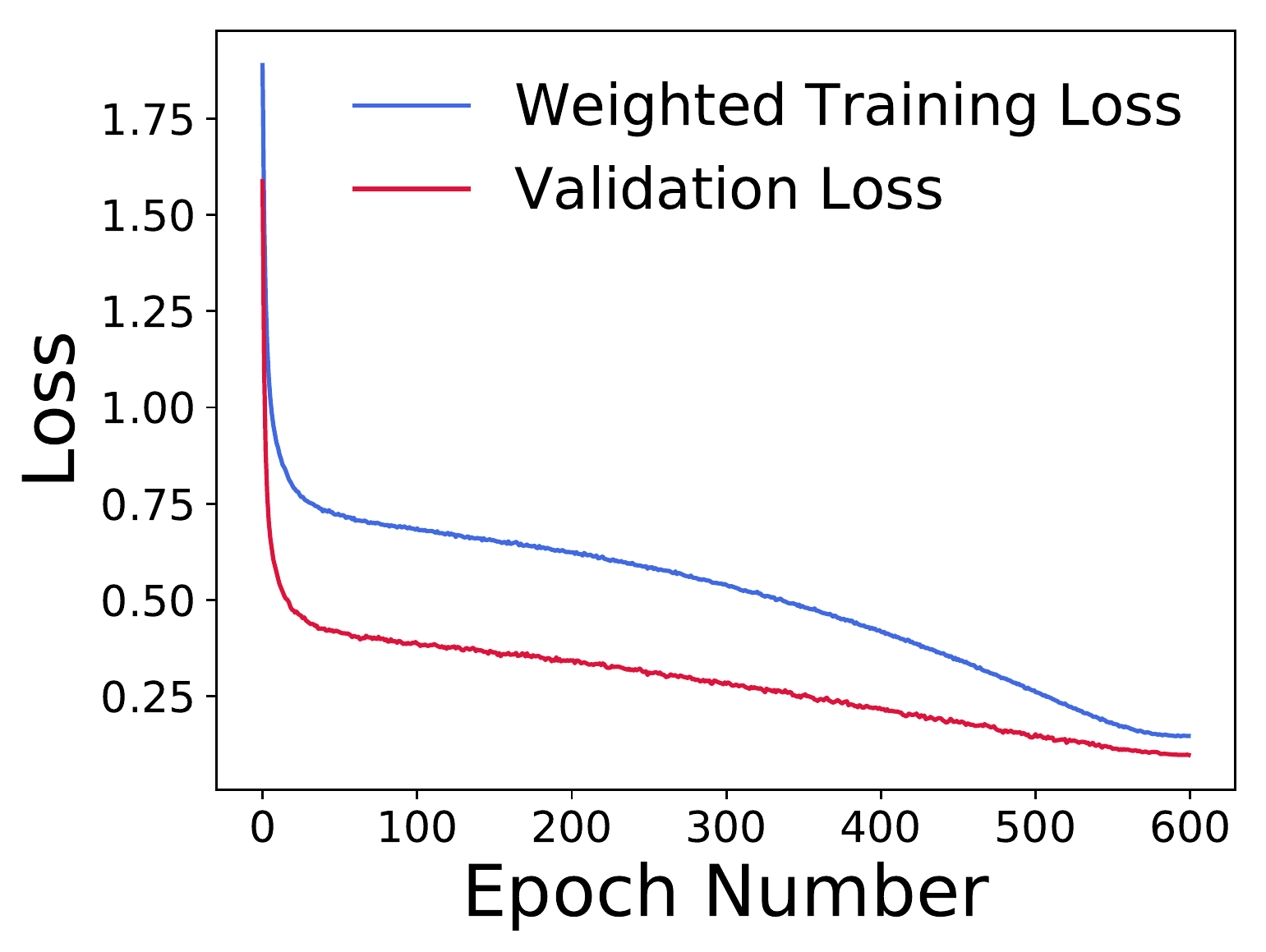}
		\caption{WRN-28-10 on CIFAR-10}
		\label{fig:curve_28_10_cifar10_appendix}
	\end{subfigure}
	\begin{subfigure}{.33\textwidth}
		\centering
		\includegraphics[width=.9\linewidth]{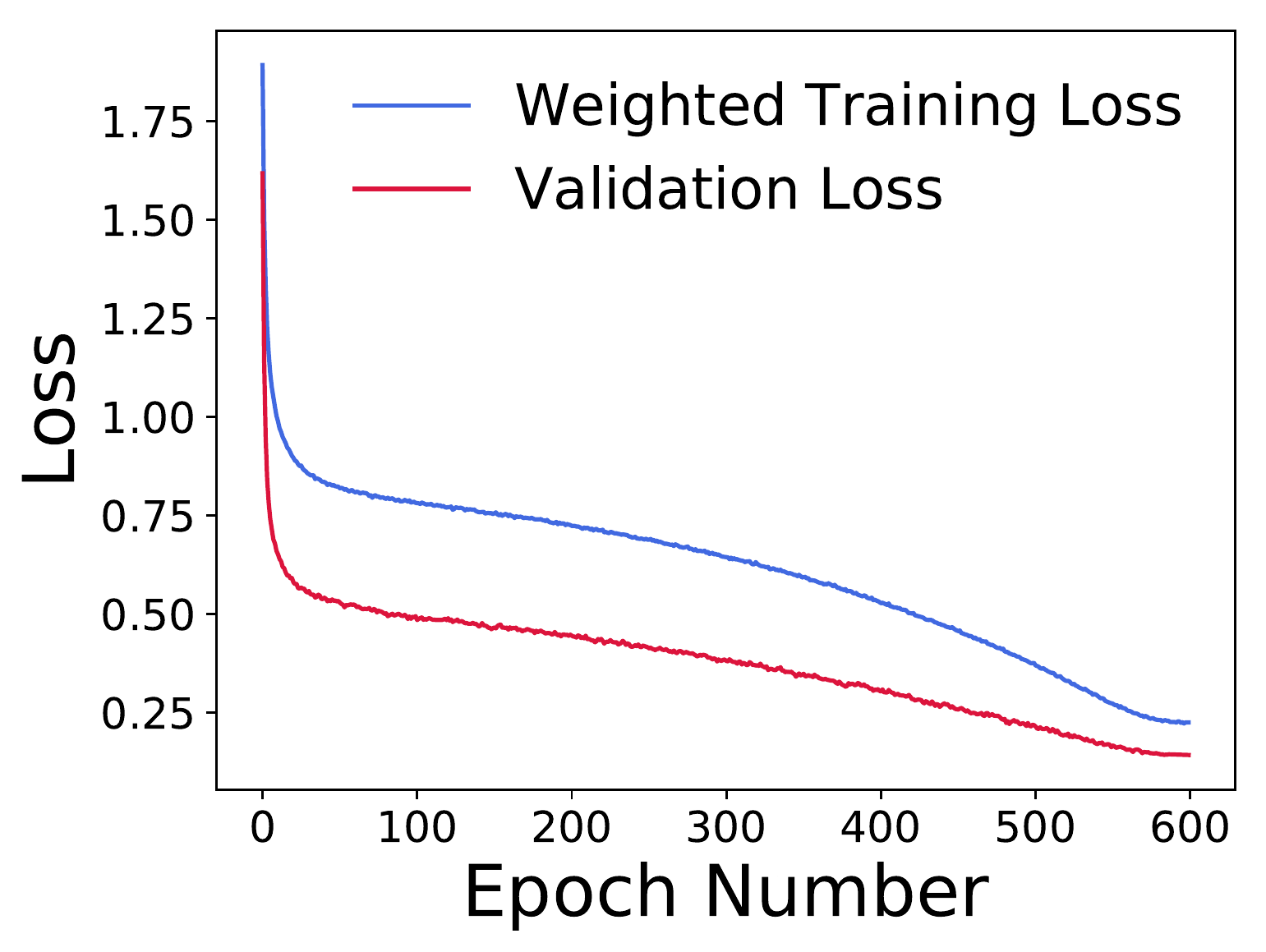}
		\caption{WRN-40-2 on CIFAR-10}
		\label{fig:curve_40_2_cifar10_appendix}
	\end{subfigure}
	\begin{subfigure}{.33\textwidth}
		\centering
		\includegraphics[width=.9\linewidth]{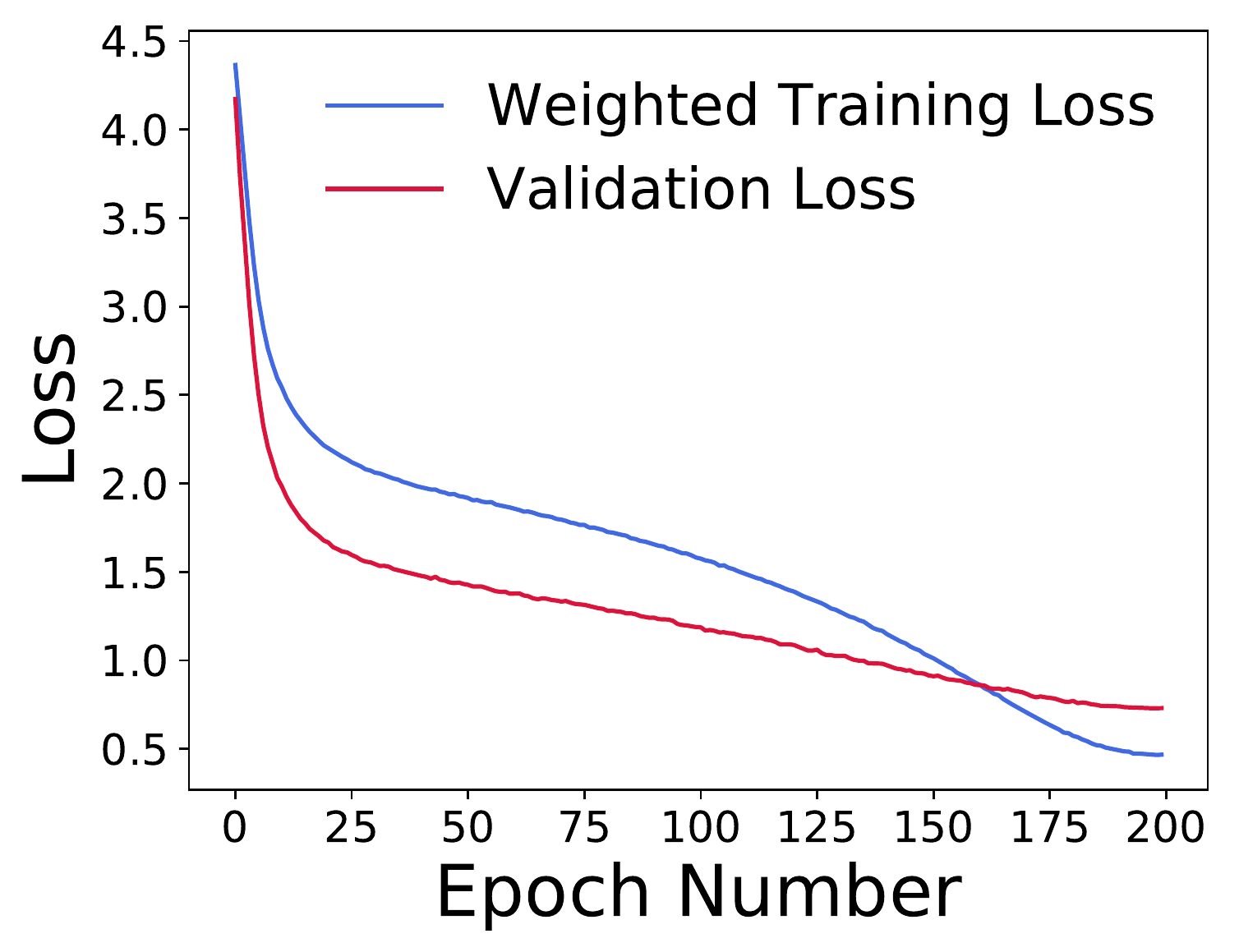}
		\caption{WRN-28-10 on CIFAR-100}
		\label{fig:curve_28_10_cifar100_appendix}
	\end{subfigure}
	\vspace{8pt}
	
	\begin{subfigure}{.33\textwidth}
		\centering
		\includegraphics[width=.9\linewidth]{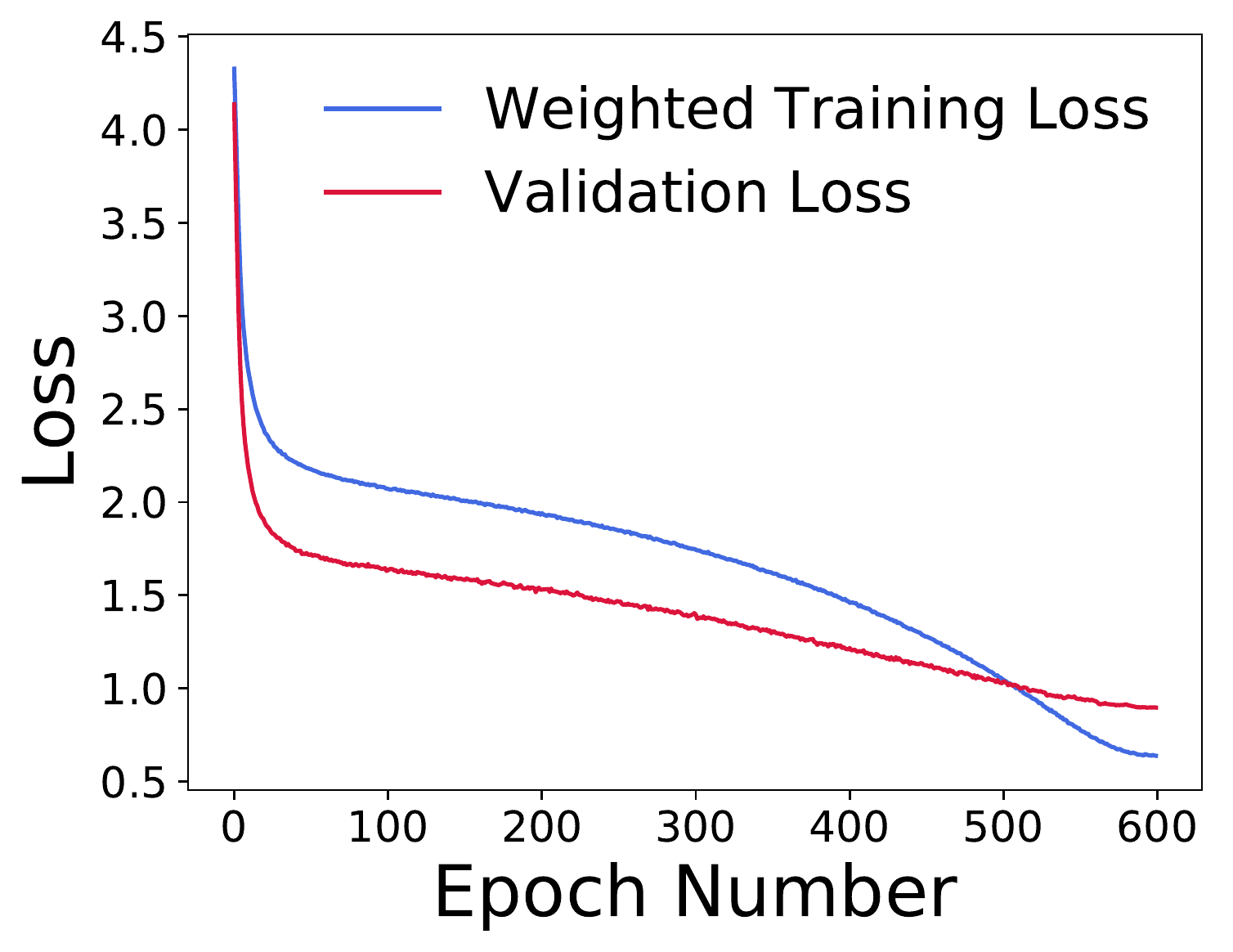}
		\caption{WRN-40-2 on CIFAR-100}
		\label{fig:curve_40_2_cifar100_appendix}
	\end{subfigure}
	\begin{subfigure}{.33\textwidth}
		\centering
		\includegraphics[width=.9\linewidth]{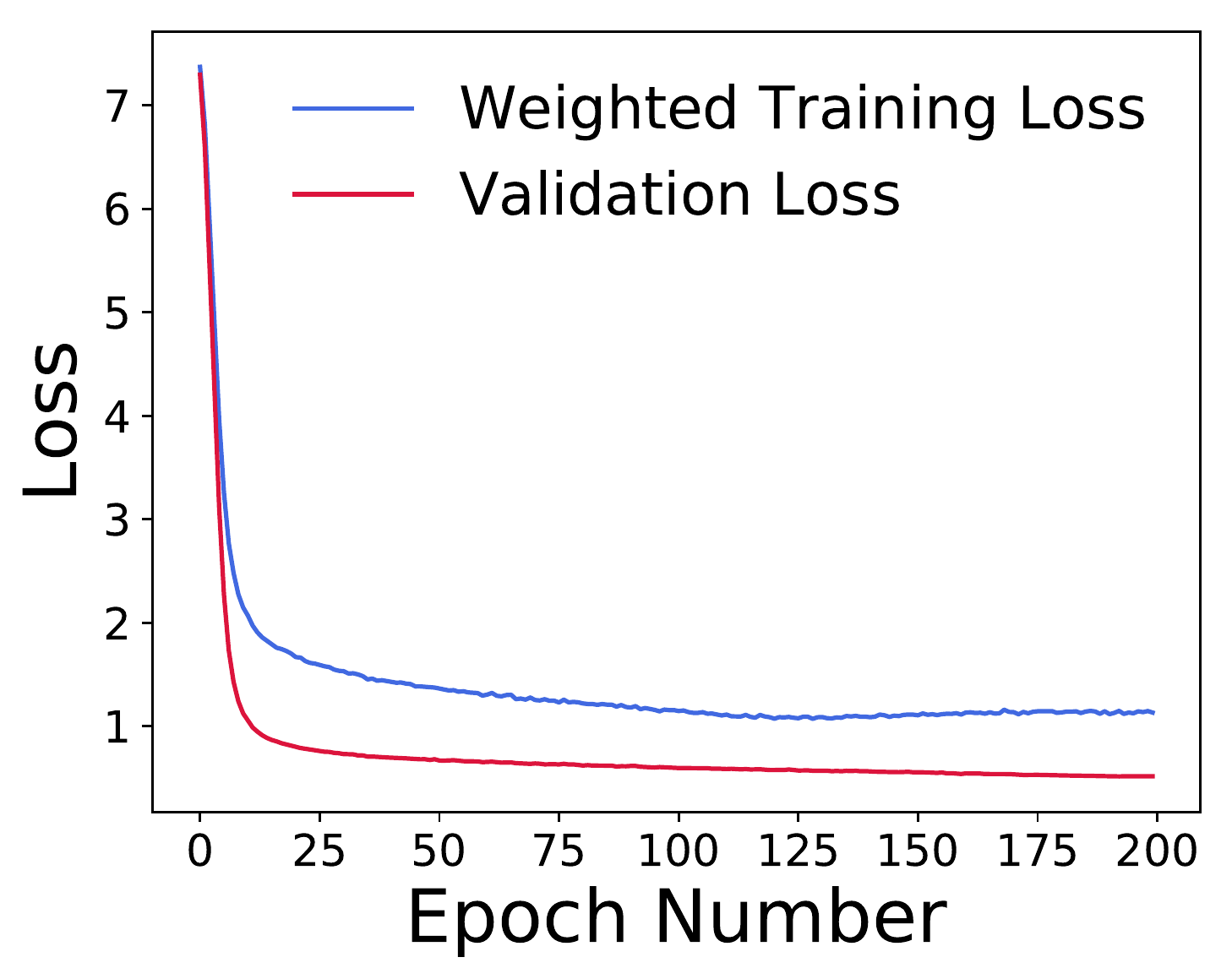}
		\caption{WRN-28-10 on Omniglot}
		\label{fig:curve_28_10_omniglot_appendix}
	\end{subfigure}
	\begin{subfigure}{.33\textwidth}
		\centering
		\includegraphics[width=.9\linewidth]{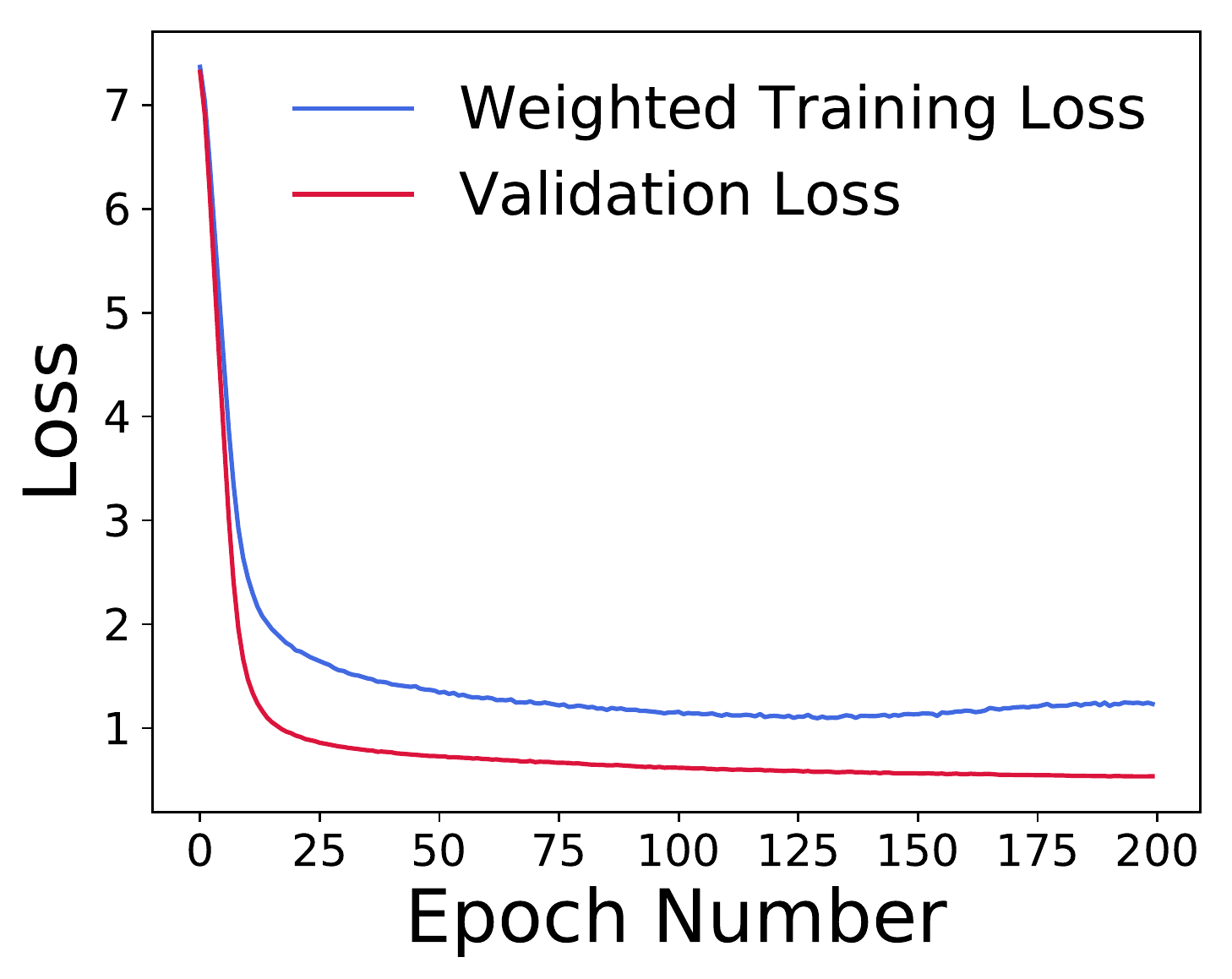}
		\caption{WRN-40-2 on Omniglot}
		\label{fig:curve_40_2_omniglot_appendix}
	\end{subfigure}
	\caption{Weighted training and validation loss curves of WRN-28-10 and WRN-40-2 trained on CIFAR-10, CIFAR-100, and Omniglot.}
	\label{fig:curves_appendix}
\end{figure}

\section{More Examples of Augmented Samples}
We display more augmented samples with high and low learned weights in Figure~\ref{fig:high_weights_appendix} and in Figure~\ref{fig:low_weights_appendix}, respectively. These images further illustrate the necessity and effectiveness of the learned policy network.

\begin{figure}[t]
\centering
\begin{tabular}{cm{1.67cm}m{1.67cm}m{1.67cm}m{1.67cm}m{1.67cm}m{1.67cm}}
{\tiny Original} & {\includegraphics[scale=0.093]{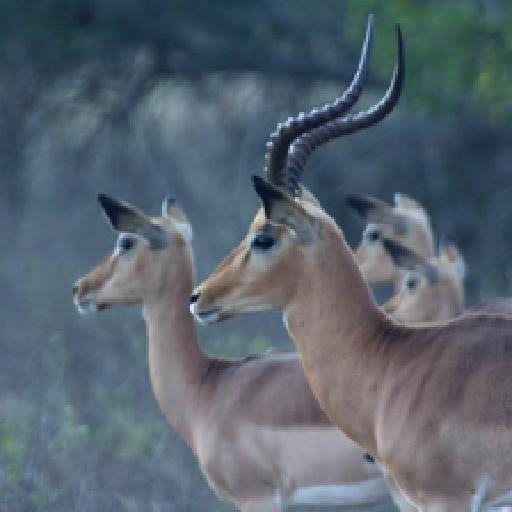}} & {\includegraphics[scale=0.093]{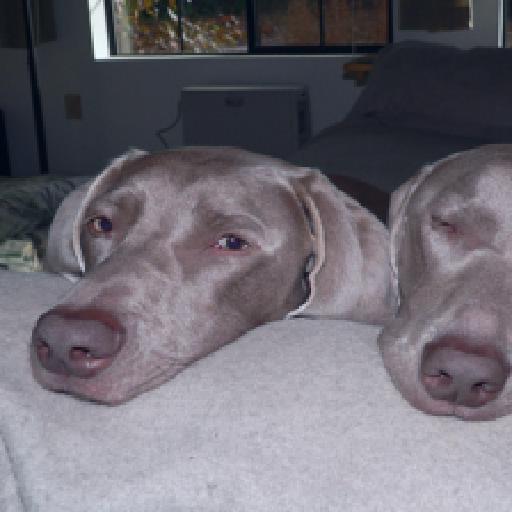}} & {\includegraphics[scale=0.093]{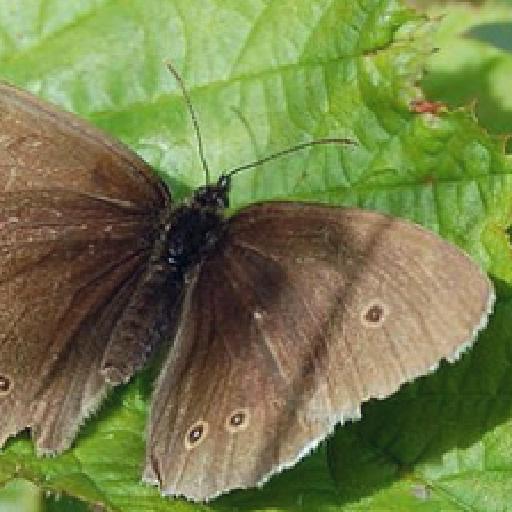}} & {\includegraphics[scale=0.093]{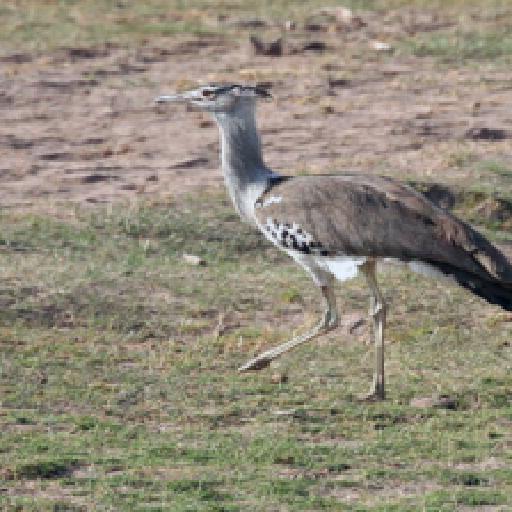}} & {\includegraphics[scale=0.093]{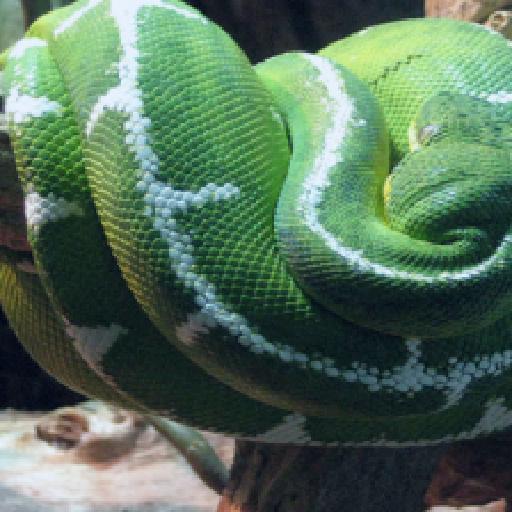}} & {\includegraphics[scale=0.093]{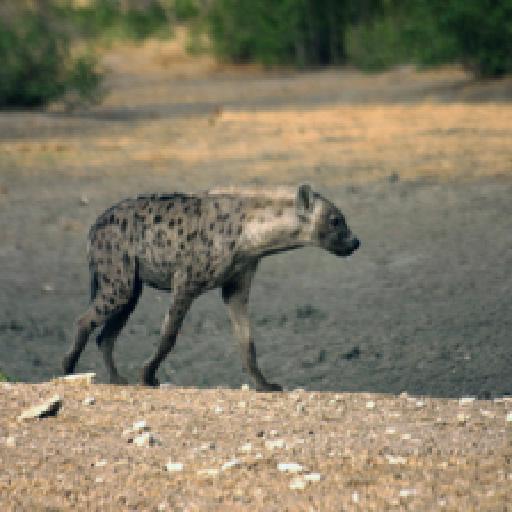}} \\
{\tiny Augmented} & {\includegraphics[scale=0.093]{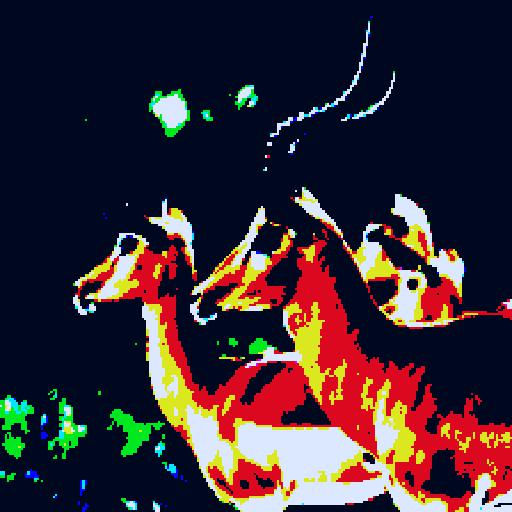}} & {\includegraphics[scale=0.093]{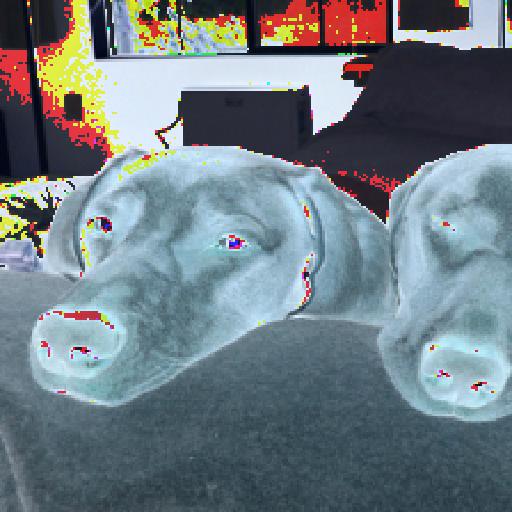}} & {\includegraphics[scale=0.093]{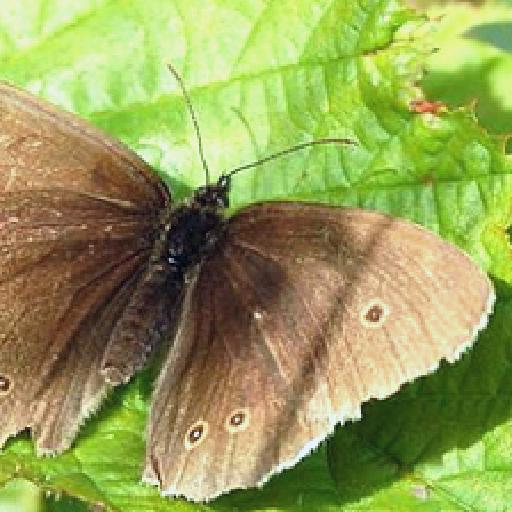}} & {\includegraphics[scale=0.093]{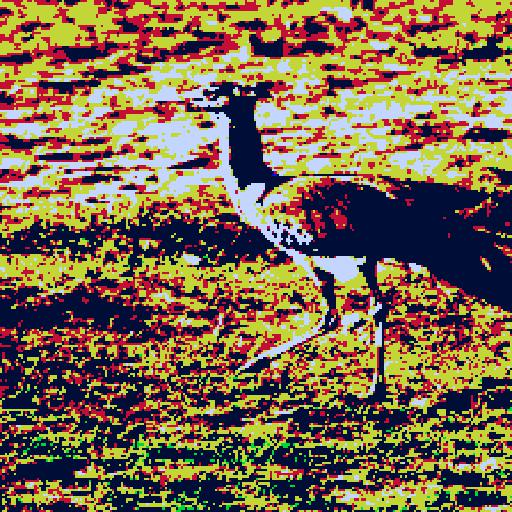}} & {\includegraphics[scale=0.093]{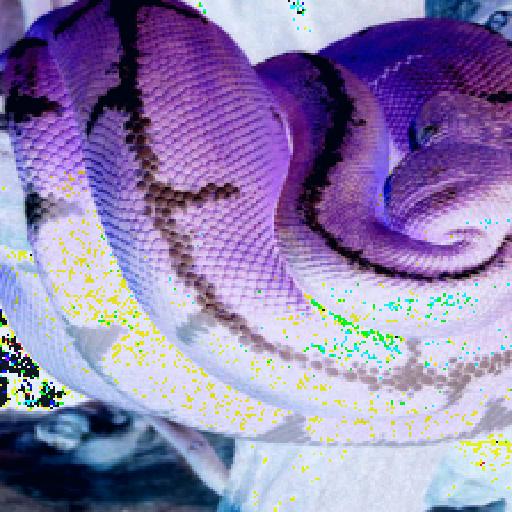}} & {\includegraphics[scale=0.093]{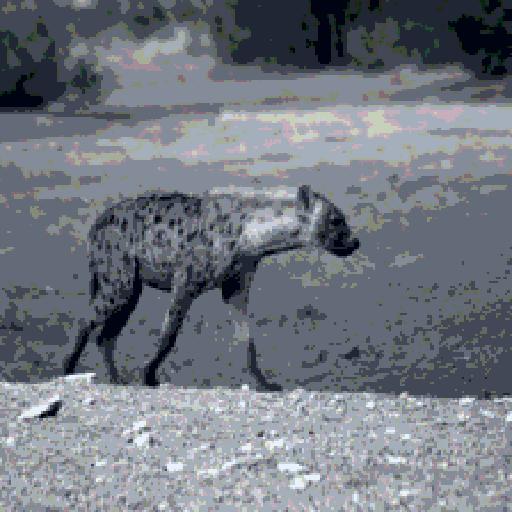}} \\
& \multicolumn{1}{c}{\tiny AutoContrast}  & \multicolumn{1}{c}{\tiny AutoContrast}  & \multicolumn{1}{c}{\tiny Brightness (m=7.3)} & \multicolumn{1}{c}{\tiny Brightness (m=1.8)} & \multicolumn{1}{c}{\tiny Brightness (m=5.6)} & \multicolumn{1}{c}{\tiny Color (m=0.2)} \\
& \multicolumn{1}{c}{\tiny Posterize (m=0.4)}  & \multicolumn{1}{c}{\tiny Solarize (m=1.8)}  & \multicolumn{1}{c}{\tiny Identity} & \multicolumn{1}{c}{\tiny Posterize (m=6.1)} & \multicolumn{1}{c}{\tiny Solarize (m=0.9)} & \multicolumn{1}{c}{\tiny Posterize (m=7.6))} \\
& \multicolumn{1}{c}{\tiny Weight=0.45}  & \multicolumn{1}{c}{\tiny Weight=0.44}  & \multicolumn{1}{c}{\tiny Weight=0.44} & \multicolumn{1}{c}{\tiny Weight=0.46} & \multicolumn{1}{c}{\tiny Weight=0.46} & \multicolumn{1}{c}{\tiny Weight=0.44} \\
{\tiny Original} & {\includegraphics[scale=0.093]{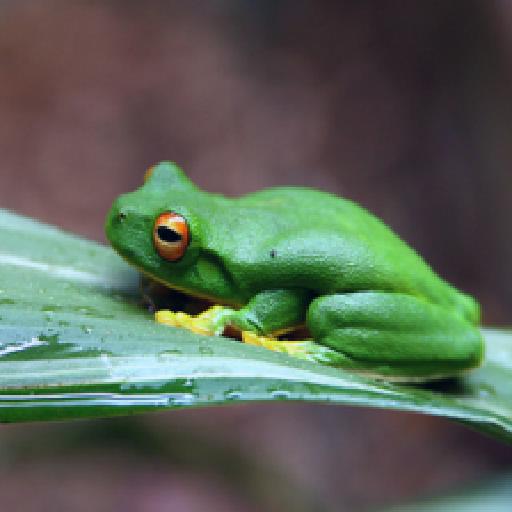}} & {\includegraphics[scale=0.093]{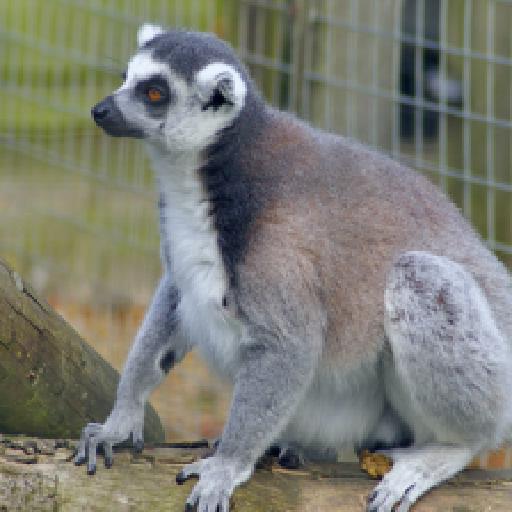}} & {\includegraphics[scale=0.093]{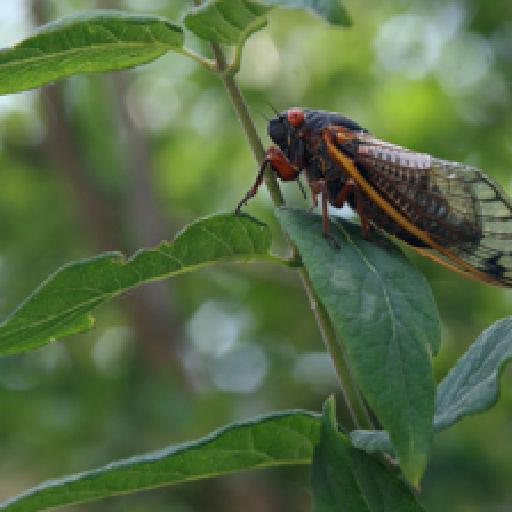}} & {\includegraphics[scale=0.093]{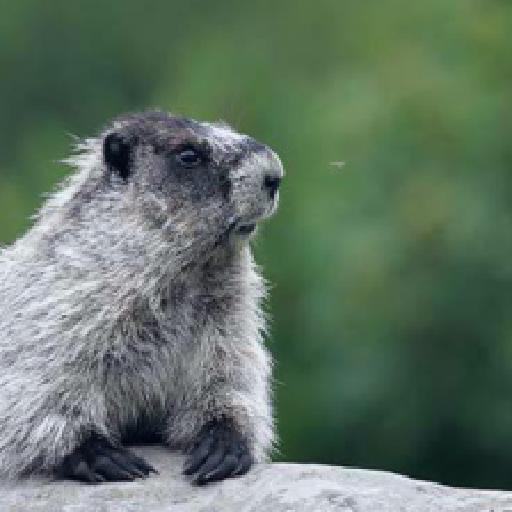}} & {\includegraphics[scale=0.093]{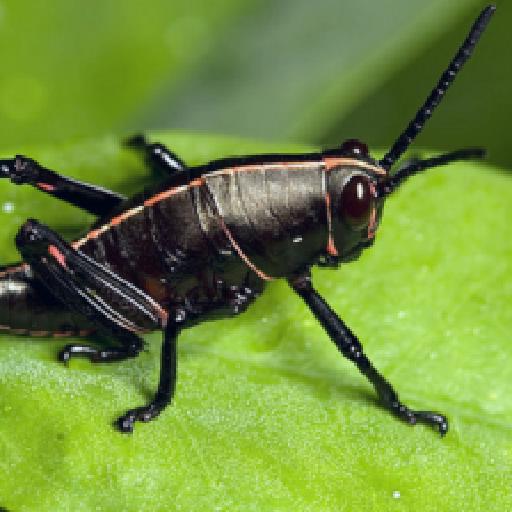}} & {\includegraphics[scale=0.093]{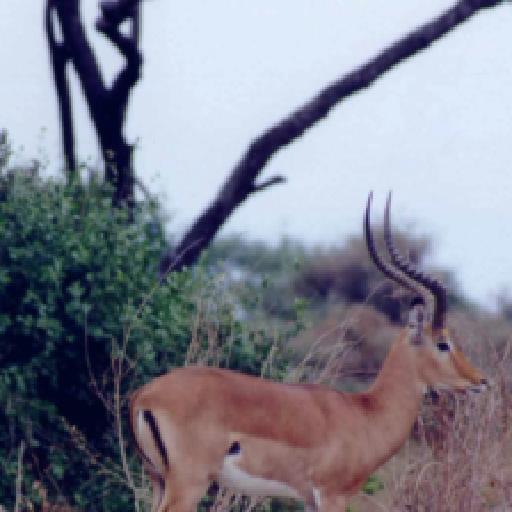}} \\
{\tiny Augmented} & {\includegraphics[scale=0.093]{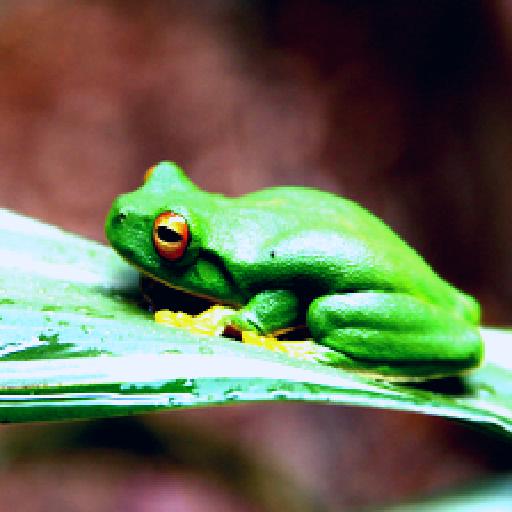}} & {\includegraphics[scale=0.093]{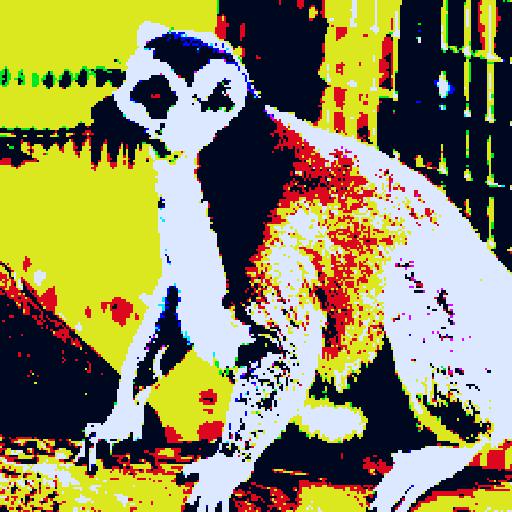}} & {\includegraphics[scale=0.093]{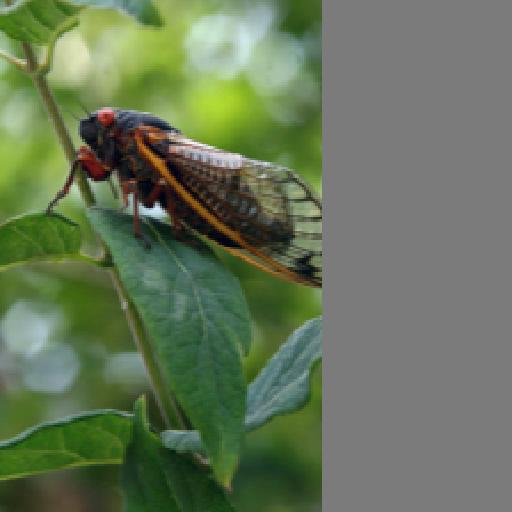}} & {\includegraphics[scale=0.093]{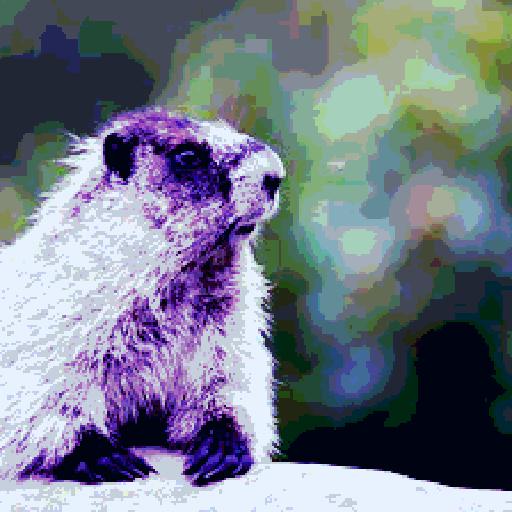}} & {\includegraphics[scale=0.093]{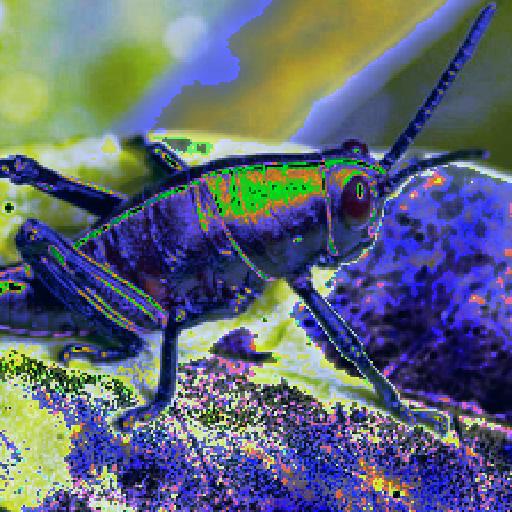}} & {\includegraphics[scale=0.093]{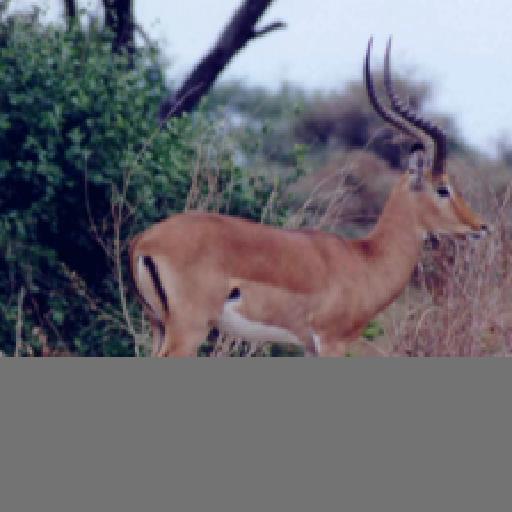}} \\
& \multicolumn{1}{c}{\tiny Contrast (m=9.2)}  & \multicolumn{1}{c}{\tiny Contrast (m=5.7)}  & \multicolumn{1}{c}{\tiny Contrast (m=6.2)} & \multicolumn{1}{c}{\tiny Equalize} & \multicolumn{1}{c}{\tiny Equalize} & \multicolumn{1}{c}{\tiny Identity} \\
& \multicolumn{1}{c}{\tiny Brightness (m=5.7)}  & \multicolumn{1}{c}{\tiny Posterize (m=2.0)}  & \multicolumn{1}{c}{\tiny TranslateX (m=8.2)} & \multicolumn{1}{c}{\tiny Posterize (m=7.9)} & \multicolumn{1}{c}{\tiny Solarize (m=6.8)} & \multicolumn{1}{c}{\tiny TranslateY (m=6.7))} \\
& \multicolumn{1}{c}{\tiny Weight=0.42}  & \multicolumn{1}{c}{\tiny Weight=0.44}  & \multicolumn{1}{c}{\tiny Weight=0.43} & \multicolumn{1}{c}{\tiny Weight=0.44} & \multicolumn{1}{c}{\tiny Weight=0.39} & \multicolumn{1}{c}{\tiny Weight=0.45} \\
{\tiny Original} & {\includegraphics[scale=0.093]{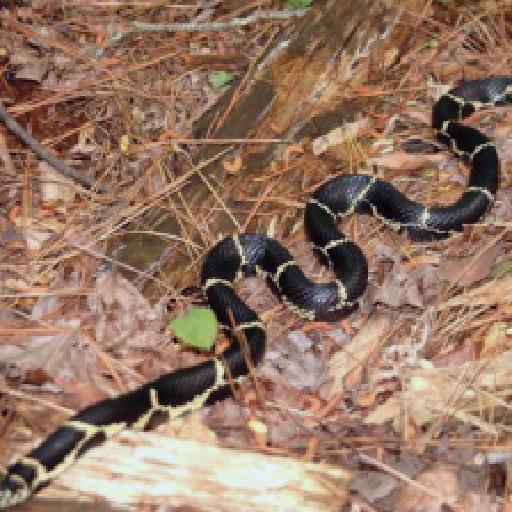}} & {\includegraphics[scale=0.093]{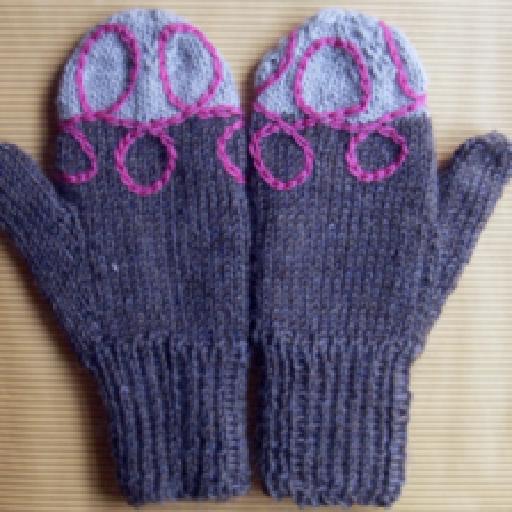}} & {\includegraphics[scale=0.093]{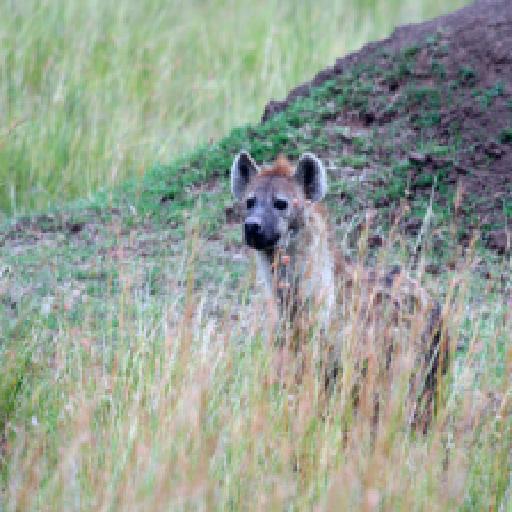}} & {\includegraphics[scale=0.093]{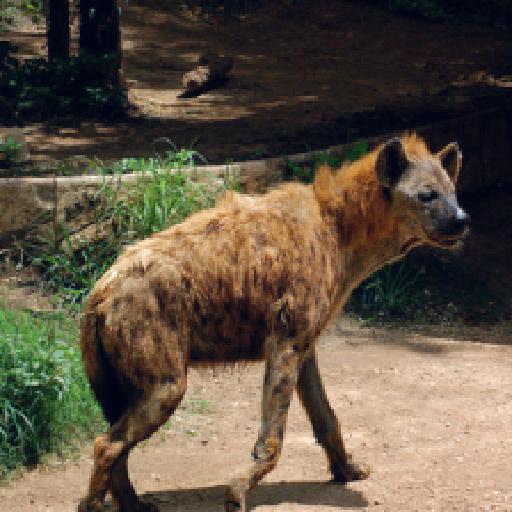}} & {\includegraphics[scale=0.093]{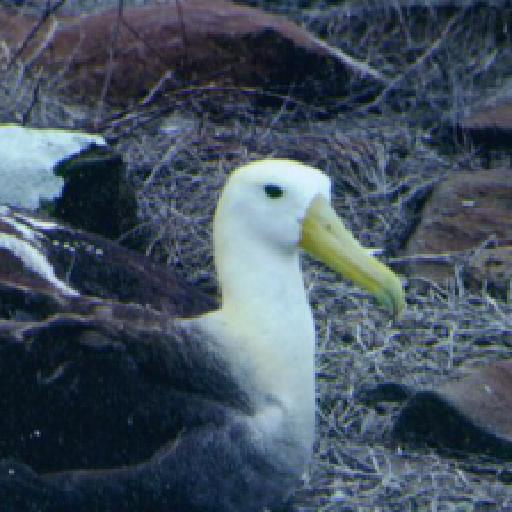}} & {\includegraphics[scale=0.093]{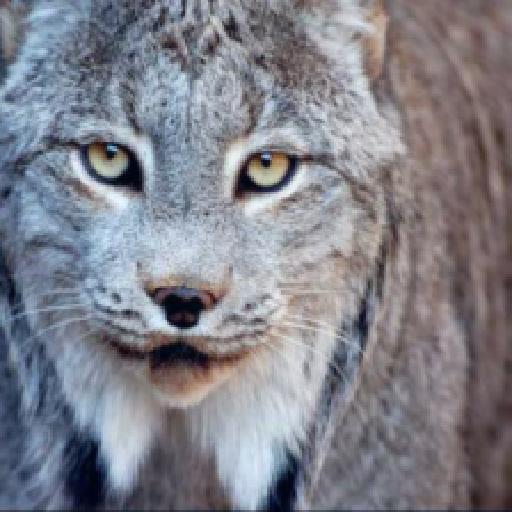}} \\
{\tiny Augmented} & {\includegraphics[scale=0.093]{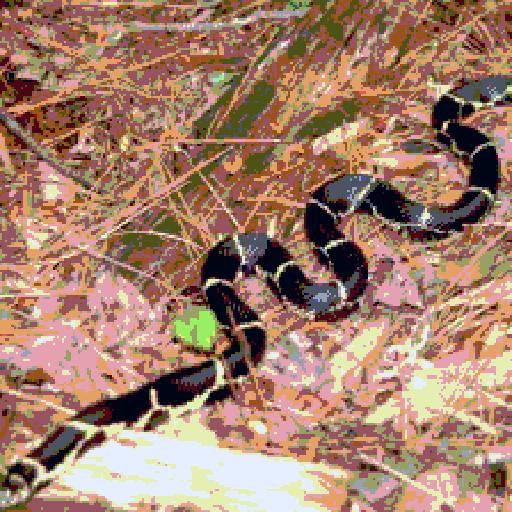}} & {\includegraphics[scale=0.093]{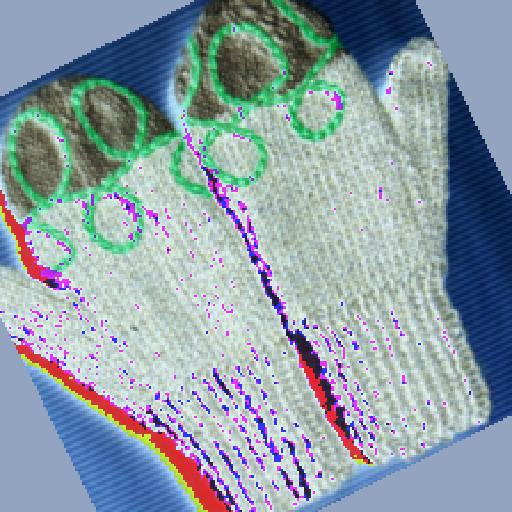}} & {\includegraphics[scale=0.093]{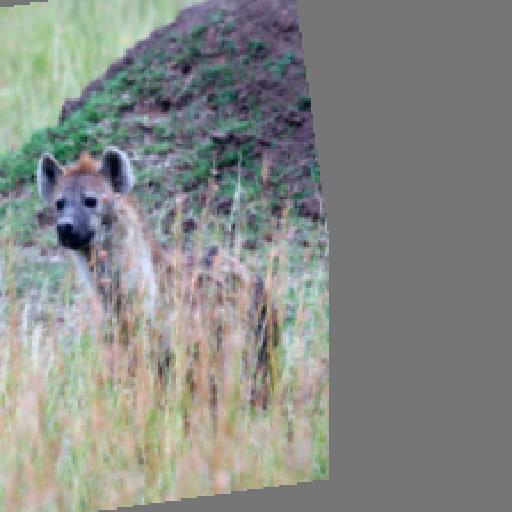}} & {\includegraphics[scale=0.093]{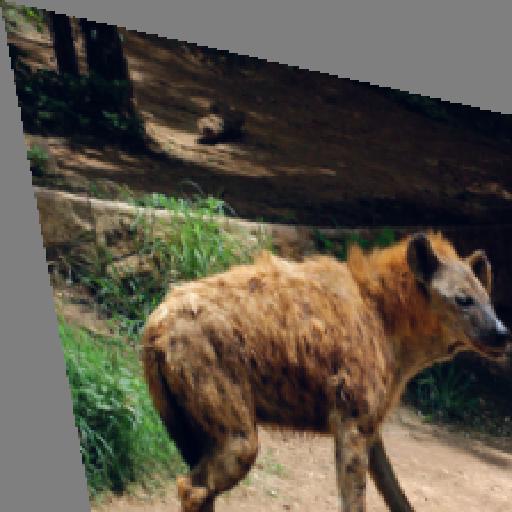}} & {\includegraphics[scale=0.093]{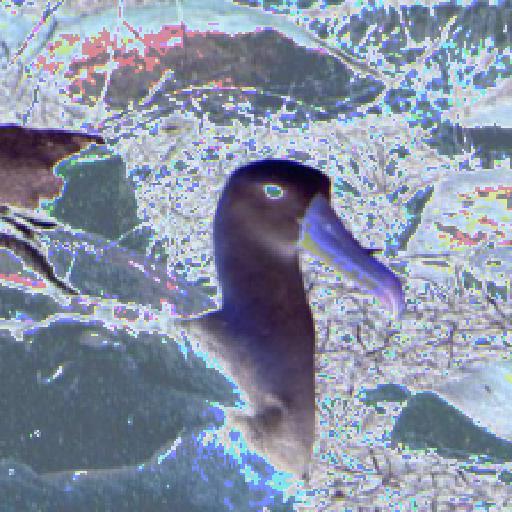}} & {\includegraphics[scale=0.093]{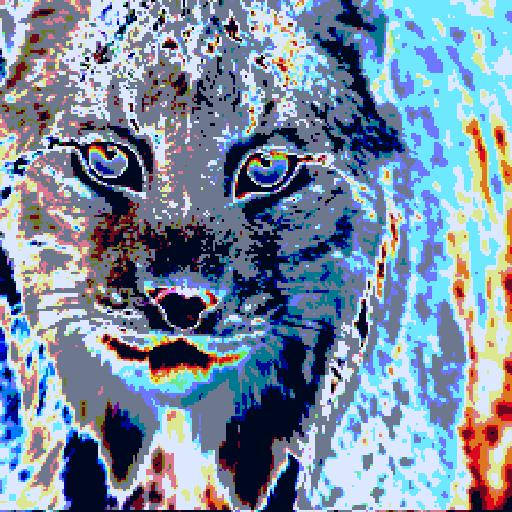}} \\
& \multicolumn{1}{c}{\tiny Posterize (m=6.1)}  & \multicolumn{1}{c}{\tiny Rotate (m=8.8)}  & \multicolumn{1}{c}{\tiny Rotate (m=2.6)} & \multicolumn{1}{c}{\tiny ShearX (m=6.2)} & \multicolumn{1}{c}{\tiny Solarize (m=3.9)} & \multicolumn{1}{c}{\tiny Solarize (m=3.1)} \\
& \multicolumn{1}{c}{\tiny Identity}  & \multicolumn{1}{c}{\tiny Solarize (m=1.9)}  & \multicolumn{1}{c}{\tiny TranslateX (m=7.9)} & \multicolumn{1}{c}{\tiny ShearY (m=7.5)} & \multicolumn{1}{c}{\tiny Identity} & \multicolumn{1}{c}{\tiny Posterize (m=5.4))} \\
& \multicolumn{1}{c}{\tiny Weight=0.45}  & \multicolumn{1}{c}{\tiny Weight=0.44}  & \multicolumn{1}{c}{\tiny Weight=0.41} & \multicolumn{1}{c}{\tiny Weight=0.43} & \multicolumn{1}{c}{\tiny Weight=0.46} & \multicolumn{1}{c}{\tiny Weight=0.44} \\
\end{tabular}
\caption{Examples of augmented samples with high weights.}
\label{fig:high_weights_appendix}
\end{figure}

\begin{figure}[t]
\centering
\begin{tabular}{cm{1.67cm}m{1.67cm}m{1.67cm}m{1.67cm}m{1.67cm}m{1.67cm}}
{\tiny Original} & {\includegraphics[scale=0.093]{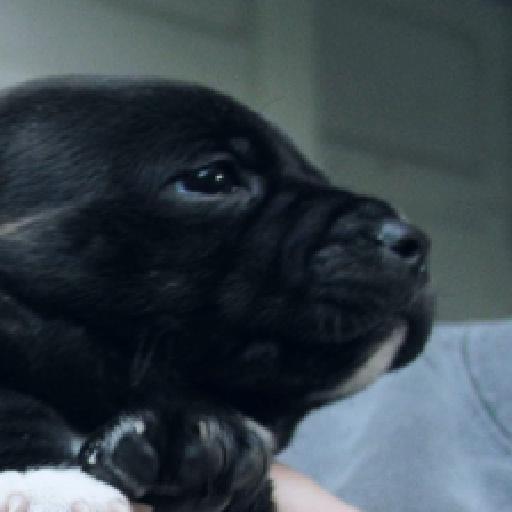}} & {\includegraphics[scale=0.093]{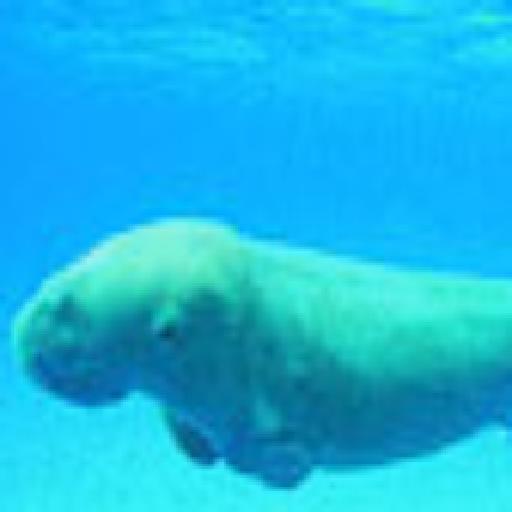}} & {\includegraphics[scale=0.093]{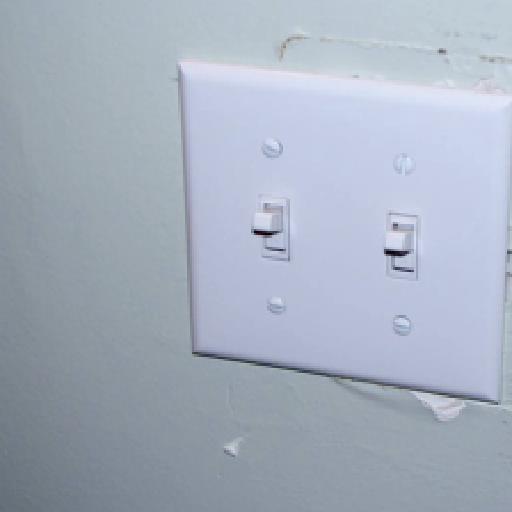}} & {\includegraphics[scale=0.093]{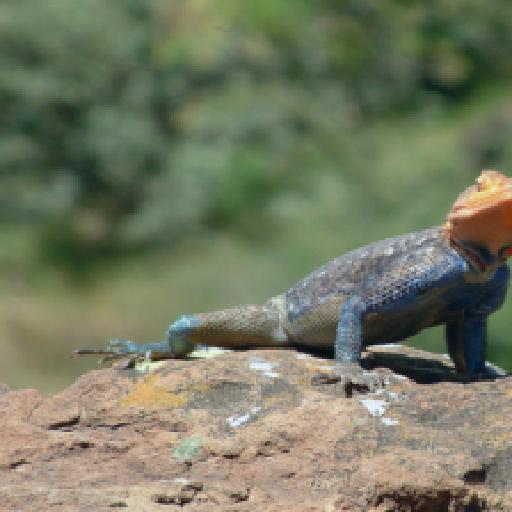}} & {\includegraphics[scale=0.093]{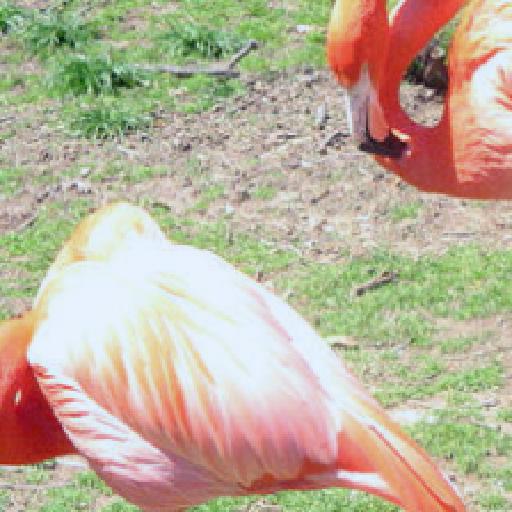}} & {\includegraphics[scale=0.093]{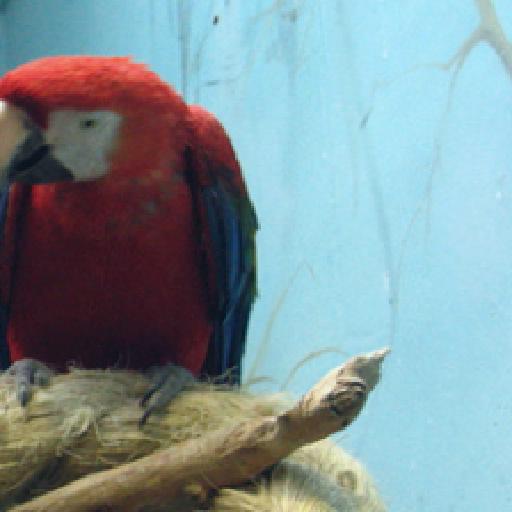}} \\
{\tiny Augmented} & {\includegraphics[scale=0.093]{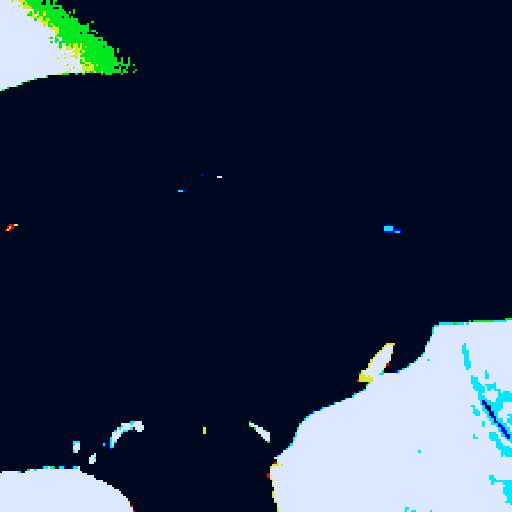}} & {\includegraphics[scale=0.093]{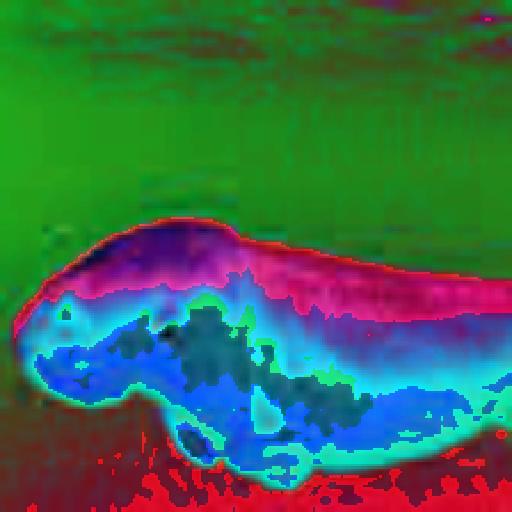}} & {\includegraphics[scale=0.093]{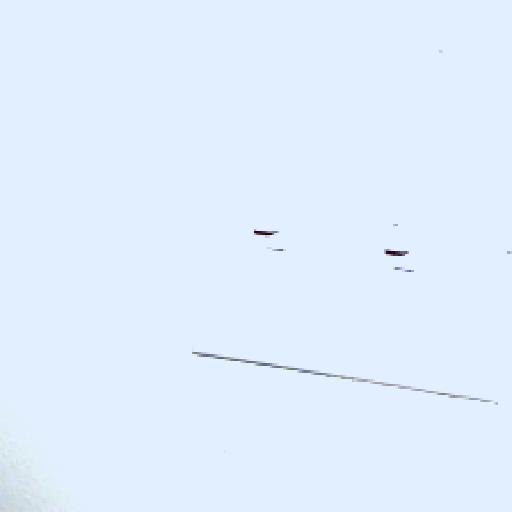}} & {\includegraphics[scale=0.093]{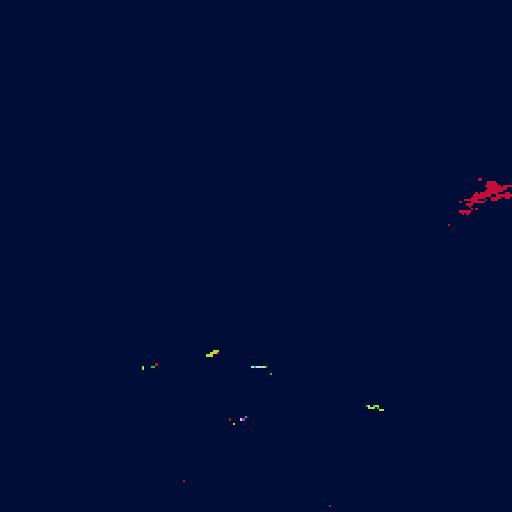}} & {\includegraphics[scale=0.093]{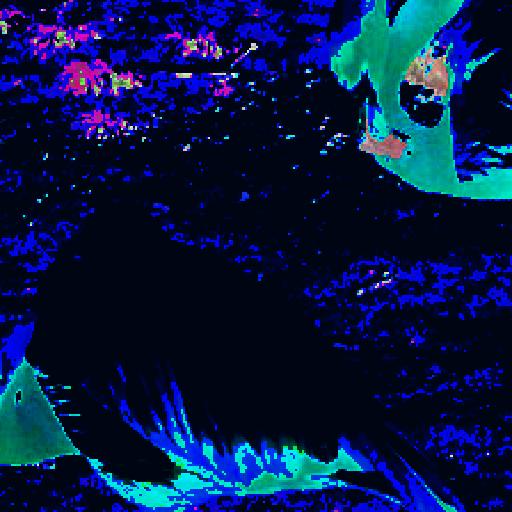}} & {\includegraphics[scale=0.093]{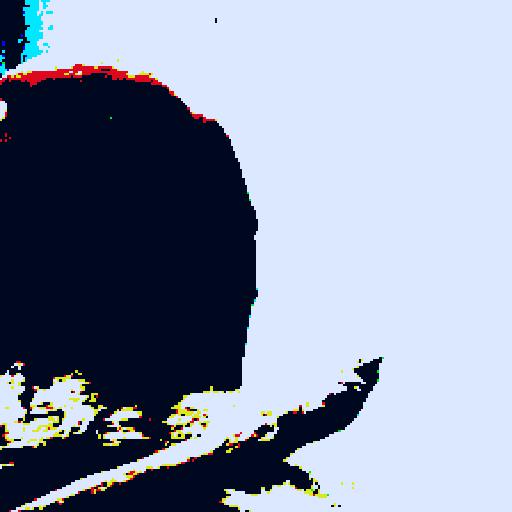}} \\
& \multicolumn{1}{c}{\tiny AutoContrast}  & \multicolumn{1}{c}{\tiny AutoContrast}  & \multicolumn{1}{c}{\tiny Brightness (m=9.0)} & \multicolumn{1}{c}{\tiny Brightness (m=0.9)} & \multicolumn{1}{c}{\tiny Brightness (m=8.8)} & \multicolumn{1}{c}{\tiny Color (m=0.4)} \\
& \multicolumn{1}{c}{\tiny Posterize (m=1.9)}  & \multicolumn{1}{c}{\tiny Solarize (m=3.4)}  & \multicolumn{1}{c}{\tiny Identity} & \multicolumn{1}{c}{\tiny Posterize (m=4.6)} & \multicolumn{1}{c}{\tiny Solarize (m=8.1)} & \multicolumn{1}{c}{\tiny Posterize (m=2.8))} \\
& \multicolumn{1}{c}{\tiny Weight=0.11}  & \multicolumn{1}{c}{\tiny Weight=0.17}  & \multicolumn{1}{c}{\tiny Weight=0.16} & \multicolumn{1}{c}{\tiny Weight=0.12} & \multicolumn{1}{c}{\tiny Weight=0.13} & \multicolumn{1}{c}{\tiny Weight=0.12} \\
{\tiny Original} & {\includegraphics[scale=0.093]{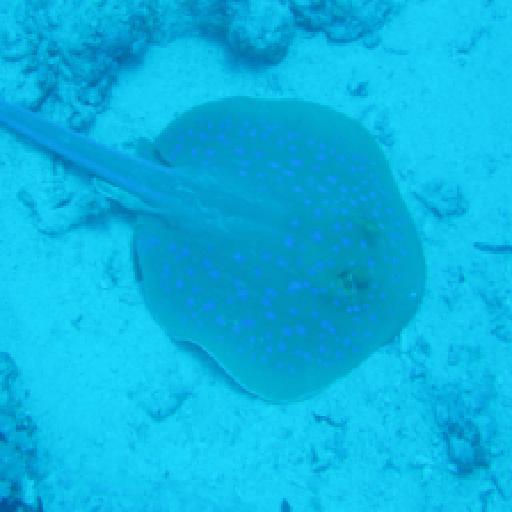}} & {\includegraphics[scale=0.093]{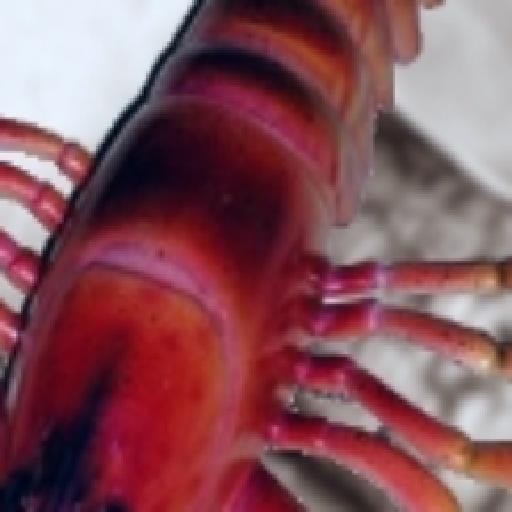}} & {\includegraphics[scale=0.093]{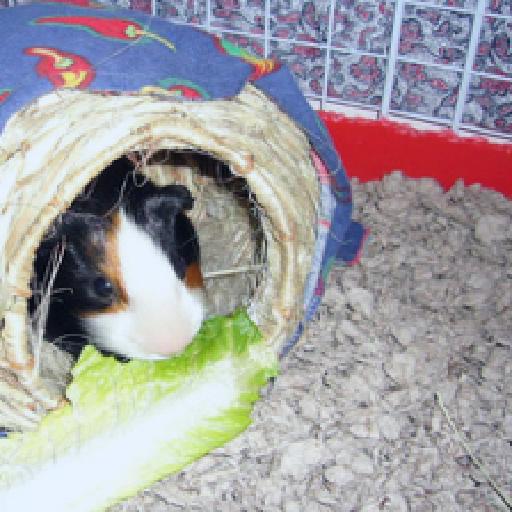}} & {\includegraphics[scale=0.093]{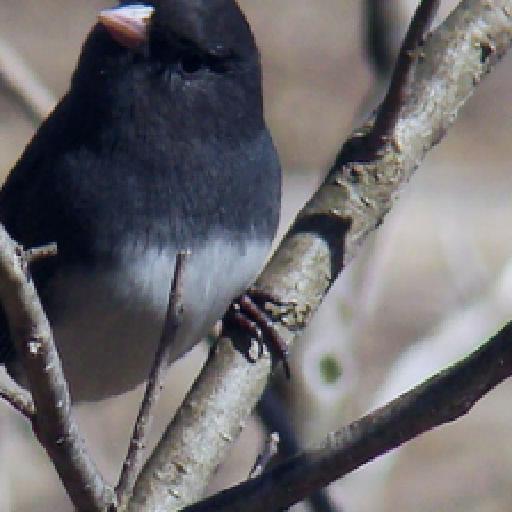}} & {\includegraphics[scale=0.093]{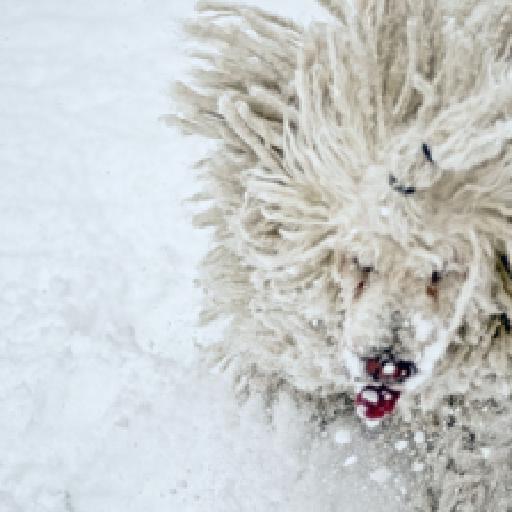}} & {\includegraphics[scale=0.093]{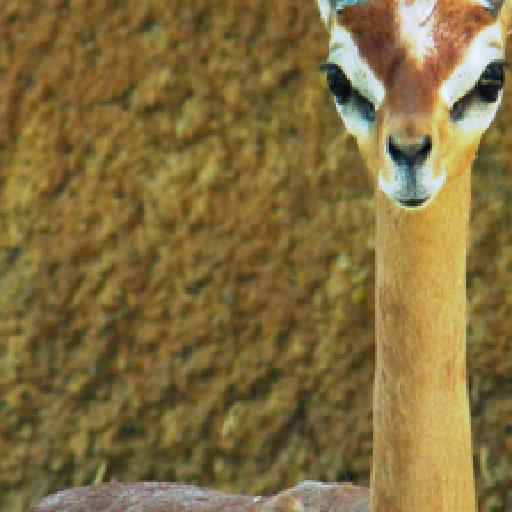}} \\
{\tiny Augmented} & {\includegraphics[scale=0.093]{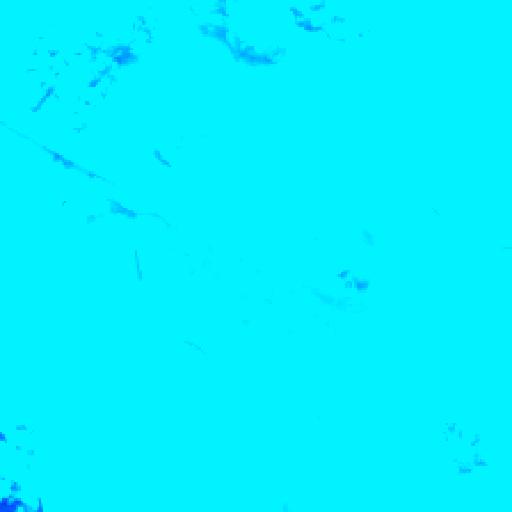}} & {\includegraphics[scale=0.093]{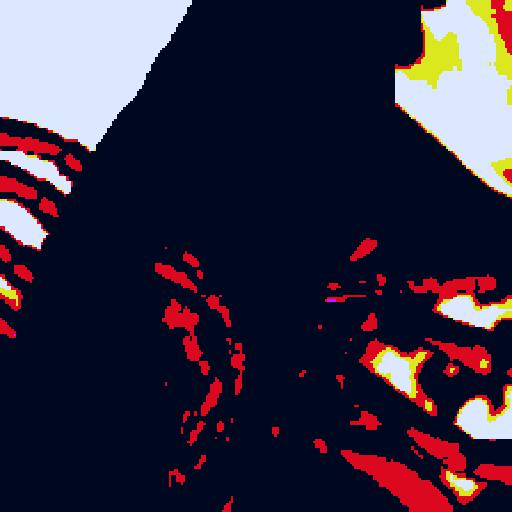}} & {\includegraphics[scale=0.093]{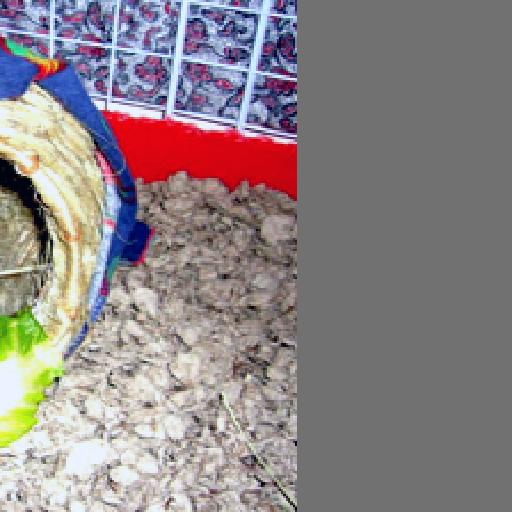}} & {\includegraphics[scale=0.093]{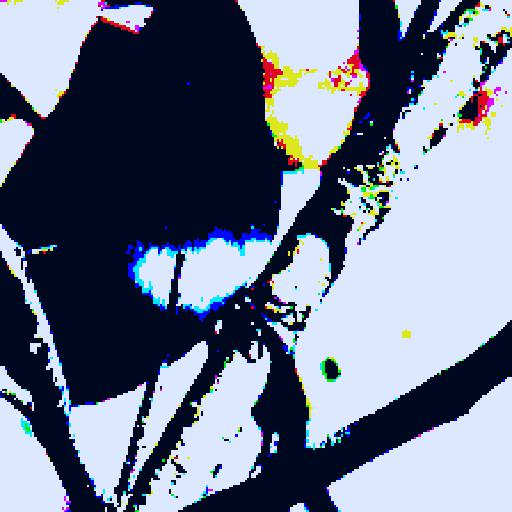}} & {\includegraphics[scale=0.093]{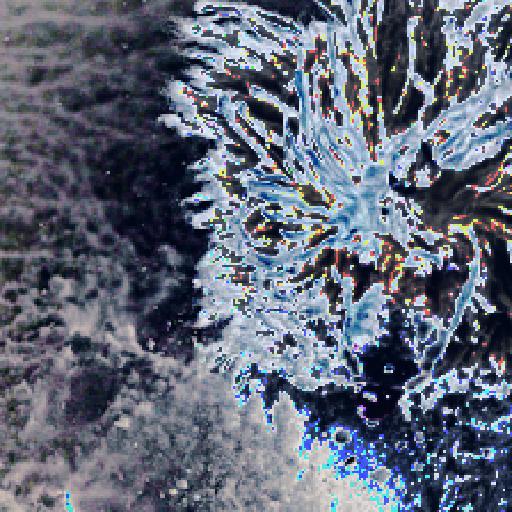}} & {\includegraphics[scale=0.093]{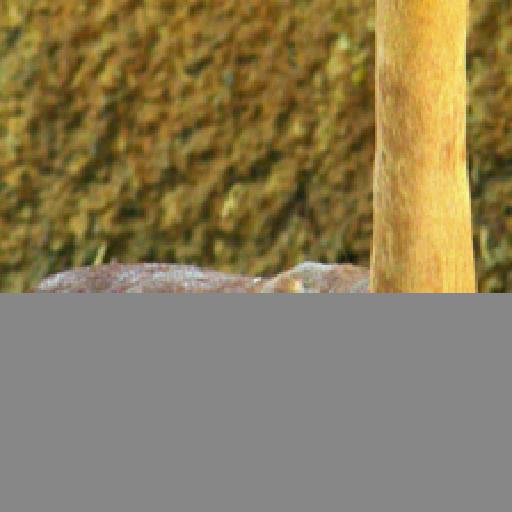}} \\
& \multicolumn{1}{c}{\tiny Contrast (m=9.3)}  & \multicolumn{1}{c}{\tiny Contrast (m=1.4)}  & \multicolumn{1}{c}{\tiny Contrast (m=8.8)} & \multicolumn{1}{c}{\tiny Equalize} & \multicolumn{1}{c}{\tiny Equalize} & \multicolumn{1}{c}{\tiny Identity} \\
& \multicolumn{1}{c}{\tiny Brightness (m=10.0)}  & \multicolumn{1}{c}{\tiny Posterize (m=3.2)}  & \multicolumn{1}{c}{\tiny TranslateX (m=9.3)} & \multicolumn{1}{c}{\tiny Posterize (m=0.1)} & \multicolumn{1}{c}{\tiny Solarize (m=2.8)} & \multicolumn{1}{c}{\tiny TranslateY (m=9.5))} \\
& \multicolumn{1}{c}{\tiny Weight=0.15}  & \multicolumn{1}{c}{\tiny Weight=0.10}  & \multicolumn{1}{c}{\tiny Weight=0.15} & \multicolumn{1}{c}{\tiny Weight=0.09} & \multicolumn{1}{c}{\tiny Weight=0.10} & \multicolumn{1}{c}{\tiny Weight=0.16} \\
{\tiny Original} & {\includegraphics[scale=0.093]{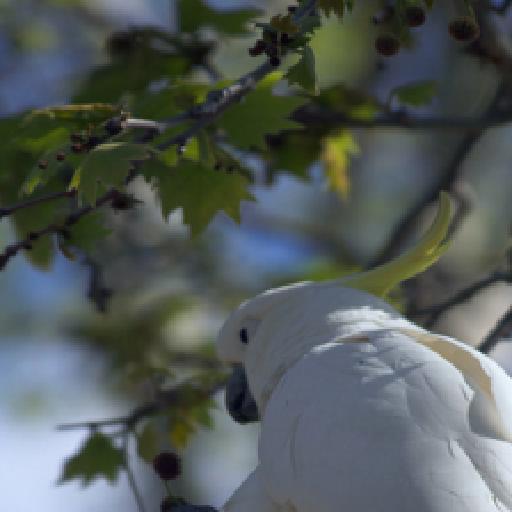}} & {\includegraphics[scale=0.093]{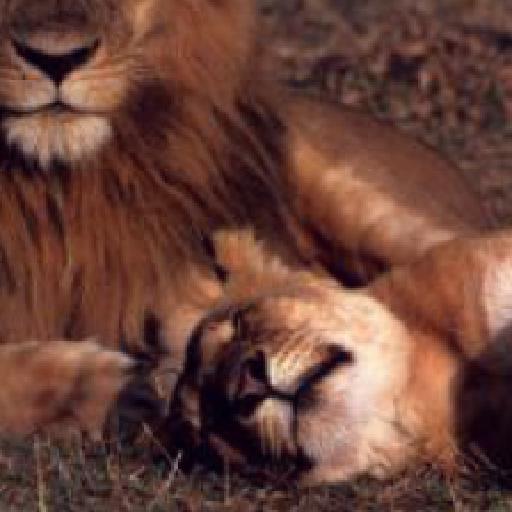}} & {\includegraphics[scale=0.093]{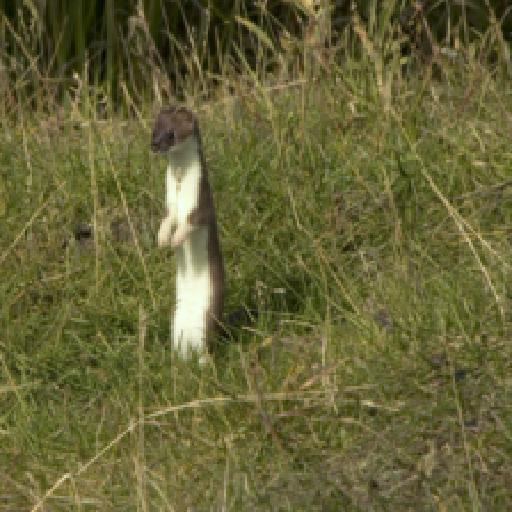}} & {\includegraphics[scale=0.093]{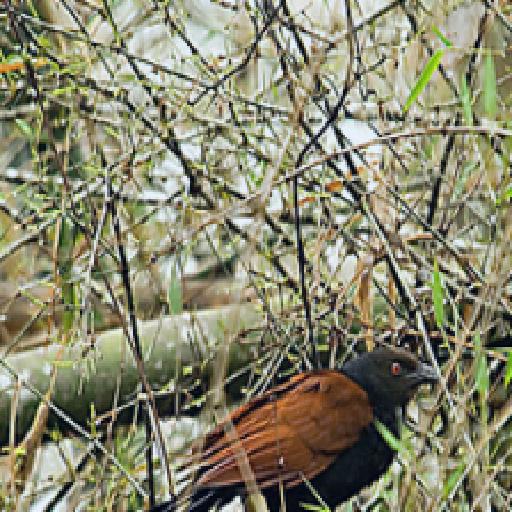}} & {\includegraphics[scale=0.093]{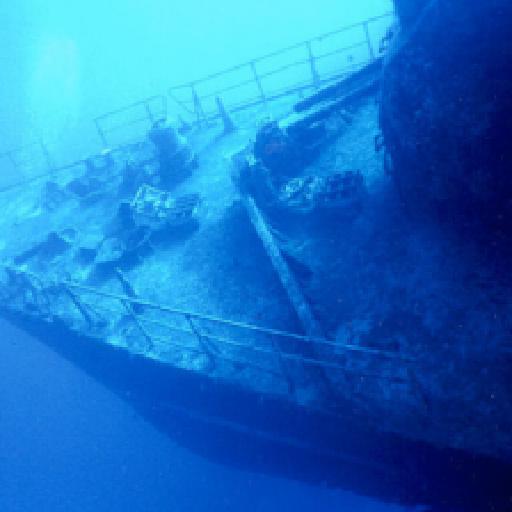}} & {\includegraphics[scale=0.093]{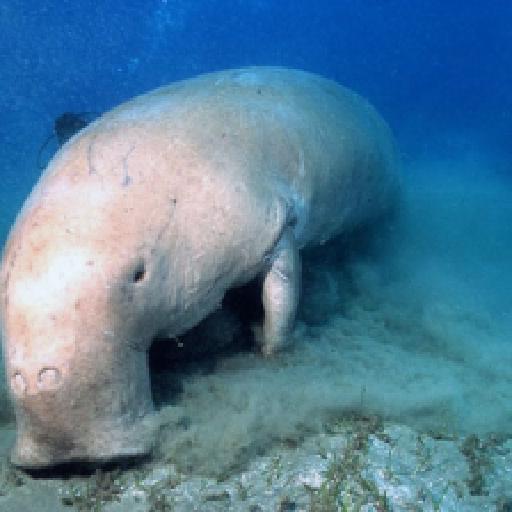}} \\
{\tiny Augmented} & {\includegraphics[scale=0.093]{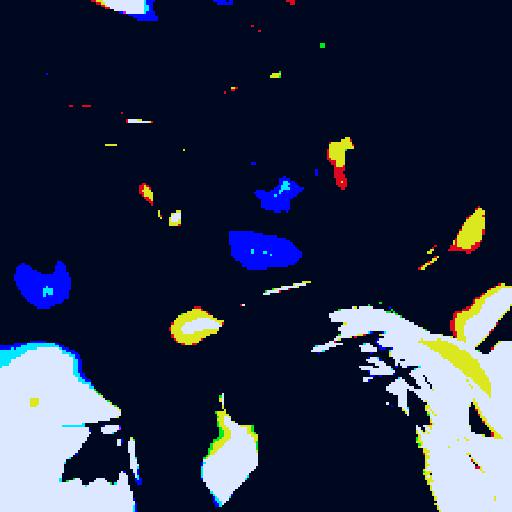}} & {\includegraphics[scale=0.093]{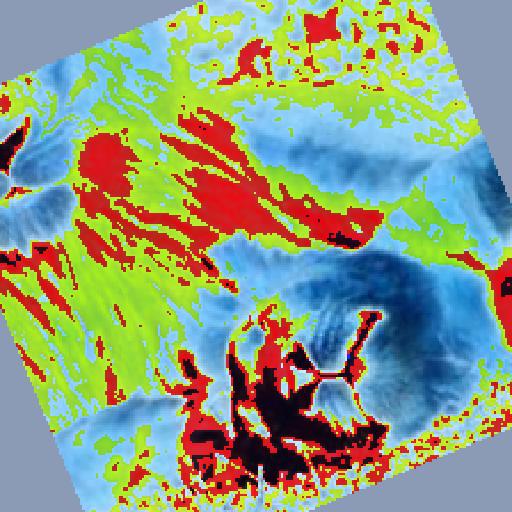}} & {\includegraphics[scale=0.093]{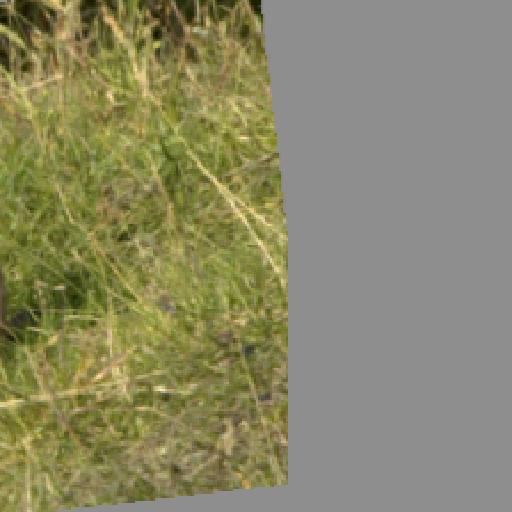}} & {\includegraphics[scale=0.093]{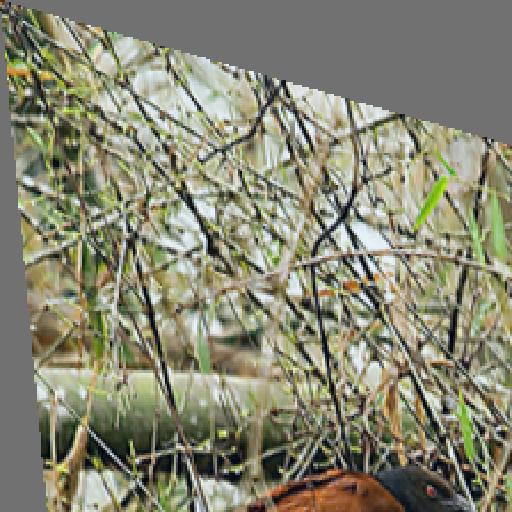}} & {\includegraphics[scale=0.093]{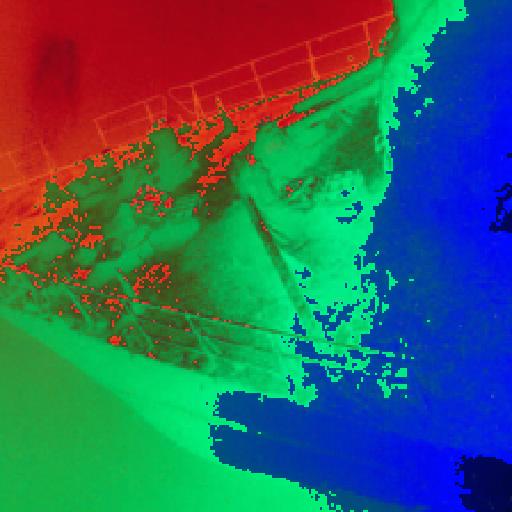}} & {\includegraphics[scale=0.093]{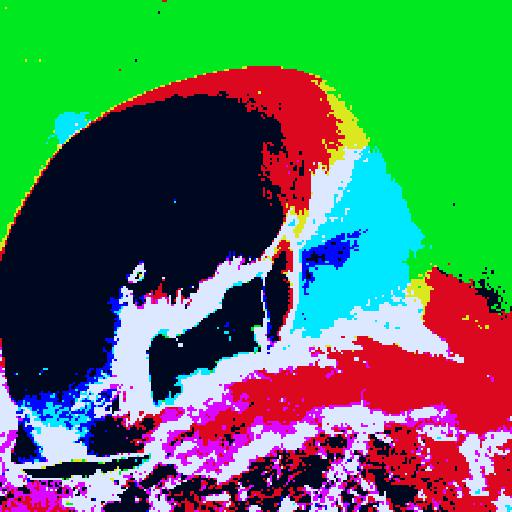}} \\
& \multicolumn{1}{c}{\tiny Posterize (m=2.2)}  & \multicolumn{1}{c}{\tiny Rotate (m=7.4)}  & \multicolumn{1}{c}{\tiny Rotate (m=2.2)} & \multicolumn{1}{c}{\tiny ShearX (m=3.2)} & \multicolumn{1}{c}{\tiny Solarize (m=2.5)} & \multicolumn{1}{c}{\tiny Solarize (m=1.5)} \\
& \multicolumn{1}{c}{\tiny Identity}  & \multicolumn{1}{c}{\tiny Solarize (m=1.5)}  & \multicolumn{1}{c}{\tiny TranslateX (m=9.7)} & \multicolumn{1}{c}{\tiny ShearY (m=9.5)} & \multicolumn{1}{c}{\tiny Identity} & \multicolumn{1}{c}{\tiny Posterize (m=3.3))} \\
& \multicolumn{1}{c}{\tiny Weight=0.12}  & \multicolumn{1}{c}{\tiny Weight=0.12}  & \multicolumn{1}{c}{\tiny Weight=0.13} & \multicolumn{1}{c}{\tiny Weight=0.15} & \multicolumn{1}{c}{\tiny Weight=0.15} & \multicolumn{1}{c}{\tiny Weight=0.12} \\
\end{tabular}
\caption{Examples of augmented samples with low weights.}
\label{fig:low_weights_appendix}
\end{figure}

\section{Analysis on MetaAugment}

In this section, we reformulate the problem of sample-aware data augmentation into a general form and investigate the theoretical properties of the general MetaAugment algorithm. 

\subsection{Problem}

Consider an image recognition task with the training set 
$\cD^{tr} = \{(x_i, y_i)\}_{i=1}^{N^{tr}}$, where $x_i$ denotes the $i$-th image, 
$y_i \in \{0,1\}^c$ is the label vector over $c$ classes, and $N^{tr}$ is the sample size. Let $op_j(x; \xi_k), 1 \leq j \leq M,$ be an augmentation operator applied on an image $x$, 
where $\xi_k\sim Q_j$ is the tuning parameter of the operator $op_j$ and $Q_j$ is a distribution of $\xi_k.$
Here $op_j(x; \xi_k)$ represents the transformation $\cT_{j,k}^{m_1, m_2}(x)$ in Section Methodology. 
Let $f(\cdot\,;\bw)$ be the task network with parameters $\bw$ and $\cV(\cdot,\cdot\,;\bth)$ be the policy network with parameters $\bth$. Let $\ell(f(x;\bw), y)$ denote the loss function. The task network is trained to get $\bw^*$ by minimizing the following weighted training loss:
\begin{linenomath}
\benrr
\cL^{tr}(\bw,\bth)=\frac{1}{N^{tr}} \sum_{i=1}^{N^{tr}} \frac{1}{M} \sum_{j=1}^M 
\bbE_{\xi_k\sim Q_j}\Big[\cV_{ijk}(\bth)L_{ijk}(\bw)\Big],
\eenrr
\end{linenomath}
where 
\begin{linenomath}
\[
\cV_{ijk}(\bth) = \cV\big(h(op_j(x_i;\xi_k)), e(op_j(\cdot\,;\xi_k));\bth\big)
\]
\end{linenomath}
and 
\begin{linenomath}
\[
L_{ijk}(\bw)=\ell\big(f(op_j(x_i;\xi_k); \bw), y_i\big).
\]
\end{linenomath}
Here $h$ is the feature extractor of the policy network and $e(op_j(\cdot\,;\xi_k))$ is the embedding of the augmentation operator $op_j(\cdot\,;\xi_k)$. 
In Section~\ref{proof_thm2}, we will further investigate the convergence of our algorithm when $h(op_j(x_i;\xi_k)) = f(op_j(x_i;\xi_k); \bw)$, i.e., the augmented sample feature is extracted by the task network. 
Notice that $\bw^*$ is a function of $\bth.$
Hence we denote $\bw^*(\bth) = \arg\min_{\bw} \cL^{tr}(\bw, \bth).$ 
Assume that we have a validation set $\cD^{val} = \{(x_{i'}^{val}, y_{i'}^{val})\}_{i'=1}^{N^{val}}$, where $N^{val}$ is the sample size of the validation data. 
The objective of the policy network is to minimize the following validation loss:
\begin{linenomath}
\benrr
\cL^{val}(\bw^*(\bth)) = \frac{1}{N^{val}}\sum_{i'=1}^{N^{val}} L_{i'}^{val}(\bw^*(\bth)),
\eenrr
\end{linenomath}
where $L_{i'}^{val}(\bw^*(\bth))=\ell(f(x_{i'}^{val};\bw^*(\bth)), y_{i'}^{val})$.

\subsection{MetaAugment Algorithm}

The policy network and the task network are trained alternately. 
For each iteration, a mini-batch of training samples $\cD^{tr}_{mi} = \{(x_i, y_i)\}_{i=1}^{n^{tr}}$ with batch size $n^{tr}$ and a mini-batch of augmentation operators $\{op_j(\cdot\,;\xi_k) \mid 1 \leq j \leq m^{op},\ 1 \leq k \leq m^{\xi}\}$ with batch size $m^{op}\cdot m^{\xi}$ are sampled. 
Then the inner loop update of $\bw$ in iteration $t+1$ is
\benr\label{A1}
\hat \bw^{(t)}(\bth) = \bw^{(t)} - \alpha
\frac{1}{n^{tr}}\sum_{i=1}^{n^{tr}} \frac{1}{m^{op}} \sum_{j=1}^{m^{op}} \frac{1}{m^{\xi}}\sum_{k=1}^{m^{\xi}} \cV_{ijk}(\bth) \nabla_{\bw} L_{ijk}(\bw^{(t)}),
\eenr
where $\alpha$ is the learning rate and
$
\nabla_{\bw} L_{ijk}(\bw^{(t)}) = \nabla_{\bw} L_{ijk}(\bw) \big|_{\bw^{(t)}}.
$
The formulation $\hat \bw^{(t)}(\bth)$ is regarded as a function of $\bth$, and then $\bth$ can be updated via the validation loss computed by $\hat \bw^{(t)}(\bth)$ on a mini-batch of validation samples $\cD^{val}_{mi}= \{(x_{i'}^{val}, y_{i'}^{val})\}_{i'=1}^{n^{val}}$ with batch size $n^{val}$. 
The outer loop update of $\bth$ is formulated by 
\benr\label{A2}
\bth^{(t+1)} = \bth^{(t)} - \beta \frac{1}{n^{val}} \sum_{i'=1}^{n^{val}} \nabla_{\bth} L^{val}_{i'}(\hat\bw^{(t)}(\bth^{(t)})),
\eenr
where $\beta$ is the learning rate and
$
\nabla_{\bth} L^{val}_{i'}(\hat\bw^{(t)}(\bth^{(t)})) = \nabla_{\bth} L^{val}_{i'}(\hat\bw^{(t)}(\bth))\big|_{\bth^{(t)}}.
$
The third step in iteration $t+1$ is the outer loop update of $\bw^{(t)}$ with the updated $\bth^{(t+1)}$:
\benr\label{A3}
\bw^{(t+1)} = \bw^{(t)} - \gamma \frac{1}{n^{tr}} \sum_{i=1}^{n^{tr}} \frac{1}{m^{op}} \sum_{j=1}^{m^{op}} \frac{1}{m^{\xi}}\sum_{k=1}^{m^{\xi}} \cV_{ijk}(\bth^{(t+1)}) \nabla_{\bw} L_{ijk}(\bw^{(t)}),
\eenr
where $\gamma$ is the learning rate.

\subsection{Analysis on Augmentation Policy Network}

According to the chain rule, the update of $\bth$ in Eq.~\eqref{A2} can be rewritten as:
\benr\label{outer11}
\bth^{(t+1)} &=& \bth^{(t)}+ \alpha \beta \frac{1}{n^{tr}} \sum_{i=1}^{n^{tr}}  \frac{1}{m^{op}} \sum_{j=1}^{m^{op}} \\
&& \times \frac{1}{m^{\xi}}\sum_{k=1}^{m^{\xi}}\Big(\frac{1}{n^{val}} \sum_{i'=1}^{n^{val}} R_{ii'}(op_j, \xi_k)\Big)\nabla_{\bth} \cV_{ijk}(\bth^{(t)}), \nn
\eenr
where $R_{ii'}(op_j, \xi_k) = \nabla_{\bw} L_{ijk}(\bw^{(t)})^\T \nabla_{\hat\bw} L_{i'}^{val}(\hat \bw^{(t)})$
and 
$\nabla_{\bth} \cV_{ijk}(\bth^{(t)}) = \nabla_{\bth} \cV_{ijk}(\bth)\big|_{\bth^{(t)}}.$
Notice that the update vector of $\bth^{(t)}$ is a weighted sum of the gradient of $\cV_{ijk}(\bth^{(t)})$ with respect to $\bth^{(t)}.$
What's more, the update direction is the gradient ascend direction of the weighted sum of $\cV_{ijk}(\bth^{(t)}).$
The weight of $\nabla_{\bth} \cV_{ijk}(\bth^{(t)})$ is formulated by 
\benrr
\frac{1}{n^{val}} \sum_{i'=1}^{n^{val}} R_{ii'}(op_j, \xi_k) &=& \frac{1}{n^{val}} \sum_{i'=1}^{n^{val}} \nabla_{\bw} L_{ijk}(\bw^{(t)})^\T \nabla_{\hat\bw} L_{i'}^{val}(\hat \bw^{(t)}) \\
&=& \Big\langle \nabla_{\bw} L_{ijk}(\bw^{(t)}), \frac{1}{n^{val}} \sum_{i'=1}^{n^{val}} \nabla_{\hat\bw} L_{i'}^{val}(\hat \bw^{(t)})  \Big\rangle.
\eenrr
The inner product measures the similarity between the gradient of an augmented sample loss and the average gradient of the losses computed on a mini-batch of validation data. 
If the gradient of an augmented sample loss is similar to that of the validation loss, 
this augmented sample is likely to improve the performance of the task network on the validation data and its weight will be increased after the update of the policy network.

\subsection{Useful Lemma}

{\bf Lemma 1.} {\it Suppose that:
\vs 0.05cm
\noi ($\bA$1) The loss function $\ell$ have $\rho_1$-bounded gradients with respect to $\bw$ under both (augmented) training data and validation data, and the loss function $\ell$ is Lipschitz smooth with constant $\rho_2$;\\
\noi ($\bA$2) The policy network $\cV$ is differential with a $\delta_1$-bounded gradient and twice differential with its Hessian bounded by $\delta_2$ with respect to $\bth$. 
\vs 0.05cm
Then the validation loss has $\rho'_1$-bounded gradients with respect to $\bth$ and is Lipschitz continuous with $\rho'_2$, where $\rho'_1 = \alpha  \rho_1^2 \delta_1$ and $\rho'_2 = \alpha \rho_1^2 (\alpha \delta_1^2 \rho_2 + \delta_2).$}

\begin{proof}
The gradient of the validation loss $L^{val}_{i'}(\hat\bw^{(t)}(\bth^{(t)}))$ with respect $\bth$ can be written as:
\benrr
&&\nabla_{\bth} L^{val}_{i'}(\hat\bw^{(t)}(\bth^{(t)}))\\
&=& -\alpha \frac{1}{n^{tr}}\sum_{i=1}^{n^{tr}} \frac{1}{m^{op}} \sum_{j=1}^{m^{op}} \frac{1}{m^{\xi}}\sum_{k=1}^{m^{\xi}}   \nabla_{\bth}\cV_{ijk}(\bth^{(t)})\nabla_{\bw} L_{ijk}(\bw^{(t)})^\T \nabla_{\hat\bw} L^{val}_{i'}(\hat\bw^{(t)}) \\
&=& -\alpha \frac{1}{n^{tr}}\sum_{i=1}^{n^{tr}} \frac{1}{m^{op}} \sum_{j=1}^{m^{op}} \frac{1}{m^{\xi}}\sum_{k=1}^{m^{\xi}}   \nabla_{\bth}\cV_{ijk}(\bth^{(t)}) R_{ii'}(op_j, \xi_k).
\eenrr
By the assumptions {\it ($\bA$1)} and {\it ($\bA$2)},
\benrr
&&\big\| \nabla_{\bth} L^{val}_{i'}(\hat\bw^{(t)}(\bth^{(t)})) \big\| \\
&\leq& \alpha \frac{1}{n^{tr}}\sum_{i=1}^{n^{tr}} \frac{1}{m^{op}} \sum_{j=1}^{m^{op}} \frac{1}{m^{\xi}}\sum_{k=1}^{m^{\xi}}   \big\| \nabla_{\bth}\cV_{ijk}(\bth^{(t)}) \big\| 
\big\|\nabla_{\bw} L_{ijk}(\bw^{(t)})\big\| \big\| \nabla_{\hat\bw} L^{val}_{i'}(\hat\bw^{(t)})\big\|\\
&\leq & \alpha \rho_1^2 \delta_1 = \rho'_1.
\eenrr
Further the Hessian of the validation loss $L^{val}_{i'}(\hat\bw^{(t)}(\bth^{(t)}))$ with respect $\bth$ is
\benrr
\nabla^2_{\bth} L^{val}_{i'}(\hat\bw^{(t)}(\bth^{(t)})) = -\alpha \frac{1}{n^{tr}}\sum_{i=1}^{n^{tr}} \frac{1}{m^{op}} \sum_{j=1}^{m^{op}} \frac{1}{m^{\xi}}\sum_{k=1}^{m^{\xi}}(I_{1, i'ijk} + I_{2,i'ijk}),
\eenrr
where 
\benrr
I_{1, i'ijk} &=& \nabla_{\bth}\cV_{ijk}(\bth^{(t)}) \nabla_{\bth}R_{ii'}(op_j,\xi_k)^\T, \\
I_{2, i'ijk} &=&  \nabla^2_{\bth}\cV_{ijk}(\bth^{(t)}) R_{ii'}(op_j,\xi_k).
\eenrr
According to the assumption {\it ($\bA$2)}, 
\benrr
\|I_{1, i'ijk}\| &\leq& \|\nabla_{\bth}\cV_{ijk}(\bth^{(t)})\| \| \nabla_{\bth}R_{ii'}(op_j,\xi_k)\| \\
&\leq& \delta_1 \| \nabla_{\bth}R_{ii'}(op_j,\xi_k)\|.
\eenrr
Furthermore,
\benrr
&&\| \nabla_{\bth}R_{ii'}(op_j,\xi_k)\| \\
&=& \Big\| \nabla_{\hat \bw}\Big(\nabla_{\bth} L_{i'}^{val}\big(\hat \bw^{(t)}(\bth^{(t)})\big) \Big)\Big|_{\hat \bw^{(t)}} \nabla_{\bw} L_{ijk}(\bw^{(t)}) \Big\| \\
&=& \Big\| \nabla_{\hat \bw} \Big( -\alpha \frac{1}{n^{tr}}\sum_{i=1}^{n^{tr}} \frac{1}{m^{op}} \sum_{j=1}^{m^{op}} \frac{1}{m^{\xi}}\sum_{k=1}^{m^{\xi}}
\nabla_{\bth}\cV_{ijk}(\bth^{(t)})\nabla_{\bw} L_{ijk}(\bw^{(t)})^\T \\
&& \times  \nabla_{\hat \bw} L_{i'}^{val}(\hat \bw)
\Big)\Big|_{\hat \bw^{(t)}} \nabla_{\bw} L_{ijk}(\bw^{(t)}) \Big\| \\
&=& \Big\| -\alpha \frac{1}{n^{tr}}\sum_{i=1}^{n^{tr}} \frac{1}{m^{op}} \sum_{j=1}^{m^{op}} \frac{1}{m^{\xi}}\sum_{k=1}^{m^{\xi}}
\nabla_{\bth}\cV_{ijk}(\bth^{(t)})\nabla_{\bw} L_{ijk}(\bw^{(t)})^\T \\
&& \times \nabla^2_{\hat \bw} L_{i'}^{val}(\hat \bw^{(t)}) \nabla_{\bw} L_{ijk}(\bw^{(t)}) \Big\|,
\eenrr
where
$
\nabla^2_{\hat \bw} L_{i'}^{val}(\hat \bw^{(t)}) = \nabla^2_{\hat \bw} L_{i'}^{val}(\hat \bw)\big|_{\hat \bw^{(t)}}.
$
By the assumptions {\it ($\bA$1)} and {\it ($\bA$2)}, 
\benrr
\|I_{1, i'ijk}\| &\leq& \delta_1 \| \nabla_{\bth}R_{ii'}(op_j,\xi_k)\| \leq \alpha \delta_1^2\rho_1^2 \rho_2.
\eenrr
For the second term $I_{2, i'ijk}$, we have
\benrr
\|I_{2, i'ijk}\| &=& \|\nabla^2_{\bth}\cV_{ijk}(\bth^{(t)})\| |R_{ii'}(op_j,\xi_k)| \\
& \leq & \delta_2 \|\nabla_{\bw} L_{ijk}(\bw^{(t)})\| \|\nabla_{\hat\bw} L_{i'}^{val}(\hat \bw^{(t)})\| \leq \delta_2 \rho_1^2,
\eenrr
where the first inequality holds by the assumption {\it ($\bA$2)} and the second inequality holds by the assumption {\it ($\bA$1)}.
Combining the upper bound of $\|I_{1, i'ijk}\|$ and $\|I_{2, i'ijk}\|$, we have
\benrr
\|\nabla^2_{\bth} L^{val}_{i'}(\hat\bw^{(t)}(\bth^{(t)})) \| &\leq&  \frac{\alpha}{n^{tr}}\sum_{i=1}^{n^{tr}} \frac{1}{m^{op}} \sum_{j=1}^{m^{op}} \frac{1}{m^{\xi}}\sum_{k=1}^{m^{\xi}}\big(\|I_{1, i'ijk}\| + \|I_{2,i'ijk}\|\big)\\
&\leq&  \alpha \rho_1^2 (\alpha \delta_1^2 \rho_2 + \delta_2). 
\eenrr
By Lagrange mean value theorem,
\benrr
\|\nabla_{\bth} \cL^{val}(\hat\bw^{(t)}(\bth_1)) - \nabla_{\bth} \cL^{val}(\hat\bw^{(t)}(\bth_2))\| \leq \rho'_{2} \|\bth_1 - \bth_2\|
\eenrr
for all $\bth_1$ and $\bth_2.$
\end{proof}

\subsection{Proof of Theorem~\ref{theorem1}}

\begin{theorem}
Suppose the following assumptions hold: 
\vs 0.05cm
\noi ($\bA$1) The loss function $\ell$ have $\rho_1$-bounded gradients with respect to $\bw$ under both (augmented) training data and validation data, and the loss function $\ell$ is Lipschitz smooth with constant $\rho_2$;\\
\noi ($\bA$2) The policy network $\cV$ is differential with a $\delta_1$-bounded gradient and twice differential with its Hessian bounded by $\delta_2$ with respect to $\bth$; \\
\noi ($\bA$3) The absolute values of the policy network $\cV$ and the loss function $\ell$ are bounded above by $C_1$ and $C_2$, respectively; \\
\noi ($\bA$4) For any iteration $0 \leq t \leq T-1$, the variance of the weighted training loss (validation loss) gradient on a mini-batch of training (validation) samples is bounded above; \\
\noi ($\bA$5) Let
\benrr
\alpha = \frac{c \log T}{T}, \quad \beta = \sqrt{\frac{c' \log\log T}{T}}, \quad \gamma =  \frac{c''\log T}{T},
\eenrr
for some positive constants $c$, $c'$ and $c''$; \\
\noi ($\bA$6) The number of iterations $T$ is sufficiently large such that $\alpha \beta \rho_1^2(\alpha \delta_1^2 \rho_2 + \delta_2)<1$ and $\gamma C_1 \rho_2 <1.$
\vs 0.05cm
If the policy network has its own feature extractor, we have
\benrr
&& \frac{1}{T} \sum_{t=0}^{T-1}\bbE \Big[  \big\|\nabla_{\bth} \cL^{val}(\hat \bw^{(t)}(\bth^{(t)}))\big\|^2 \Big] \leq O(\frac{\log T}{\sqrt{T\log\log T}}), \\
&& \lim_{T \rightarrow \infty} 
\frac{1}{T}\sum_{t=0}^{T-1} \bbE \Big[\|\nabla_{\bw} \cL^{tr}(\bw^{(t)}, \bth^{(t+1)})\|^2 \Big] = 0.
\eenrr
\end{theorem}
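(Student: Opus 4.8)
I would read the three updates \eqref{A1}--\eqref{A3} as one step of stochastic gradient descent on two \emph{time-varying} objectives, and then control the error produced because those objectives drift between iterations. For the policy network write $F_t(\bth) := \cL^{val}(\hat\bw^{(t)}(\bth))$; by Lemma~1 this function has a $\rho_1' := \alpha\rho_1^2\delta_1$-bounded gradient and is $\rho_2' := \alpha\rho_1^2(\alpha\delta_1^2\rho_2+\delta_2)$-smooth in $\bth$, and each per-sample term $\nabla_\bth L^{val}_{i'}(\hat\bw^{(t)}(\bth^{(t)}))$ is bounded by $\rho_1'$. For the task network write $G_t(\bw) := \cL^{tr}(\bw,\bth^{(t+1)})$; since each $\nabla_\bw L_{ijk}$ is $\rho_2$-Lipschitz and $|\cV|\le C_1$ by ($\bA$1),($\bA$3), $G_t$ is $C_1\rho_2$-smooth in $\bw$. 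By ($\bA$3) both $\cL^{tr}$ and $\cL^{val}$ lie in $[0,C_2]$, hence are bounded below, which is all the telescoping needs. Conditioned on the history, the mini-batch vectors in \eqref{A2} and \eqref{A3} are unbiased estimators of $\nabla_\bth F_t(\bth^{(t)})$ and $\nabla_\bw G_t(\bw^{(t)})$ respectively -- for the latter this uses that, when the policy network has its own feature extractor, $\cV_{ijk}$ does not depend on $\bw$, so treating the weight as a constant in \eqref{A3} drops no term -- and by ($\bA$4) their variances are bounded by constants $\sigma^2,\tilde\sigma^2$.

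\textbf{Step 1: the validation bound \eqref{conv1}.} Apply the smooth-descent inequality for the $\rho_2'$-smooth $F_t$ to $\bth^{(t+1)}=\bth^{(t)}-\beta\widehat G_t$, take expectations, and use $1-\rho_2'\beta/2\ge\tfrac12$ from ($\bA$6):
\begin{linenomath}
\begin{equation*}
\bbE\big[F_t(\bth^{(t+1)})\big]\le\bbE\big[F_t(\bth^{(t)})\big]-\tfrac{\beta}{2}\,\bbE\big\|\nabla_\bth F_t(\bth^{(t)})\big\|^2+\tfrac{\rho_2'\beta^2\sigma^2}{2}.
\end{equation*}
\end{linenomath}
Next compare $F_{t+1}$ with $F_t$ at $\bth^{(t+1)}$: $\cL^{val}$ is $\rho_1$-Lipschitz in $\bw$, the task parameters inside $\hat\bw$ move by $O(\gamma)$ through \eqref{A3} (with $\|\bw^{(t+1)}-\bw^{(t)}\|\le\gamma C_1\rho_1$) plus an $O(\alpha)$ change from the inner-loop correction, so $|F_{t+1}(\bth^{(t+1)})-F_t(\bth^{(t+1)})|=O(\alpha+\gamma)=O(\log T/T)$. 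Adding the two bounds, telescoping over $t=0,\dots,T-1$, and using $F_T\ge0$, $F_0\le C_2$,
\begin{linenomath}
\begin{equation*}
\frac1T\sum_{t=0}^{T-1}\bbE\big\|\nabla_\bth F_t(\bth^{(t)})\big\|^2\le O\!\Big(\tfrac1{T\beta}\Big)+O(\rho_2'\beta)+O\!\Big(\tfrac{\alpha+\gamma}{\beta}\Big).
\end{equation*}
\end{linenomath}
Substituting ($\bA$5) and $\rho_2'=O(\alpha)=O(\log T/T)$, these become $O(1/\sqrt{T\log\log T})$, $O(\log T\sqrt{\log\log T}/T^{3/2})$ and $O(\log T/\sqrt{T\log\log T})$; the last dominates, which is \eqref{conv1}.

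\textbf{Step 2: the training bound \eqref{conv2}.} Apply the smooth-descent inequality for the $C_1\rho_2$-smooth $G_t$ to $\bw^{(t+1)}=\bw^{(t)}-\gamma\widehat g_t$, using $\gamma C_1\rho_2<1$ from ($\bA$6):
\begin{linenomath}
\begin{equation*}
\bbE\big[G_t(\bw^{(t+1)})\big]\le\bbE\big[G_t(\bw^{(t)})\big]-\tfrac{\gamma}{2}\,\bbE\big\|\nabla_\bw G_t(\bw^{(t)})\big\|^2+\tfrac{C_1\rho_2\gamma^2\tilde\sigma^2}{2}.
\end{equation*}
\end{linenomath}
To pass from $G_t$ to $G_{t+1}$, note $\cL^{tr}(\bw,\cdot)$ is $\delta_1C_2$-Lipschitz in $\bth$ (from $\|\nabla_\bth\cV\|\le\delta_1$ and $|\ell|\le C_2$), while $\|\bth^{(t+2)}-\bth^{(t+1)}\|\le\beta\rho_1'$ by \eqref{A2} and the per-sample $\rho_1'$-bound from Lemma~1, so $|G_{t+1}(\bw^{(t+1)})-G_t(\bw^{(t+1)})|\le\delta_1C_2\beta\rho_1'=O(\alpha\beta)$. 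Telescoping over $t$ with $G_\cdot\in[0,C_2]$ gives
\begin{linenomath}
\begin{equation*}
\frac1T\sum_{t=0}^{T-1}\bbE\big\|\nabla_\bw\cL^{tr}(\bw^{(t)},\bth^{(t+1)})\big\|^2\le O\!\Big(\tfrac1{T\gamma}\Big)+O(\gamma)+O\!\Big(\tfrac{\alpha\beta}{\gamma}\Big),
\end{equation*}
\end{linenomath}
and with ($\bA$5) the three terms are $O(1/\log T)$, $O(\log T/T)$ and $O(\sqrt{\log\log T/T})$, each tending to $0$; hence the average vanishes, which is \eqref{conv2}.

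\textbf{Main obstacle.} The per-step descent inequalities are routine; the real work is the drift bookkeeping -- quantifying how much $F_t$ (resp.\ $G_t$) changes when $\bw$ (resp.\ $\bth$) is updated, which injects precisely the terms $(\alpha+\gamma)/\beta$ and $\alpha\beta/\gamma$. The schedule in ($\bA$5) is chosen so that the SGD-noise term $\rho_2'\beta$, the initialization term $1/(T\beta)$, and the drift term $(\alpha+\gamma)/\beta$ are simultaneously controlled, which forces $\gamma$ to lie (up to logarithmic factors) between $\beta^2$ and $\beta$; the drift term $\gamma/\beta$ is then what \emph{sets} the rate $\log T/\sqrt{T\log\log T}$, so the bottleneck is balancing the two optimizations rather than either one alone. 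A secondary subtlety is justifying the unbiasedness/bounded-variance claims for the composite mini-batch estimators -- $\nabla_\bth F_t$ is a product of a training-mini-batch average and a validation-mini-batch average, so one needs the two mini-batches drawn independently -- and checking that the ``own feature extractor'' hypothesis is exactly what keeps the $\bw$-update an unbiased descent step on $\cL^{tr}$ (this is where the bias term of Theorem~2 would otherwise enter).
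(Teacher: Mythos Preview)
Your proposal is correct and follows essentially the same strategy as the paper's proof: apply a smooth-descent inequality to each of the two time-varying objectives $F_t(\bth)=\cL^{val}(\hat\bw^{(t)}(\bth))$ and $G_t(\bw)=\cL^{tr}(\bw,\bth^{(t+1)})$, bound the drift incurred when the \emph{other} parameter moves, then telescope. The paper's $I_1,I_2$ decomposition for the validation part and $I_3,I_4$ decomposition for the training part are exactly your drift/descent split, and Lemma~1 is invoked in the same way.

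One small difference is worth recording. For the training drift $|G_{t+1}(\bw^{(t+1)})-G_t(\bw^{(t+1)})|$ you use the first-order Lipschitz bound $\delta_1C_2\,\|\bth^{(t+2)}-\bth^{(t+1)}\|\le\delta_1C_2\,\beta\rho_1'=O(\alpha\beta)$, exploiting that each per-sample validation gradient in \eqref{A2} is itself bounded by $\rho_1'$. The paper instead Taylor-expands $\cV_{ijk}(\bth^{(t+2)})-\cV_{ijk}(\bth^{(t+1)})$ to second order and picks up an additional $\tfrac{\delta_2\beta^2}{2}\sigma_1^2C_2=O(\beta^2)$ term from the variance of the validation mini-batch; this becomes the paper's dominant contribution and gives the rate $O(\beta^2/\gamma)=O(\log\log T/\log T)$ in \eqref{conv2}, whereas your route yields the sharper $O(\alpha\beta/\gamma)=O(\sqrt{\log\log T/T})$. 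Both go to zero, so the stated conclusion is the same, but your argument is a little cleaner here.
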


\begin{proof}
We start with $\cL^{val}(\hat \bw^{(t+1)}(\bth^{(t+1)})) - \cL^{val}(\hat \bw^{(t)}(\bth^{(t)})).$ 
Decompose it into $I_1 + I_2$, where 
\benrr
I_1 &=& \cL^{val}(\hat \bw^{(t+1)}(\bth^{(t+1)})) - \cL^{val}(\hat \bw^{(t)}(\bth^{(t+1)})), \\
I_2 &=& \cL^{val}(\hat \bw^{(t)}(\bth^{(t+1)})) - \cL^{val}(\hat \bw^{(t)}(\bth^{(t)})).
\eenrr
By the assumption {\it ($\bA$1)},
\benrr
I_1 
&\leq& \big\|\nabla_{\hat \bw} \cL^{val}(\hat \bw^{(t)}(\bth^{(t+1)}))\big\| 
\big\|\hat \bw^{(t+1)}(\bth^{(t+1)}) - \hat \bw^{(t)}(\bth^{(t+1)})\big\| \\
&& + \frac{\rho_2}{2} \big\|\hat \bw^{(t+1)}(\bth^{(t+1)}) - \hat \bw^{(t)}(\bth^{(t+1)})\big\|^2.
\eenrr
To proceed further, denote
\benrr
\nabla_{\bw} \cL_{t}^{tr}(\bw,\bth)) = \frac{1}{n^{tr}} \sum_{i=1}^{n^{tr}} \frac{1}{m^{op}} \sum_{j=1}^{m^{op}} \frac{1}{m^{\xi}}\sum_{k=1}^{m^{\xi}}
\cV_{ijk}(\bth)\nabla_{\bw} L_{ijk}(\bw), 
\eenrr
where the batches of training samples and augmentation operators are sampled at time $t.$ 
Note that 
\benrr
&&\hat \bw^{(t+1)}(\bth^{(t+1)}) - \hat \bw^{(t)}(\bth^{(t+1)}) \\
&=& (\bw^{(t+1)} - \alpha \nabla_{\bw} \cL_{t+1}^{tr}(\bw^{(t+1)},\bth^{(t+1)})) - (\bw^{(t)} - \alpha \nabla_{\bw} \cL_{t}^{tr}(\bw^{(t)},\bth^{(t+1)})) \\
&=& (\bw^{(t+1)}-\bw^{(t)}) - \alpha \nabla_{\bw} \cL_{t+1}^{tr}(\bw^{(t+1)},\bth^{(t+1)}) + \alpha \nabla_{\bw} \cL_{t}^{tr}(\bw^{(t)},\bth^{(t+1)})\\
&=& (\alpha - \gamma)\nabla_{\bw} \cL_{t}^{tr}(\bw^{(t)},\bth^{(t+1)}) - \alpha \nabla_{\bw} \cL_{t+1}^{tr}(\bw^{(t+1)},\bth^{(t+1)}).
\eenrr
Thus, by the assumptions {\it ($\bA$1)} and {\it ($\bA$3)}, 
\benrr
&&\|\hat \bw^{(t+1)}(\bth^{(t+1)}) - \hat \bw^{(t)}(\bth^{(t+1)})\| \\
&\leq& |\gamma - \alpha| \|\nabla_{\bw} \cL_{t}^{tr}(\bw^{(t)},\bth^{(t+1)})\| + \alpha \|\nabla_{\bw} \cL_{t+1}^{tr}(\bw^{(t+1)},\bth^{(t+1)})\| \\
&\leq& |\gamma - \alpha| \rho_1 C_1 + \alpha \rho_1 C_1 \leq 2 \max\{\gamma, \alpha\} \rho_1 C_1.
\eenrr
According to the assumption {\it ($\bA$5)}, we rewrite $2 \max\{\gamma, \alpha\} \rho_1 C_1$ as $ \gamma \rho_1 \tilde C_1.$ 
Then the upper bound of $|I_1|$ can be written as
\benrr
|I_1| \leq \gamma \rho_1^2 \tilde C_1 + \frac{1}{2} \gamma^2 \rho_1^2 \rho_2 \tilde C_1^2 = \gamma \rho_1^2 \tilde C_1(1 + \frac{1}{2}\gamma \rho_2 \tilde C_1).
\eenrr
Next we deal with $I_2.$ By {\bf Lemma~1}, we have
\benrr
I_2 \leq \nabla_{\bth} \cL^{val}(\hat \bw^{(t)}(\bth^{(t)}))^\T (\bth^{(t+1)} - \bth^{(t)}) + \frac{\rho'_2}{2}\|\bth^{(t+1)} - \bth^{(t)}\|^2.
\eenrr
Note that
\benrr
\bth^{(t+1)} - \bth^{(t)} =  - \beta \frac{1}{n^{val}} \sum_{i'=1}^{n^{val}} \nabla_{\bth} L^{val}_{i'}(\hat\bw^{(t)}(\bth^{(t)})),
\eenrr
where $\{(x_{i'}^{val}, y_{i'}^{val})\}_{i'=1}^{n^{val}}$ is a mini-batch randomly sampled from all validation data.
To proceed further, we denote
\benrr
\nabla_{\bth} \cL^{val}_t(\hat \bw^{(t)}(\bth^{(t)})) = \frac{1}{n^{val}} \sum_{i'=1}^{n^{val}}  \nabla_{\bth} L^{val}_{i'}(\hat\bw^{(t)}(\bth^{(t)}))
\eenrr
and $\vep^{(t)} = \nabla_{\bth} \cL_{t}^{val}(\hat \bw^{(t)}(\bth^{(t)})) - \nabla_{\bth} \cL^{val}(\hat \bw^{(t)}(\bth^{(t)})).$ 
By the assumption {\it ($\bA$4)}, 
$
\bbE[\|\vep^{(t)}\|^2] \leq \sigma_1^2
$
for some positive scalar $\sigma_1.$ 
Rewrite the update vector of $\bth$ as $\nabla_{\bth} \cL^{val}(\hat \bw^{(t)}(\bth^{(t)})) + \vep^{(t)}$, and plug it into the upper bound of $I_2$. Then 
\benrr
I_2 &\leq& -\beta \nabla_{\bth} \cL^{val}(\hat \bw^{(t)}(\bth^{(t)}))^\T \big(\nabla_{\bth} \cL^{val}(\hat \bw^{(t)}(\bth^{(t)}))+ \vep^{(t)}\big) \\
&& + \frac{\rho'_2\beta^2}{2}\big\|\nabla_{\bth} \cL^{val}(\hat \bw^{(t)}(\bth^{(t)}))+ \vep^{(t)}\big\|^2\\
&=& -(\beta-\frac{\rho'_2\beta^2}{2})\big\|\nabla_{\bth} \cL^{val}(\hat \bw^{(t)}(\bth^{(t)}))\big\|^2 + \frac{\rho'_2\beta^2}{2} \|\vep^{(t)}\|^2\\
&& - (\beta-\rho'_2\beta^2)\nabla_{\bth} \cL^{val}(\hat \bw^{(t)}(\bth^{(t)}))^\T\vep^{(t)}.
\eenrr
Combining the upper bound of $I_1$ and $I_2$, we have
\benrr
&&\cL^{val}(\hat \bw^{(t+1)}(\bth^{(t+1)})) - \cL^{val}(\hat \bw^{(t)}(\bth^{(t)}))\\
&\leq& \gamma \rho_1^2 \tilde C_1(1 + \frac{1}{2}\gamma \rho_2 \tilde C_1) -(\beta-\frac{\rho'_2\beta^2}{2})\big\|\nabla_{\bth} \cL^{val}(\hat \bw^{(t)}(\bth^{(t)}))\big\|^2 \\
&& + \frac{\rho'_2\beta^2}{2} \|\vep^{(t)}\|^2 - (\beta-\rho'_2\beta^2)\nabla_{\bth} \cL^{val}(\hat \bw^{(t)}(\bth^{(t)}))^\T\vep^{(t)}.
\eenrr
Rearranging the terms, we can obtain that
\benrr
&& (\beta-\frac{\rho'_2\beta^2}{2})\big\|\nabla_{\bth} \cL^{val}(\hat \bw^{(t)}(\bth^{(t)}))\big\|^2 \\
&\leq& \gamma \rho_1^2 \tilde C_1(1 + \frac{1}{2}\gamma \rho_2 \tilde C_1) +  \cL^{val}(\hat \bw^{(t)}(\bth^{(t)})) - \cL^{val}(\hat \bw^{(t+1)}(\bth^{(t+1)}))\\
&& + \frac{\rho'_2\beta^2}{2} \|\vep^{(t)}\|^2 - (\beta-\rho'_2\beta^2)\nabla_{\bth} \cL^{val}(\hat \bw^{(t)}(\bth^{(t)}))^\T\vep^{(t)}.
\eenrr
By taking the mean of $t$ from $0$ to $T-1$ and taking the expectation with respect to the mini-batch of samples, 
\benrr
&&\frac{1}{T}\sum_{t=0}^{T-1}(\beta-\frac{\rho'_2\beta^2}{2})\bbE\Big[\big\|\nabla_{\bth} \cL^{val}(\hat \bw^{(t)}(\bth^{(t)}))\big\|^2\Big]\\
&\leq& \gamma \rho_1^2 \tilde C_1(1 + \frac{1}{2}\gamma \rho_2 \tilde C_1) + \frac{1}{T}\bbE\Big[\big( \cL^{val}(\hat \bw^{(0)}(\bth^{(0)})) - \cL^{val}(\hat \bw^{(T)}(\bth^{(T)})) \big)\Big] \\
&& + \frac{1}{T} \sum_{t=0}^{T-1}\frac{\rho'_2\beta^2}{2} \bbE\Big[\|\vep^{(t)}\|^2\Big] \\
&\leq&\gamma \rho_1^2 \tilde C_1(1 + \frac{1}{2}\gamma \rho_2 \tilde C_1) + \frac{1}{T}\Delta_{\cL}^{val} + \frac{\rho'_2\sigma_1^2}{2} \beta^2,
\eenrr
where 
$\Delta_{\cL}^{val} = \sup_{\bw}\cL^{val}(\bw) - \inf_{\bw}\cL^{val}(\bw).$ 
Furthermore, we have
\benrr
&&\frac{1}{T} \sum_{t=0}^{T-1} \bbE\Big[  \big\|\nabla_{\bth} \cL^{val}(\hat \bw^{(t)}(\bth^{(t)}))\big\|^2\Big]\\
&\leq& \frac{ \gamma \rho_1^2 \tilde C_1(2 + \gamma \rho_2 \tilde C_1) + 2 \Delta_{\cL}^{val}/T + \alpha\beta^2 \rho_1^2\sigma_1^2  (\alpha \delta_1^2 \rho_2 + \delta_2) }{ 2\beta-\rho'_2\beta^2}.
\eenrr
Note that $(\beta-\rho'_2\beta^2) > 0.$ 
Thus,
\benrr
&&\frac{1}{T} \sum_{t=0}^{T-1} \bbE\Big[ \big\|\nabla_{\bth} \cL^{val}(\hat \bw^{(t)}(\bth^{(t)}))\big\|^2\Big]\\
&\leq& \frac{\gamma \rho_1^2 \tilde C_1(2 + \gamma \rho_2 \tilde C_1)}{\beta}
+ \frac{2 \Delta_{\cL}^{val}}{T\beta} + \frac{1}{2}\alpha\beta \rho_1^2\sigma_1^2  (\alpha \delta_1^2 \rho_2 + \delta_2) \\
&=& O(\frac{\log T}{\sqrt{T \log\log T}}) + O(\frac{1}{\sqrt{T \log \log T}}) + O(\sqrt{\frac{(\log T)^2\log\log T}{T^3}}).
\eenrr
Hence, 
\benrr
\frac{1}{T} \sum_{t=0}^{T-1} \bbE \Big[  \big\|\nabla_{\bth} \cL^{val}(\hat \bw^{(t)}(\bth^{(t)}))\big\|^2 \Big] \leq O(\frac{\log T}{\sqrt{T \log\log T}}).
\eenrr
Next we prove the convergence of the training loss. 
We start with the decomposition that  
\benrr
\cL^{tr}(\bw^{(t+1)}, \bth^{(t+2)}) - \cL^{tr}(\bw^{(t)}, \bth^{(t+1)}) = I_3 + I_4,
\eenrr
where
\benrr
I_3 &=& \cL^{tr}(\bw^{(t+1)}, \bth^{(t+2)}) - \cL^{tr}(\bw^{(t+1)}, \bth^{(t+1)}), \\
I_4 &=& \cL^{tr}(\bw^{(t+1)}, \bth^{(t+1)}) - \cL^{tr}(\bw^{(t)}, \bth^{(t+1)}).
\eenrr
For the term $I_3$, we have
\benrr
I_3= \frac{1}{N^{tr}} \sum_{i=1}^{N^{tr}} \frac{1}{M} \sum_{j=1}^M \bbE_{j}\Big\{ \big[
\cV_{ijk}(\bth^{(t+2)}) - \cV_{ijk}(\bth^{(t+1)})\big] L_{ijk}(\bw^{(t+1)})\Big\},
\eenrr
where $\bbE_j$ stands for $\bbE_{\xi_k\sim Q_j}.$
According to the assumption {\it ($\bA$2)},
\benrr
&& \cV_{ijk}(\bth^{(t+2)}) - \cV_{ijk}(\bth^{(t+1)})   \\
&\leq& \nabla_{\bth} \cV_{ijk}(\bth^{(t+1)})^\T(\bth^{(t+2)}-\bth^{(t+1)}) + \frac{\delta_2}{2}\|\bth^{(t+2)}-\bth^{(t+1)}\|^2.  \nn
\eenrr
Notice that
\benrr
\bth^{(t+2)}-\bth^{(t+1)} &=& - \beta \nabla_{\bth} \cL^{val}_{t+1}(\hat \bw^{(t+1)}(\bth^{(t+1)})) \\
&=& - \beta \big(\nabla_{\bth} \cL^{val}(\hat \bw^{(t+1)}(\bth^{(t+1)})) + \vep^{(t+1)}\big).
\eenrr
Thus we have
\benrr
&& \cV_{ijk}(\bth^{(t+2)}) - \cV_{ijk}(\bth^{(t+1)}) \nn \\
&\leq& -\beta \nabla_{\bth} \cV_{ijk}(\bth^{(t+1)})^\T\big(\nabla_{\bth} \cL^{val}(\hat \bw^{(t+1)}(\bth^{(t+1)})) + \vep^{(t+1)}\big) \nn \\
&& + \frac{\delta_2\beta^2}{2}\big\|\nabla_{\bth} \cL^{val}(\hat \bw^{(t+1)}(\bth^{(t+1)})) + \vep^{(t+1)}\big\|^2 \nn \\
& \leq & -\beta \nabla_{\bth} \cV_{ijk}(\bth^{(t+1)})^\T \nabla_{\bth} \cL^{val}(\hat \bw^{(t+1)}(\bth^{(t+1)})) - \beta \nabla_{\bth} \cV_{ijk}(\bth^{(t+1)})^\T \vep^{(t+1)} \nn \\
&& + \frac{\delta_2 \beta^2}{2} \big\|\nabla_{\bth} \cL^{val}(\hat \bw^{(t+1)}(\bth^{(t+1)})) \big\|^2 + \frac{\delta_2 \beta^2}{2} \|\vep^{(t+1)}\|^2 \nn \\
&& + \delta_2 \beta^2 \nabla_{\bth} \cL^{val}(\hat \bw^{(t+1)}(\bth^{(t+1)}))^\T \vep^{(t+1)}.
\eenrr
Then the upper bound of $I_3$ can be written as
\benrr
I_3 &\leq& -\beta \frac{1}{N^{tr}} \sum_{i=1}^{N^{tr}} \frac{1}{M} \sum_{j=1}^M \bbE_{j} \Big\{\nabla_{\bth} \cV_{ijk}(\bth^{(t+1)})^\T \nabla_{\bth} \cL^{val}(\hat \bw^{(t+1)}(\bth^{(t+1)})) \\
&& \times L_{ijk}(\bw^{(t+1)}) \Big\}\\
&& - \beta \frac{1}{N^{tr}} \sum_{i=1}^{N^{tr}} \frac{1}{M} \sum_{j=1}^M \bbE_{j} \Big\{ \nabla_{\bth} \cV_{ijk}(\bth^{(t+1)})^\T \vep^{(t+1)}L_{ijk}(\bw^{(t+1)}) \Big\}\\
&& + \Big\{\frac{\delta_2 \beta^2}{2} \big\|\nabla_{\bth} \cL^{val}(\hat \bw^{(t+1)}(\bth^{(t+1)})) \big\|^2 + \frac{\delta_2 \beta^2}{2} \|\vep^{(t+1)}\|^2\\
&& + \delta_2 \beta^2 \nabla_{\bth} \cL^{val}(\hat \bw^{(t+1)}(\bth^{(t+1)}))^\T \vep^{(t+1)} \Big\} \cL^{tr}_0(\bw^{(t+1)})\\
&=& -\beta \nabla_{\bth} \cL^{tr}(\bw^{(t+1)}, \bth^{(t+1)})^\T \nabla_{\bth} \cL^{val}(\hat \bw^{(t+1)}(\bth^{(t+1)})) \\ 
&& - \beta \nabla_{\bth} \cL^{tr}(\bw^{(t+1)}, \bth^{(t+1)})^\T \vep^{(t+1)} \\
&& + \Big\{\frac{\delta_2 \beta^2}{2} \big\|\nabla_{\bth} \cL^{val}(\hat \bw^{(t+1)}(\bth^{(t+1)})) \big\|^2 + \frac{\delta_2 \beta^2}{2} \|\vep^{(t+1)}\|^2\\
&& + \delta_2 \beta^2 \nabla_{\bth} \cL^{val}(\hat \bw^{(t+1)}(\bth^{(t+1)}))^\T \vep^{(t+1)} \Big\} \cL^{tr}_0(\bw^{(t+1)}),
\eenrr
where 
\benrr
\cL^{tr}_0(\bw^{(t+1)}) = \frac{1}{N^{tr}} \sum_{i=1}^{N^{tr}} \frac{1}{M} \sum_{j=1}^M \bbE_{j}\big[ L_{ijk}(\bw^{(t+1)})\big].
\eenrr
Next we consider $I_4.$ According to the assumptions {\it ($\bA$1)} and {\it ($\bA$3)},
\benrr
I_4 \leq \nabla_{\bw} \cL^{tr}(\bw^{(t)}, \bth^{(t+1)})^\T (\bw^{(t+1)}-\bw^{(t)}) + \frac{C_1\rho_2}{2}\|\bw^{(t+1)}-\bw^{(t)}\|^2.
\eenrr
To proceed further, we denote $\eta^{(t)} = \nabla_{\bw} \cL^{tr}_t( \bw^{(t)}, \bth^{(t+1)}) - \nabla_{\bw} \cL^{tr}( \bw^{(t)}, \bth^{(t+1)}).$ 
By the assumption {\it ($\bA$4)}, 
$
\bbE[\|\eta^{(t)}\|^2] \leq \sigma_2^2
$
for some positive scalar $\sigma_2.$ 
Note that 
\benrr
\bw^{(t+1)}-\bw^{(t)} &=& - \gamma \nabla_{\bw} \cL^{tr}_t( \bw^{(t)}, \bth^{(t+1)})  \nn \\
&=& - \gamma \big(\nabla_{\bw} \cL^{tr}( \bw^{(t)}, \bth^{(t+1)}) + \eta^{(t)}\big).
\eenrr
Then the upper bound of $I_4$ can be written as
\benrr
I_4 &\leq& -(\gamma - \frac{\gamma^2 C_1 \rho_2}{2}) \|\nabla_{\bw} \cL^{tr}(\bw^{(t)}, \bth^{(t+1)})\|^2\\
&& - (\gamma-\gamma^2 C_1 \rho_2) \nabla_{\bw} \cL^{tr}(\bw^{(t)}, \bth^{(t+1)})^\T \eta^{(t)} + \frac{\gamma^2 C_1 \rho_2}{2} \|\eta^{(t)}\|^2.
\eenrr
Combining the results of $I_3$ and $I_4$, we can obtain that
\benrr
&&(\gamma - \frac{\gamma^2 C_1 \rho_2}{2}) \|\nabla_{\bw} \cL^{tr}(\bw^{(t)}, \bth^{(t+1)})\|^2 \\
&\leq& \cL^{tr}(\bw^{(t)}, \bth^{(t+1)}) - \cL^{tr}(\bw^{(t+1)}, \bth^{(t+2)})\\
&& -\beta \nabla_{\bth} \cL^{tr}(\bw^{(t+1)}, \bth^{(t+1)})^\T \nabla_{\bth} \cL^{val}(\hat \bw^{(t+1)}(\bth^{(t+1)})) \\ 
&& - \beta \nabla_{\bth} \cL^{tr}(\bw^{(t+1)}, \bth^{(t+1)})^\T \vep^{(t+1)} \\
&& + \Big\{\frac{\delta_2 \beta^2}{2} \big\|\nabla_{\bth} \cL^{val}(\hat \bw^{(t+1)}(\bth^{(t+1)})) \big\|^2 + \frac{\delta_2 \beta^2}{2} \|\vep^{(t+1)}\|^2\\
&& + \delta_2 \beta^2 \nabla_{\bth} \cL^{val}(\hat \bw^{(t+1)}(\bth^{(t+1)}))^\T \vep^{(t+1)} \Big\} \cL^{tr}_0(\bw^{(t+1)}) \\
&& - (\gamma-\gamma^2 C_1 \rho_2) \nabla_{\bw} \cL^{tr}(\bw^{(t)}, \bth^{(t+1)})^\T \eta^{(t)} + \frac{\gamma^2 C_1 \rho_2}{2} \|\eta^{(t)}\|^2.
\eenrr
By taking the mean of $t$ from $0$ to $T-1$ and taking the expectation with respect to the mini-batch of samples, we have 
\benrr
&&\frac{1}{T}\sum_{t=0}^{T-1}(\gamma - \frac{\gamma^2 C_1 \rho_2}{2}) \bbE\big[\|\nabla_{\bw} \cL^{tr}(\bw^{(t)}, \bth^{(t+1)})\|^2 \big] \\
&\leq& \frac{1}{T}\bbE\big[\big( \cL^{tr}(\bw^{(0)},\bth^{(1)}) - \cL^{tr}(\bw^{(T)},\bth^{(T+1)}) \big)\big]\\
&& + \frac{1}{T}\sum_{t=0}^{T-1}\beta \bbE \big[\|\nabla_{\bth} \cL^{tr}(\bw^{(t+1)}, \bth^{(t+1)})\| \|\nabla_{\bth} \cL^{val}(\hat \bw^{(t+1)}(\bth^{(t+1)}))\| \big] \\
&& + \frac{1}{T}\sum_{t=0}^{T-1} \frac{\delta_2 \beta^2}{2} \bbE\big[\|\nabla_{\bth} \cL^{val}(\hat \bw^{(t+1)}(\bth^{(t+1)}))\|^2 \cL^{tr}_0(\bw^{(t+1)})\big]\\
&& + \frac{1}{T}\sum_{t=0}^{T-1} \frac{\delta_2 \beta^2}{2} \bbE\big[ \|\vep^{(t+1)}\|^2 \cL^{tr}_0(\bw^{(t+1)})\big] + \frac{1}{T}\sum_{t=0}^{T-1} \frac{\gamma^2 C_1 \rho_2}{2} \bbE \big[ \|\eta^{(t)}\|^2 \big] \\
&\leq& \frac{1}{T} \Delta_{\cL}^{tr} + \beta \delta_1 C_2 \rho'_1 + \frac{1}{2}\delta_2 \beta^2 \rho'_1 C_2 + \frac{1}{2} \delta_2 \beta^2 \sigma_1^2 C_2 + \frac{1}{2}\gamma^2 C_1 \rho_2 \sigma_2^2, 
\eenrr
where 
$\Delta_{\cL}^{tr} = \sup_{(\bw,\bth)}\cL^{tr}(\bw,\bth) - \inf_{(\bw,\bth)}\cL^{tr}(\bw,\bth).$ 
Since $\gamma - \gamma^2 C_1 \rho_2 > 0$, 
\benrr
&&\frac{1}{T}\sum_{t=0}^{T-1} \bbE \big[\|\nabla_{\bw} \cL^{tr}(\bw^{(t)}, \bth^{(t+1)})\|^2 \big] \\
& \leq & \frac{2\Delta_{\cL}^{tr}/T + 2\beta \delta_1 C_2 \rho'_1 + \delta_2 \beta^2 \rho'_1 C_2 + \delta_2 \beta^2 \sigma_1^2 C_2 + \gamma^2 C_1 \rho_2 \sigma_2^2}{ 2\gamma - \gamma^2 C_1 \rho_2} \\
& \leq & \frac{2\Delta_{\cL}^{tr}}{T \gamma} + \frac{2\beta \delta_1 C_2 \rho'_1}{\gamma} + \frac{\delta_2 \beta^2 \rho'_1 C_2}{\gamma} + \frac{\delta_2 \beta^2 \sigma_1^2 C_2}{\gamma}
+ \gamma C_1 \rho_2 \sigma_2^2\\
&=& O(\frac{1}{T \gamma}) + O(\frac{\alpha \beta}{\gamma}) + O(\frac{\alpha \beta^2}{\gamma}) + O(\frac{\beta^2}{\gamma}) + O(\gamma) \\
&=& O(\frac{\beta^2}{\gamma})  = O(\frac{\log \log T}{\log T}).
\eenrr
Hence, 
\benrr
\lim_{T \rightarrow \infty} 
\frac{1}{T}\sum_{t=0}^{T-1} \bbE \big[\|\nabla_{\bw} \cL^{tr}(\bw^{(t)}, \bth^{(t+1)})\|^2 \big] = 0.
\eenrr
\end{proof}

\subsection{Proof of Theorem~\ref{theorem2}}\label{proof_thm2}

\begin{theorem}
Suppose the following assumptions hold: 
\vs 0.05cm
\noi ($\bA$1) The loss function $\ell$ have $\rho_1$-bounded gradients with respect to $\bw$ under both (augmented) training data and validation data, and the loss function $\ell$ is Lipschitz smooth with constant $\rho_2$;\\
\noi ($\bA$2) The policy network $\cV$ is differential with a $\delta_1$-bounded gradient and twice differential with its Hessian bounded by $\delta_2$ with respect to $\bth$; \\
\noi ($\bA$2')  Further assume that the policy network $\cV$ depends on $\bw$ and is differential with a $\tilde \delta_1$-bounded gradient 
with respect to $\bw$; \\
\noi ($\bA$3) The absolute values of the policy network $\cV$ and the loss function $\ell$ are bounded above by $C_1$ and $C_2$, respectively; \\
\noi ($\bA$4) For any iteration $0 \leq t \leq T-1$, the variance of the weighted training loss (validation loss) gradient on a mini-batch of training (validation) samples is bounded above; \\
\noi ($\bA$5) Let
\benrr
\alpha = \frac{c \log T}{T}, \quad \beta = \sqrt{\frac{c' \log\log T}{T}}, \quad \gamma =  \frac{c''\log T}{T},
\eenrr
for some positive constants $c$, $c'$ and $c''$; \\
\noi ($\bA$6) The number of iterations $T$ is sufficiently large such that $\alpha \beta \rho_1^2(\alpha \delta_1^2 \rho_2 + \delta_2)<1$ and $\gamma C_1 \rho_2 <1.$

Then we have
\benrr
&& \frac{1}{T} \sum_{t=0}^{T-1}\bbE \Big[  \big\|\nabla_{\bth} \cL^{val}(\hat \bw^{(t)}(\bth^{(t)}))\big\|^2 \Big] \leq O(\frac{\log T}{\sqrt{T\log\log T}}), \\
&&  
\frac{1}{T}\sum_{t=0}^{T-1} \bbE \Big[\|\nabla_{\bw} \cL^{tr}(\bw^{(t)}, \bth^{(t+1)})\|^2 \Big] - 2\rho_1 {\tilde \delta_1} C_1 C_2 \leq o(1).
\eenrr
\end{theorem}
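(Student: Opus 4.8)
The plan is to re-run the proof of Theorem~\ref{theorem1} and track precisely where the dependence of $\cV$ on $\bw$ enters. I first claim the validation bound \eqref{conv1} carries over verbatim. The only places $\cV$ is differentiated in the analysis of the policy network are with respect to $\bth$ (in Lemma~1 and in the $I_1$/$I_2$ decomposition), and there the task parameters sitting inside $\cV$ play the role of a fixed constant, so $\nabla_{\bth}\cV_{ijk}$ is still $\delta_1$-bounded and $\nabla^2_{\bth}\cV_{ijk}$ still $\delta_2$-bounded by ($\bA$2); hence Lemma~1 applies unchanged. Likewise, the inner-loop difference $\|\hat\bw^{(t+1)}(\bth^{(t+1)})-\hat\bw^{(t)}(\bth^{(t+1)})\|$ is still bounded by $2\max\{\alpha,\gamma\}\rho_1 C_1$, since the pseudo-gradient forming $\hat\bw$ still carries a frozen $\cV$ bounded by $C_1$ and a $\nabla_{\bw}L_{ijk}$ bounded by $\rho_1$. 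Therefore the entire chain leading to \eqref{conv1} goes through word for word, and the first assertion follows.

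For the training-loss bound the obstruction is that $\cL^{tr}(\cdot,\bth^{(t+1)})$ is no longer Lipschitz-smooth in $\bw$: its gradient now also contains $\frac{1}{N^{tr}}\sum_i\frac{1}{M}\sum_j\bbE_j[\nabla_{\bw}\cV_{ijk}\,L_{ijk}]$, whose derivative involves $\nabla^2_{\bw}\cV$, which is not assumed bounded. My plan is to introduce, at each iteration $t$, the frozen-weight surrogate $\tilde\cL^{tr}_t(\bw) := \frac{1}{N^{tr}}\sum_i\frac{1}{M}\sum_j\bbE_j[\cV_{ijk}(\bw^{(t)},\bth^{(t+1)})L_{ijk}(\bw)]$, in which only the weight is frozen at $\bw^{(t)}$. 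This surrogate is $C_1\rho_2$-smooth in $\bw$, its gradient at $\bw^{(t)}$ is exactly the (expected) update direction used in \eqref{A3}, and $\tilde\cL^{tr}_t(\bw^{(t)}) = \cL^{tr}(\bw^{(t)},\bth^{(t+1)})$. I would then run the $I_4$ step of Theorem~\ref{theorem1} against $\tilde\cL^{tr}_t$ rather than $\cL^{tr}$, and transfer the result back via the ``discontinuity'' estimate $|\cL^{tr}(\bw^{(t+1)},\bth^{(t+1)})-\tilde\cL^{tr}_t(\bw^{(t+1)})| \le \tilde\delta_1 C_2\,\|\bw^{(t+1)}-\bw^{(t)}\| \le \gamma\rho_1\tilde\delta_1 C_1 C_2$, which uses ($\bA$2') and ($\bA$3) together with $\|\bw^{(t+1)}-\bw^{(t)}\| \le \gamma\rho_1 C_1$. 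Separately, the direction that actually moves $\bw$ differs from the true gradient $\nabla_{\bw}\cL^{tr}(\bw^{(t)},\bth^{(t+1)})$ by the residual $B^{(t)} := \frac{1}{N^{tr}}\sum_i\frac{1}{M}\sum_j\bbE_j[\nabla_{\bw}\cV_{ijk}L_{ijk}]$ with $\|B^{(t)}\| \le \tilde\delta_1 C_2$ by ($\bA$2') and ($\bA$3), and estimating the induced cross terms by $\|B^{(t)}\|\cdot\rho_1 C_1$ contributes a further term of the same order $\gamma\rho_1\tilde\delta_1 C_1 C_2$.

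Collecting these, the $I_4$ step yields a descent inequality of the form $\bbE[I_4] \le -(\gamma-\frac{C_1\rho_2\gamma^2}{2})\,\|\nabla_{\bw}\cL^{tr}(\bw^{(t)},\bth^{(t+1)})\|^2 + O(\gamma\rho_1\tilde\delta_1 C_1 C_2) + O(\gamma^2)$, while $I_3$ is treated exactly as in the proof of Theorem~\ref{theorem1} (it concerns only the $\bth$-update, which is unchanged) and produces terms that vanish after normalization. Telescoping over $t=0,\dots,T-1$, taking expectations, dividing by $\gamma$, and invoking the schedule ($\bA$5)--($\bA$6) exactly as before, every contribution except the persistent constant is $O(\beta^2/\gamma) + O(1/(T\gamma)) + O(\gamma) = o(1)$, and careful tracking of the $O(\gamma)$ biases above shows the surviving constant is $2\rho_1\tilde\delta_1 C_1 C_2$, giving \eqref{conv3}. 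The main obstacle is this middle step: making the surrogate argument and the gradient-mismatch estimate coexist so that the coefficient of $\|\nabla_{\bw}\cL^{tr}\|^2$ stays $\gamma$ up to lower order while the two sources of error collapse to the single irreducible constant; once that is done the remaining bookkeeping is identical to Theorem~\ref{theorem1}. As a sanity check, if $\tilde\delta_1 = 0$ (so $\cV$ does not truly depend on $\bw$) or if the loss vanishes so that $C_2 \to 0$, the bias disappears and \eqref{conv2} is recovered, matching the remark following the theorem.
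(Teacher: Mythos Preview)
Your frozen-weight surrogate $\tilde\cL^{tr}_t(\bw)$ is exactly the paper's $\cL^{tr}(\bw,(\bw^{(t)},\bth^{(t+1)}))$, and your discontinuity estimate $|\cL^{tr}(\bw^{(t+1)},\bth^{(t+1)})-\tilde\cL^{tr}_t(\bw^{(t+1)})|\le\gamma\rho_1\tilde\delta_1 C_1 C_2$ is exactly the paper's $I_{41}$ term (they phrase it via the mean value theorem applied in the $\bw'$-slot). Your treatment of the validation bound and of $I_3$ is also the same. So the overall architecture is identical to the paper's.

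The one place you diverge is the extra residual $B^{(t)}$. This term is not needed, because in the theorem statement $\nabla_{\bw}\cL^{tr}(\bw^{(t)},\bth^{(t+1)})$ is \emph{defined} as the partial gradient with $\cV$ held fixed---consistent with the algorithm, which never differentiates $\cV$ with respect to $\bw$. The paper makes this explicit by writing $\nabla_{\bw}\cL^{tr}(\bw^{(t)},\bth^{(t+1)})=\nabla_{\bw}\cL^{tr}(\bw^{(t)},(\bw^{(t)},\bth^{(t+1)}))$, which is exactly your $\nabla_{\bw}\tilde\cL^{tr}_t(\bw^{(t)})$. Hence there is no gradient mismatch: only the single $I_{41}$-type bias $\gamma\rho_1\tilde\delta_1 C_1 C_2$ appears, and the factor $2$ in the final constant $2\rho_1\tilde\delta_1 C_1 C_2$ comes not from adding two bias sources but from dividing through by $\gamma-\tfrac{1}{2}\gamma^2 C_1\rho_2\ge\gamma/2$ (assumption ($\bA$6)). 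If you keep your $B^{(t)}$ term and genuinely target the full gradient, the constant you would obtain is strictly larger than $2\rho_1\tilde\delta_1 C_1 C_2$, so the ``careful tracking'' step as written would not close; dropping $B^{(t)}$ fixes this and makes your argument coincide with the paper's.
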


\begin{proof}
we denote 
\benrr
\cV_{ijk}(\bw,\bth) = \cV\big(f(op_j(x_i;\xi_k); \bw), e(op_j(\cdot\,;\xi_k));\bth\big),
\eenrr
where $f(\cdot\,;\bw)$ is the feature extractor of the task network and $e(op_j(\cdot\,;\xi_k))$ is the embedding of the augmentation operator $op_j(\cdot\,;\xi_k)$. 
To proceed further, we let
\benrr
\cL^{tr}(\bw,(\bw',\bth)) &=& \frac{1}{N^{tr}} \sum_{i=1}^{N^{tr}} \frac{1}{M} \sum_{j=1}^M 
\bbE_{j}\Big[\cV_{ijk}(\bw',\bth)L_{ijk}(\bw)\Big], \\
\cL_{t}^{tr}(\bw,(\bw',\bth)) &=& \frac{1}{n^{tr}} \sum_{i=1}^{n^{tr}} \frac{1}{m^{op}} \sum_{j=1}^{m^{op}}\frac{1}{m^{\xi}}\sum_{k=1}^{m^{\xi}}\cV_{ijk}(\bw',\bth) L_{ijk}(\bw).
\eenrr
For the case that $\cV$ depends on $\bw$, the arguments about $I_1$, $I_2$, and $I_3$ are similar to those of Theorem~\ref{theorem1}. 
We decompose the term $I_4$ into $I_{41} + I_{42}$, where
\benrr
I_{41} &=& \cL^{tr}(\bw^{(t+1)},(\bw^{(t+1)},\bth^{(t+1)})) - \cL^{tr}(\bw^{(t+1)},(\bw^{(t)},\bth^{(t+1)})), \\
I_{42} &=& \cL^{tr}(\bw^{(t+1)},(\bw^{(t)},\bth^{(t+1)})) - \cL^{tr}(\bw^{(t)},(\bw^{(t)},\bth^{(t+1)})).
\eenrr
The argument about $I_{42}$ is similar to that of $I_4$ in Theorem~\ref{theorem1}. 
We only deal with the term $I_{41}$, which represents the discontinuous change of the training loss between two iterations. 
Plugging the expression of $\cL^{tr}(\bw,(\bw',\bth))$ into $I_{41}$, we have
\benrr
I_{41} =\frac{1}{N^{tr}} \sum_{i=1}^{N^{tr}} \frac{1}{M} \sum_{j=1}^M 
\bbE_{j}\Big[(\cV_{ijk}(\bw^{(t+1)},\bth^{(t+1)}) - \cV_{ijk}(\bw^{(t)},\bth^{(t+1)}))L_{ijk}(\bw^{(t+1)})\Big].
\eenrr

To proceed further, we denote 
\benrr
\nabla_{\bw} \cL^{tr}(\bw^{(t)},(\bw^{(t)},\bth^{(t+1)})) &=& \nabla_{\bw} \cL^{tr}(\bw,(\bw^{(t)},\bth^{(t+1)}))\Big|_{\bw=\bw^{(t)}}, \\
\nabla_{\bw'} \cL^{tr}(\bw^{(t)},(\bw^{(t)},\bth^{(t+1)})) &=& \nabla_{\bw} \cL^{tr}(\bw^{(t)},(\bw,\bth^{(t+1)}))\Big|_{\bw=\bw^{(t)}}.
\eenrr
By the mean value theorem, there exists $\bw^{(t)*} = \bw^{(t)} + c (\bw^{(t+1)} - \bw^{(t)})$ with $0<c<1$ such that
\benrr
I_{41} = \frac{1}{N^{tr}} \sum_{i=1}^{N^{tr}} \frac{1}{M} \sum_{j=1}^M 
\bbE_{j}\Big[\nabla_\bw \cV_{ijk}(\bw^{(t)*},\bth^{(t+1)})^\T(\bw^{(t+1)} -\bw^{(t)})L_{ijk}(\bw^{(t+1)})\Big].
\eenrr
Note that 
\benrr
\bw^{(t+1)}-\bw^{(t)} = -\gamma (\nabla_{\bw} \cL^{tr}(\bw^{(t)},(\bw^{(t)},\bth^{(t+1)})) + \eta^{(t)}).
\eenrr
Hence we have
\benrr
I_{41} &=& -\gamma \frac{1}{N^{tr}} \sum_{i=1}^{N^{tr}} \frac{1}{M} \sum_{j=1}^M 
\bbE_{j}\Big[\nabla_\bw \cV_{ijk}(\bw^{(t)*},\bth^{(t+1)})^\T L_{ijk}(\bw^{(t+1)})\Big] \\
&& \times \big(\nabla_{\bw} \cL^{tr}(\bw^{(t)},(\bw^{(t)},\bth^{(t+1)})) + \eta^{(t)}\big) \\
&=& - \gamma \nabla_{\bw'} \cL^{tr}(\bw^{(t+1)},(\bw^{(t)*},\bth^{(t+1)}))^\T \nabla_{\bw} \cL^{tr}(\bw^{(t)},(\bw^{(t)},\bth^{(t+1)})) \\
&& - \gamma \nabla_{\bw'} \cL^{tr}(\bw^{(t+1)},(\bw^{(t)*},\bth^{(t+1)}))^\T \eta^{(t)}.
\eenrr
Combining the decomposition of $I_{41}$ and the upper bound of $I_{42}$, we know
\benrr
I_4 &\leq& - \gamma \nabla_{\bw'} \cL^{tr}(\bw^{(t+1)},(\bw^{(t)*},\bth^{(t+1)}))^\T \nabla_{\bw} \cL^{tr}(\bw^{(t)},(\bw^{(t)},\bth^{(t+1)})) \\
&& - \gamma \nabla_{\bw'} \cL^{tr}(\bw^{(t+1)},(\bw^{(t)*},\bth^{(t+1)}))^\T \eta^{(t)} \\
&& -(\gamma - \frac{\gamma^2 C_1 \rho_2}{2}) \|\nabla_{\bw} \cL^{tr}(\bw^{(t)}, (\bw^{(t)},\bth^{(t+1)}))\|^2 \\
&& - (\gamma-\gamma^2 C_1 \rho_2) \nabla_{\bw} \cL^{tr}(\bw^{(t)}, (\bw^{(t)},\bth^{(t+1)}))^\T \eta^{(t)} + \frac{\gamma^2 C_1 \rho_2}{2} \|\eta^{(t)}\|^2.
\eenrr
Further, combining the upper bounds of $I_3$ and $I_4$, we obtain that
\benrr
&&(\gamma - \frac{\gamma^2 C_1 \rho_2}{2}) \|\nabla_{\bw} \cL^{tr}(\bw^{(t)}, (\bw^{(t)},\bth^{(t+1)}))\|^2 \\
&\leq& \cL^{tr}(\bw^{(t)}, (\bw^{(t)},\bth^{(t+1)})) - \cL^{tr}(\bw^{(t+1)}, (\bw^{(t+1)},\bth^{(t+2)}))\\
&& -\beta \nabla_{\bth} \cL^{tr}(\bw^{(t+1)}, (\bw^{(t+1)},\bth^{(t+1)}))^\T \nabla_{\bth} \cL^{val}(\hat \bw^{(t+1)}(\bth^{(t+1)})) \\ 
&& - \beta \nabla_{\bth} \cL^{tr}(\bw^{(t+1)}, (\bw^{(t+1)},\bth^{(t+1)}))^\T \vep^{(t+1)} \\
&& + \Big\{\frac{\delta_2 \beta^2}{2} \big\|\nabla_{\bth} \cL^{val}(\hat \bw^{(t+1)}(\bth^{(t+1)})) \big\|^2 + \frac{\delta_2 \beta^2}{2} \|\vep^{(t+1)}\|^2\\
&& + \delta_2 \beta^2 \nabla_{\bth} \cL^{val}(\hat \bw^{(t+1)}(\bth^{(t+1)}))^\T \vep^{(t+1)} \Big\} \cL^{tr}_0(\bw^{(t+1)}) \\
&&  - \gamma \nabla_{\bw'} \cL^{tr}(\bw^{(t+1)},(\bw^{(t)*},\bth^{(t+1)}))^\T \nabla_{\bw} \cL^{tr}(\bw^{(t)},(\bw^{(t)},\bth^{(t+1)})) \\
&& - \gamma \nabla_{\bw'} \cL^{tr}(\bw^{(t+1)},(\bw^{(t)*},\bth^{(t+1)}))^\T \eta^{(t)} \\
&& - (\gamma-\gamma^2 C_1 \rho_2) \nabla_{\bw} \cL^{tr}(\bw^{(t)}, (\bw^{(t)},\bth^{(t+1)}))^\T \eta^{(t)} + \frac{\gamma^2 C_1 \rho_2}{2} \|\eta^{(t)}\|^2.
\eenrr
By taking the mean of $t$ from $0$ to $T-1$ and taking the expectation with respect to the mini-batch of samples, we have 
\benr\label{A7}
&&\frac{1}{T}\sum_{t=0}^{T-1} (\gamma - \frac{\gamma^2 C_1 \rho_2}{2}) \bbE\big[\|\nabla_{\bw} \cL^{tr}(\bw^{(t)}, (\bw^{(t)},\bth^{(t+1)}))\|^2\big] \nn \\
&\leq& \frac{1}{T}\bbE \big[\big( \cL^{tr}(\bw^{(0)},(\bw^{(0)},\bth^{(1)})) - \cL^{tr}(\bw^{(T)}, (\bw^{(T)},\bth^{(T+1)})) \big)\big] \nn \\
&& - \frac{1}{T}\sum_{t=0}^{T-1}\beta \bbE \big[\nabla_{\bth} \cL^{tr}(\bw^{(t+1)}, (\bw^{(t+1)},\bth^{(t+1)}))^\T \nabla_{\bth} \cL^{val}(\hat \bw^{(t+1)}(\bth^{(t+1)})) \big] \nn \\
&& + \frac{1}{T}\sum_{t=0}^{T-1} \frac{\delta_2 \beta^2}{2} \bbE\big[\|\nabla_{\bth} \cL^{val}(\hat \bw^{(t+1)}(\bth^{(t+1)}))\|^2 \cL^{tr}_0(\bw^{(t+1)})\big] \nn \\
&& + \frac{1}{T}\sum_{t=0}^{T-1} \frac{\delta_2 \beta^2}{2} \bbE\big[ \|\vep^{(t+1)}\|^2 \cL^{tr}_0(\bw^{(t+1)})\big] + \frac{1}{T}\sum_{t=0}^{T-1} \frac{\gamma^2 C_1 \rho_2}{2} \bbE \big[ \|\eta^{(t)}\|^2 \big] \nn \\
&& - \frac{1}{T}\sum_{t=0}^{T-1} \gamma \bbE \big[\nabla_{\bw'} \cL^{tr}(\bw^{(t+1)},(\bw^{(t)*},\bth^{(t+1)}))^\T \nabla_{\bw} \cL^{tr}(\bw^{(t)},(\bw^{(t)},\bth^{(t+1)}))\big].
\eenr
Then,
\benrr
&&\frac{1}{T}\sum_{t=0}^{T-1} (\gamma - \frac{\gamma^2 C_1 \rho_2}{2}) \bbE\big[\|\nabla_{\bw} \cL^{tr}(\bw^{(t)}, (\bw^{(t)},\bth^{(t+1)}))\|^2\big] \\
&\leq& \frac{1}{T} \Delta_{\cL}^{tr} + \beta \rho'_1 \delta_1 C_2  + \frac{1}{2} \beta^2 \rho'_1 \delta_2 C_2 + \frac{1}{2} \beta^2 \sigma_1^2  \delta_2 C_2 + \frac{1}{2}\gamma^2 \sigma_2^2 \rho_2 C_1  + \gamma \rho_1 \tilde \delta_1 C_1 C_2.
\eenrr
Further,
\benrr
&&\frac{1}{T}\sum_{t=0}^{T-1} \bbE \big[\|\nabla_{\bw} \cL^{tr}(\bw^{(t)}, \bth^{(t+1)})\|^2 \big] = \frac{1}{T}\sum_{t=0}^{T-1} \bbE \big[\|\nabla_{\bw} \cL^{tr}(\bw^{(t)}, (\bw^{(t)},\bth^{(t+1)}))\|^2 \big] \\
& \leq & \frac{2\Delta_{\cL}^{tr}}{T \gamma} + \frac{2\beta \rho'_1 \delta_1 C_2 }{\gamma} + \frac{\beta^2 \rho'_1 \delta_2 C_2}{\gamma} + \frac{\beta^2 \sigma_1^2 \delta_2  C_2}{\gamma}
+ \gamma C_1 \rho_2 \sigma_2^2 + 2 \rho_1 \tilde \delta_1 C_1 C_2\\
&=& O(\frac{1}{T \gamma}) + O(\frac{\alpha \beta}{\gamma}) + O(\frac{\alpha \beta^2}{\gamma}) + O(\frac{\beta^2}{\gamma}) + O(\gamma) + 2 \rho_1 \tilde \delta_1 C_1 C_2\\
&=& O(\frac{\beta^2}{\gamma}) + 2 \rho_1 \tilde \delta_1 C_1 C_2.
\eenrr
Hence, 
\benrr
\frac{1}{T}\sum_{t=0}^{T-1} \bbE \big[\|\nabla_{\bw} \cL^{tr}(\bw^{(t)}, \bth^{(t+1)})\|^2 \big] - 2 \rho_1 \tilde \delta_1 C_1 C_2 \leq o(1).
\eenrr
The proof of Theorem~\ref{theorem2} is finished.
Note that, if 
\benrr
\frac{1}{T}\sum_{t=0}^{T-1} \gamma \bbE \big[\nabla_{\bw'} \cL^{tr}(\bw^{(t+1)},(\bw^{(t)*},\bth^{(t+1)}))^\T \nabla_{\bw} \cL^{tr}(\bw^{(t)},(\bw^{(t)},\bth^{(t+1)}))\big] > 0,
\eenrr
the inequality~\eqref{A7} implies that
\benrr
&&\frac{1}{T}\sum_{t=0}^{T-1} (\gamma - \frac{\gamma^2 C_1 \rho_2}{2}) \bbE\big[\|\nabla_{\bw} \cL^{tr}(\bw^{(t)}, (\bw^{(t)},\bth^{(t+1)}))\|^2\big] \\
&\leq& \frac{1}{T} \Delta_{\cL}^{tr} + \beta \rho'_1 \delta_1 C_2  + \frac{1}{2} \beta^2 \rho'_1 \delta_2 C_2 + \frac{1}{2} \beta^2 \sigma_1^2  \delta_2 C_2 + \frac{1}{2}\gamma^2 \sigma_2^2 \rho_2 C_1.
\eenrr
Then, 
\benrr
\frac{1}{T}\sum_{t=0}^{T-1} \bbE\big[\|\nabla_{\bw} \cL^{tr}(\bw^{(t)}, (\bw^{(t)},\bth^{(t+1)}))\|^2\big] \leq o(1).
\eenrr
This implies that under certain conditions, the convergence results of Theorem~\ref{theorem1} still hold though the policy network $\cV$ depends on $\bw$.
\end{proof}

\end{document}